\theoremstyle{plain}
\newtheorem{theorem}{Theorem}[section]
\newtheorem{lemma}[theorem]{Lemma}
\theoremstyle{definition}
\newtheorem{assumption}[theorem]{Assumption}
\theoremstyle{remark}
\newcommand{\bX}{{\boldsymbol X}}
\newcommand{\bY}{{\boldsymbol Y}}
\newcommand{\bx}{{\boldsymbol x}} 
\newcommand{\bz}{{\boldsymbol z}}
\newcommand{\bZ}{{\boldsymbol Z}}
\newcommand{\bb}{{\boldsymbol b}}
\newcommand{\be}{{\boldsymbol e}}
\newcommand{\bbf}{{\boldsymbol f}}
\newcommand{\bg}{{\boldsymbol g}}
\newcommand{\bm}{{\boldsymbol m}}
\newcommand{\bw}{{\boldsymbol w}}
\newcommand{\bv}{{\boldsymbol v}}
\newcommand{\bt}{{\boldsymbol t}}
\newcommand{\mK}{\mathcal{K}}
\newcommand{\bV}{{\boldsymbol V}}
\newcommand{\bu}{{\boldsymbol u}}
\newcommand{\bxi}{{\boldsymbol \xi}}
\newcommand{\bbeta}{{\boldsymbol \beta}}
\newcommand{\btheta}{{\boldsymbol \theta}}
\newcommand{\bTheta}{{\boldsymbol \Theta}}
\newcommand{\bvartheta}{{\boldsymbol \vartheta}}
\newcommand{\bmu}{{\boldsymbol \mu}}
\newcommand{\bYmis}{{\boldsymbol Y}_{{\rm mis}}}
\title{Uncertainty Quantification for Physics-Informed Neural Networks with Extended Fiducial Inference}
\author{%
  Frank Shih \\
  Department of Epidemiology and Biostatistics \\
  Memorial Sloan Kettering Cancer Center \\
  New York, NY 10017 \\
  \texttt{shihf@mskcc.org} \\
  \And
  Zhenghao Jiang \\
  Department of Statistics \\ 
  Purdue University \\
  West Lafayette, IN 47907 \\
  \texttt{jiang976@purdue.edu} \\
  \AND
  Faming Liang \\
  Department of Statistics \\
  Purdue University \\
  West Lafayette, IN 47907 \\
  \texttt{fmliang@purdue.edu} \\
}
\begin{document}

\maketitle

\begin{abstract}
Uncertainty quantification (UQ) in scientific machine learning is increasingly 
critical as neural networks are widely adopted to tackle complex 
problems across diverse scientific disciplines. 
For physics-informed neural networks (PINNs), a prominent model in scientific machine learning, uncertainty is typically quantified using 
Bayesian or dropout methods. However, both approaches suffer from a fundamental limitation: the prior distribution or dropout rate required to construct 
honest confidence sets cannot be determined without additional information.
In this paper, we propose a novel method within the framework of extended fiducial inference (EFI) to provide rigorous uncertainty quantification for PINNs. The proposed method leverages a narrow-neck hyper-network to learn the parameters of the PINN and quantify their uncertainty based on imputed random errors in the observations. This approach overcomes the limitations of Bayesian and dropout methods, enabling the construction of honest confidence sets based solely on observed data.
This advancement represents a significant breakthrough for PINNs, greatly enhancing their reliability, interpretability, and applicability to real-world scientific and engineering challenges. Moreover, it establishes 
a new theoretical framework for EFI, extending its application to large-scale models, eliminating the need for sparse hyper-networks, and significantly improving the automaticity and robustness of statistical inference.

 \vspace{2mm}
\textbf{Keywords}: Adaptive Stochastic Gradient MCMC,  Deep Learning, 
Black-Scholes Model, 
Partial Differential Equation, Porous-FKPP Model
\end{abstract}

\section{Introduction} 
 
Physics-informed neural networks (PINNs) \citep{Raissi2019PhysicsinformedNN} are a class of scientific machine learning  models that integrate physical principles directly into the training process of the DNN models. They achieve this by incorporating terms derived from ordinary differential equations (ODEs) or partial differential equations (PDEs) into the DNN’s loss function. 
PINNs take spatial-temporal coordinates as input and produce functions that approximate the solutions to the differential equations. Because they embed physics-based constraints, PINNs can typically address problems 
that are described by few data while ensuring adherence to the given physical laws \citep{Zou2025UncertaintyQF,Cuomo2022ScientificML}, paving the ways for the use of neural networks in out-of-distribution (OOD) prediction (see e.g., \citet{Yao2024ImprovingOG}). 

From a Bayesian viewpoint, PINNs can be seen as using ``informative priors'' drawn from mathematical models, thus requiring less data for system identification \citep{Zhang2019QuantifyingTU}. However, this naturally raises a challenge in uncertainty quantification: 
How should one balance the prior information and the observed data to ensure faithful inference about the underlying physical system? As emphasized in \citet{Zou2025UncertaintyQF}, uncertainty quantification is becoming increasingly critical as neural networks are widely employed to tackle complex scientific problems, particularly in high-stake application scenarios.
However, addressing this issue within the Bayesian framework is difficult, as the prior information and data information are  essentially 
exchangeable under Bayesian formulations.
The use of ``informative priors'' can conflict with the spirit of ``posterior consistency'' (see e.g., \citet{Ghosal2000ConvergenceRO}), a foundational principle in Bayesian inference. 
 This conflict creates a dilemma: {\it using an informative prior risks overshadowing the data, while using a weak prior may lead to violations of the underlying physical law.} In practice, 
 this makes it difficult, if not impossible, to properly calibrate 
 the resulting credible intervals without additional information.


 In addition to Bayesian methods \citep{BPINN}, dropout \citep{srivastava2014dropout} has also been employed to quantify the uncertainty of PINNs, as demonstrated by \citet{Zhang2019QuantifyingTU}. Dropout is primarily used as a regularization technique to reduce overfitting during DNN training. Gal and Ghahramani \citep{Gal2016DropoutAA} showed that dropout training in DNNs can be interpreted as approximate Bayesian inference in deep Gaussian processes, allowing model uncertainty to be estimated from dropout-trained DNN models.
However, this approach shares a limitation similar to Bayesian methods: 
{\it The dropout rate, which directly influences the magnitude of the estimated model uncertainty, cannot be determined without additional information, making it challenging to ensure consistent and reliable uncertainty quantification.}

This paper introduces an EFI \citep{LiangKS2024EFI} 
approach to quantify the uncertainty in PINNs. 
EFI provides a rigorous theoretical framework that addresses the limitations of Bayesian and dropout methods by formulating the problem as a structural equation-solving task. In this framework, each observation is expressed 
 as a data-generating equation, with the random errors (contained in observations) and DNN parameters 
treated as unknowns (see Section \ref{sect:EFI}).
EFI jointly imputes the random errors and estimates the inverse function that maps the observations and imputed random errors to DNN parameters. 
Consequently, the imputed random errors are propagated to the DNN parameters through the estimated inverse function, allowing the model uncertainty to be accurately quantified without the need for additional information.
Our contribution in this paper is two-fold: 

\begin{itemize} 
\vspace{-0.1in}
\item {\bf A new theoretical framework for EFI:} We develop 
 a new theoretical framework for EFI that significantly enhances the 
 automaticity of statistical inference.  Originally, EFI was 
 developed in \citet{LiangKS2024EFI} under a Bayesian framework, 
where a sparse prior is imposeStochasticd on the hyper-network (referred to as the $\bw$-network in Section \ref{sect:EFI}) to ensure consistent 
 estimation of the inverse function. However, due to the limitations
  of existing sparse deep learning theory \citep{SunSLiang2021},
  this Bayesian approach could only be applied to models 
with  dimensions fixed 
or increasing at a very low rate with the sample size. 
In this paper, we propose learning the inverse function using a narrow-neck 
$\bw$-network, which ensures consistent estimation of the inverse function 
without relying on the use of sparse priors.  
Moreover, it enables EFI to work for large-scale models, such 
as PINNs, where the number of model parameters can far exceed the sample size. 
By avoiding the need for Bayesian sparse priors, our framework allows 
EFI to fulfil the original 
goal of fiducial inference: {\it Inferring the uncertainty of model parameters 
based solely on observations}. 

\item {\bf Open-source software for uncertainty quantification in PINNs:} 
 We provide an open source software package for uncertainty quantification 
 in PINNs, which can be easily extended to conventional DNNs and other 
 high-dimensional statistical models.
 
\end{itemize} 

\paragraph{Related Work}
The proposed method belongs to the class of imprecise probabilistic techniques \citep{Augustin2014IntroductionTI}. 
However, compared to other methods in the class, such as credal Bayesian deep learning \citep{Caprio2023CredalBD}, imprecise Bayesian neural networks \citep{Caprio2023ImpreciseBN}, and other Bayesian neural network-based methods, the key advantage of EFI is that it avoids the need for prior specification while ensuring accurate calibration of predictions.

\textcolor{black}{
Another related line of work concerns uncertainty quantification for machine learning models. Beyond the Bayesian and dropout methods noted above, this line includes conformal prediction \citep{Vovk2005AlgorithmicLI}, deep ensembles \citep{Lakshminarayanan2016SimpleAS}, and stochastic deep learning \citep{SunLiang2022kernel, LiangSLiang2022}, among others.
  These methods primarily target predictive uncertainty and are often ineffective or inapplicable for quantifying uncertainty in model parameters. In contrast, EFI addresses both predictive uncertainty and parameter uncertainty, and further provides theoretical guarantees for the validity of the resulting prediction and confidence intervals. The ability to accurately quantify uncertainty in deep neural network parameters is a distinctive advantage of EFI.}


\section{A Brief Review of EFI} \label{sect:EFI} 

While fiducial inference was widely considered as a big blunder by R.A. Fisher, the goal he initially set ---inferring the uncertainty of model parameters based solely on observations --- has been continually pursued by many statisticians, see e.g.  structural inference \citep{Fraser1966StructuralPA,Fraser1968Book}, generalized fiducial inference
\citep{hannig2009gfi,hannig2016gfi,Murph2022GeneralizedFI}, and inferential models \citep{Martin2013InferentialMA, Martin2015Book, Martin2023FiducialIV}.
To this end, \citet{LiangKS2024EFI} 
developed the EFI method based on the fundamental concept of structural inference. 

Consider a regression model: $Y=f(\bX,Z,\btheta)$, 
where $Y\in \mathbb{R}$ and $\bX\in \mathbb{R}^{d}$ represent the response and explanatory variables, respectively; $\btheta\in \mathbb{R}^p$ represents the vector of  parameters; 
and $Z\in \mathbb{R}$ represents a scaled random error following  
 a known distribution $\pi_0(\cdot)$.  
Suppose that a random sample of size $n$,  denoted by $\{(y_1,\bx_1), (y_2,\bx_2),\ldots,(y_n,\bx_n)\}$, has been collected from the model. In structural inference, the observations can be expressed in data-generating equations as follows:  
\begin{equation} \label{dataGeneqg}
y_i=f(\bx_i,z_i,\btheta), \quad i=1,2,\ldots,n.
\end{equation}
This system of equations consists of $n+p$ unknowns, namely, $\{\btheta, z_1, z_2, \ldots, z_n
\}$, while there are only $n$ equations. Therefore, the values of $\btheta$ cannot be uniquely determined by the data-generating equations, and this lack of uniqueness of unknowns introduces uncertainty in $\btheta$.  

Let $\bZ_n=\{z_1,z_2,\ldots,z_n\}$ denote the unobservable random errors contained in the data,
which are also called latent variables in EFI.  
Let $G(\cdot)$ denote an inverse function/mapping for   $\btheta$, i.e., 
\begin{equation} \label{Inveq}
\btheta=G(\bY_n,\bX_n,\bZ_n).
\end{equation}
It is worth noting that the inverse function is generally non-unique. For example, it can be constructed by solving any $p$ equations in (\ref{dataGeneqg}) for $\btheta$. 
As noted by \citet{LiangKS2024EFI}, this non-uniqueness of inverse function 
mirrors the flexibility of frequentist methods, where different 
estimators of $\btheta$ can be constructed to achieve desired properties 
such as efficiency, unbiasedness, and robustness. 

Since the inverse function $G(\cdot)$ is generally unknown, \citet{LiangKS2024EFI} proposed to approximate it using a sparse DNN, see Figure \ref{EFInetwork} in the Appendix for illustration.
They also introduced an adaptive stochastic gradient Langevin dynamics (SGLD) algorithm, which facilitates the  simultaneous training of the sparse DNN 
 and simulation of the latent variables $\bZ_n$.  
See Algorithm \ref{EFIalgorithm} for the pseudo-code. Refer to 
Section \ref{sect:sup:EFI} of the Appendix for the mathematical formulation of the method. 
Briefly, they let $\bw_n$ denote the weights of $\bw$-network and define an energy function  $U_n(\bY_n,\bX_n,\bZ_n,\bw_n)$. subsequently, they define 
a posterior distribution $\pi_{\epsilon}(\bw_n|\bX_n,\bY_n,\bZ_n)$ for $\bw_n$ 
and a predictive distribution $\pi_{\epsilon}(\bZ_n|\bX_n,\bY_n,\bw_n)$ for 
$\bZ_n$, where $\epsilon$ can be read as a temperature. They treat $\bZ_n$ 
as missing data and learn $\bw_n$ through solving the following equation: 
 \begin{equation} \label{identityeq}
 \begin{split}
 \nabla_{\bw_n} \log \pi_{\epsilon}(\bw_n|\bX_n,\bY_n) &=\int \Big[\nabla_{\bw_n} \log \pi_{\epsilon}(\bw_n|\bX_n,\bY_n,\bZ_n) 
   \pi_{\epsilon}(\bZ_n|\bX_n,\bY_n,\bw_n)\Big] d\bZ_n=0,     
 \end{split}
 \end{equation} 
using Algorithm \ref{EFIalgorithm}.

 \begin{algorithm}[!ht]
 \caption{Adaptive SGLD for EFI computation}
 \label{EFIalgorithm}
 \SetAlgoLined 
 {\bf (i) (Initialization)} Initialize
 $\bw_n^{(0)}$, $\bZ_n^{(0)}$, $M$ (the number of fiducial samples to collect), and $\mK$ (burn-in iterations).

\For{k=1,2,\ldots,\mbox{$\mK+M$}}{

 {\bf (ii) (Latent variable imputation)} Given $\bw_n^{(k)}$, simulate $\bZ_n^{(k+1)}$ using the SGLD algorithm: 
\begin{equation*}
    \bZ_n^{(k+1)} = \bZ_n^{(k)} + \upsilon_{k+1} \nabla_{\bZ_n} \log \pi_{\epsilon}(\bZ_n^{(k)}|\bX_n,\bY_n,\bw_n^{(k)}) +\sqrt{2  \upsilon_{k+1}} \be^{(k+1)}    
\end{equation*}
where $\upsilon_{k+1}$ is the learning rate, and  
$\be^{(k+1)} \sim N(0,I_{d_{\bz}})$.

 {\bf (iii) (Parameter updating)} Draw a minibatch $\{(y_1,\bx_1,z_1^{(k)}),\ldots,(y_m,\bx_m,z_m^{(k)})\}$ 
 and update the network weights by the SGD algorithm: 
\begin{equation} 
\label{sgd_update}
    \bw_n^{(k+1)}=\bw_n^{(k)} 
    +\gamma_{k+1} \Biggl[ \frac{n}{m} \sum_{i=1}^m \nabla_{\bw_n} \log \pi_{\epsilon}(y_i|\bx_i,z_i^{(k)},\bw_n^{(k)}) 
    + \nabla_{\bw_n} \log \pi(\bw_n^{(k)}) \Biggr], 
\end{equation}
where $\gamma_{k+1}$ is the step size, and $\log \pi_{\epsilon}(y_i|\bx_i,z_i^{(k)},\bw_n^{(k)})$ can be appropriately defined according to (\ref{app:energyfunction11}).  
  
{\bf (iv) (Fiducial sample collection)} If $k+1 > \mK$, calculate 
 $\hat{\btheta}_i^{(k+1)}=\hat{g}(y_i,\bx_i,z_i^{(k+1)},\bw_n^{(k+1)})$ 
 for each $i\in \{1,2,\ldots,n\}$ and average them to get a fiducial $\bar{\btheta}$-sample as calculated in (\ref{app:thetabareq}). 
}

{\bf (v) (Statistical Inference)} Conducting statistical inference for the model based on the collected fiducial samples.
\end{algorithm}


Under mild conditions for the adaptive SGLD algorithm, it can be shown that  
\begin{equation} \label{wconvergence}
\|\bw_n^{(k)} -\bw_n^* \|\stackrel{p}{\to} 0, \quad \mbox{as $k\to \infty$},
\end{equation}
 where $\bw_n^*$ denotes a solution to equation (\ref{identityeq}) and $\stackrel{p}{\to}$ denotes convergence in probability, and  that 
\begin{equation} \label{Zconvergence}
\bZ_n^{(k)} \stackrel{d}{\rightsquigarrow} \pi_{\epsilon}(\bZ_n|\bX_n,\bY_n,\bw_n^*), 
\quad \mbox{as $k \to \infty$},
\end{equation}
in 2-Wasserstein distance, where  $\stackrel{d}{\rightsquigarrow}$ denotes weak convergence. 
To study the limit of (\ref{Zconvergence}) as $\epsilon$ decays to 0, i.e., 
$p_n^*(\bz|\bY_n,\bX_n,\bw_n^*)=  
\lim_{\epsilon \downarrow 0} \pi_{\epsilon}(\bZ_n|\bX_n,\bY_n,\bw_n^*)$,
where 
$p_n^*(\bz|\bY_n,\bX_n,\bw_n^*)$ 
is referred to as the extended fiducial 
density (EFD) of $\bZ_n$, 
\citet{LiangKS2024EFI} impose specific conditions on the structure of the $\bw$-network, including that the $\bw$-network is sparse 
and that the output layer width (i.e., the dimension of $\btheta$) is either fixed or grows very slowly with the sample size $n$. 
Under these assumptions, they prove the consistency of $\bw_n^*$ based on the sparse deep learning theory developed in \citet{SunSLiang2021}. This consistency  further implies that 
 \begin{equation} \label{mappingest}
 G^*(\bY_n,\bX_n,\bZ_n)= \frac{1}{n} \sum_{i=1}^n \hat{g}(y_i,\bx_i,z_i,\bw_n^*),
 \end{equation}
 serves as a consistent estimator for the inverse 
function/mapping $\btheta=G(\bY_n,\bX_n,\bZ_n)$, where $\hat{g}(\cdot)$ denotes 
the learned neural network function. 
Refer to Appendix \ref{sect:EFDlimit} for the expression of $p_n^*(\bz|\bY_n,\bX_n,\bw_n^*)$. 



Let $\mathcal{Z}_n=\{\bz \in \mathbb{R}^n: U_n(\bY_n,\bX_n,\bZ_n, \bw_n^*)=0\}$  
denote the zero-energy set.
Under some regularity conditions on the energy function, \citet{LiangKS2024EFI} proved that $\mathcal{Z}_n$ is invariant to the choice of $G(\cdot)$. 
Let $\Theta:=\{\btheta \in \mathbb{R}^p: \btheta=G^*(\bY_n,\bX_n,\bz), \bz\in \mathcal{Z}_n\}$ denote the parameter space of the target model, which represents the set of all possible values of $\btheta$ that $G^*(\cdot)$ takes when $\bz$ runs over $\mathcal{Z}_n$.  
Then, for any function $b(\btheta)$ of 
interest, its EFD $\mu_n^*(\cdot|\bY_n,\bX_n)$ associated with 
 $G^*(\cdot)$ is given by 
\begin{equation} \label{EFDeq}
\mu_n^*(B|\bY_n,\bX_n) =\int_{\mathcal{Z}_n(B)} d P_n^*(\bz|\bY_n,\bX_n,\bw_n^*), 
\end{equation}
for any measurable set $B \subset \Theta$,
where $\mathcal{Z}_n(B)=\{\bz\in \mathcal{Z}_n: b(G^*(\bY_n,\bX_n,\bz)) \in B\}$, and $P_n^*(\bz|\bX_n,\bY_n$, $\bw_n^*)$ 
denote the cumulative distribution 
function (CDF) corresponding to $p_n^*(\bz|\bX_n,\bY_n,\bw_n^*)$.
The EFD  provides an uncertainty measure for $b(\btheta)$. 
Practically, 
it can be constructed based on the samples
$\{b(\bar{\btheta}_1), 
b(\bar{\btheta}_2), \ldots, b(\bar{\btheta}_M)\}$, where 
$\{\bar{\btheta}_1, \bar{\btheta}_2, \ldots, \bar{\btheta}_M\}$ denotes  
the fiducial $\bar{\btheta}$-samples collected at step (iv) of Algorithm \ref{EFIalgorithm}. 
\textcolor{black}{As a practical application, \citet{Kim2025ExtendedFI} applied EFI to quantify the uncertainty of individual treatment effects 
in causal inference.} 

Finally, we note that for a neural network model, its parameters are only unique up to certain loss-invariant transformations, such as reordering hidden neurons within the same hidden layer or simultaneously altering the sign or scale of certain connection weights \citep{SunSLiang2021}. 
Therefore, for the $\bw$-network, the consistency of $\bw_n^*$ refers to its consistency with respect to one of the equivalent 
solutions to (\ref{identityeq}), while mathematically $\bw_n^*$
can still be treated as unique.

\section{EFI for Uncertainty Quantification in PINNs}

\subsection{EFI Formulation for PDEs}

Consider a multidimensional dynamic process, $u(\bx)$, defined on a 
domain $\Omega \subset \mathbb{R}^d$ through a PDE:  
\[
\begin{split}
\mathcal{F}(u(\bx); \bbeta) & =f(\bx),  \quad \bx\in \Omega, \\
\mathcal{B}(u(\bx)) & =b(\bx), \quad  \bx\in \partial \Omega, 
\end{split} 
\]
where $\bx=(x_1,x_2,\ldots,x_{d-1},t)^T \in \mathbb{R}^d$ indicates 
the space-time coordinate vector, $u(\cdot)$ represents the unknown 
solution, $\bbeta$ are the parameters related to the physics, 
and $f$ and $b$ are called the physics term and initial/boundary term, 
respectively. The observations are given in the forms 
$\{\bx_i^u, u_i\}_{i=1}^{n_u}$, $\{\bx_i^f, f_i\}_{i=1}^{n_f}$, 
and $\{\bx_i^b,b_i\}_{i=1}^{n_b}$. 
Let $u_{\bvartheta}(\bx)$ denote the DNN approximation to the solution 
$u(\bx)$, where $\bvartheta$ denotes the DNN parameters. 
In data-generating equations, the observations
can be expressed as 
\begin{equation} \label{eq:PDE-PINNs}
    \begin{split}
    u_i & = u_{\bvartheta}(\bx_i^u)+   z_i^u, \quad i=1,2,\ldots,n_u,\\
    f_i & = \mathcal{F}(u_{\bvartheta}(\bx_i^f); \bbeta) + z_i^f, \quad i=1,2,\ldots, n_f, \\ 
    b_i &= \mathcal{B}(u_{\bvartheta}(\bx_i^b)) + z_i^b, \quad i=1,2,\ldots, n_b,
    \end{split}
\end{equation}
where $z_i^u$, $z_i^f$, and $z_i^b$ are independent Gaussian random errors with zero mean.  
Our objective is to infer $u$ and/or $\bbeta$, as well as to quantify their uncertainty, given the data and  governing  
physical law. In physics, inferring $u$ with $\bbeta$ known is  termed the {\it forward problem}, while 
inferring $u$ when $\bbeta$ is unknown is referred to as the {\it inverse problem}.

When applying EFI to address this problem, the EFI network comprises two DNNs. The first, referred to as the data modeling network, is used to approximate $u(\bx)$. The second, called the $\bw$-network, is used to approximate the parameters of the data modeling network as well as other parameters in equation (\ref{eq:PDE-PINNs}).
For an illustration, see Figure \ref{EFInetwork:Double}. 
Specifically,
    for the inverse problem, the output of the $\bw$-network corresponds to 
    $\btheta=\{\bvartheta,\bbeta\}$; 
    for the forward problem, its output corresponds to $\btheta=\{\bvartheta\}$.

   \begin{figure}[!ht] 
    \centering
    \includegraphics[width=0.6\textwidth]{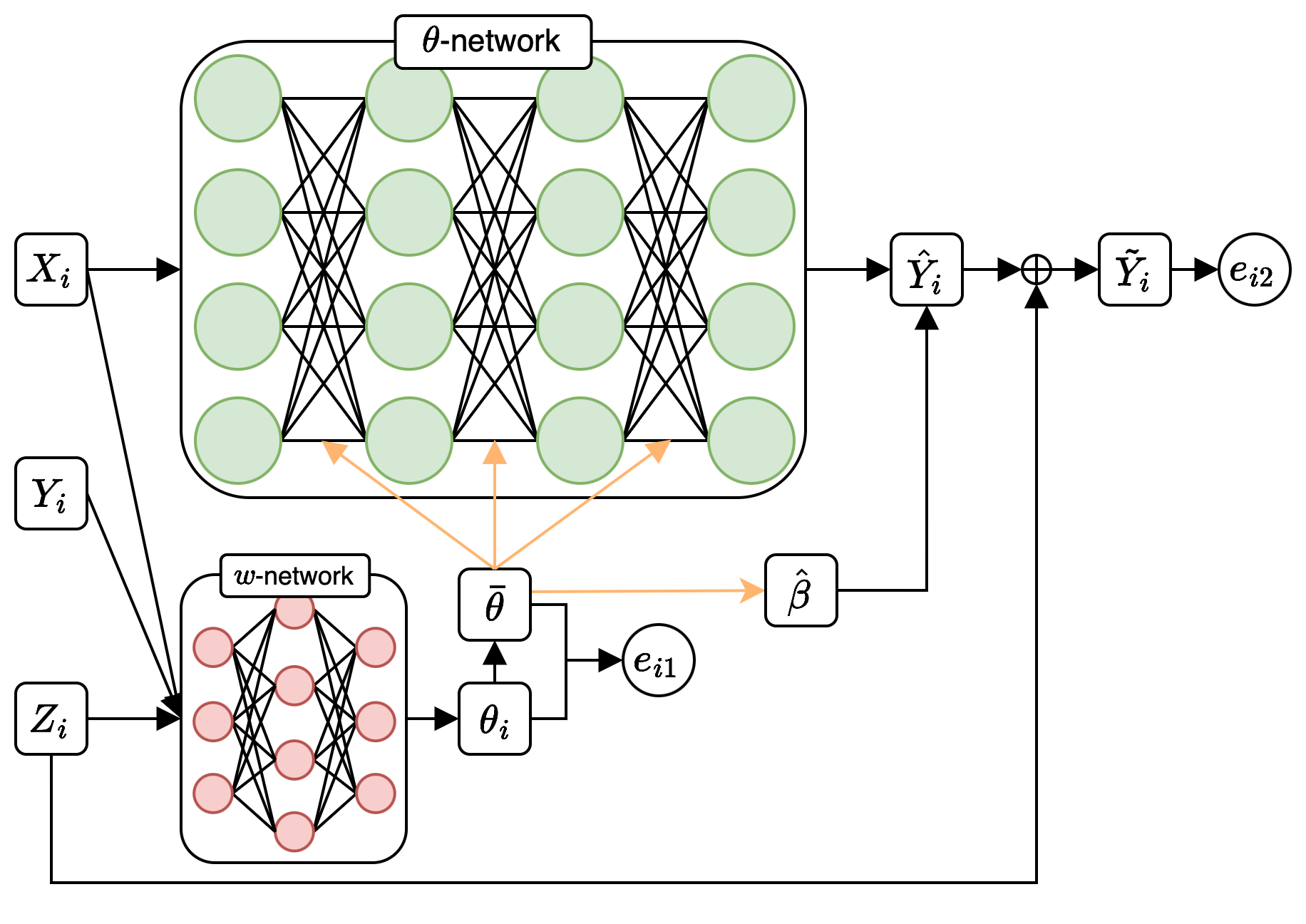}
    \caption{An EFI network with a double neural network (double-NN) structure.}
    \label{EFInetwork:Double}
 \end{figure} 

Given the data-generating equations, the energy function 
for EFI can be defined as follows:
\begin{equation}
\begin{split}
&U_n  (\bu_n,\bbf_n,\bg_n,\bz_n,\bw_n) =\eta_{\theta} \sum_{i=1}^n  \|\hat{\btheta_i}-\bar{\btheta}\|^2 + \eta_u \sum_{i=1}^{n_u} \|u_i-u_{\bvartheta}(\bx_i^u)-z_i^u\|^2 \\ 
 & + \eta_f \sum_{i=1}^{n_f} \|
f_i-\mathcal{F}(u_{\bvartheta}(\bx_i^f); \bbeta) - z_i^f\|^2 + \eta_b \sum_{i=1}^{n_b} \| b_i -\mathcal{B}(u_{\bvartheta}(\bx_i^b))- z_i^b\|^2, \\
\end{split} 
\end{equation}
 where $\bu_n=(u_1,\ldots,u_n)^T$, $\bbf=(f_1,\ldots,f_n)^T$, 
 $\bb_n=(b_1,\ldots,b_n)^T$, $\bz_n=(z_1^u,\ldots,z_n^u; z_1^f,\ldots,z_n^f$; $z_n^b,\ldots,z_n^b)^T$; 
 $n=\#\{z_i^u\ne 0, z_i^f \ne 0, z_i^b\ne 0\}$ denotes the total number 
 of noisy observations in the dataset;  
 and $\eta_{\theta}$, $\eta_u$, 
 $\eta_f$, and $\eta_b$ are belief weights for balancing different terms. 
 Fortunately, as shown in \citet{LiangKS2024EFI}, the choices for these 
 terms will not affect much the performance of the algorithm 
 as long as $\epsilon \to 0$. 


 Using EFI to solve PINNs, if we can correctly impute $\bZ_n$ and consistently estimate the inverse function $G(\cdot)$, the uncertainty of 
$\btheta$ can be accurately quantified according to (\ref{EFDeq}). However, 
  consistent estimation of the inverse function is unattainable under the current EFI theoretical framework due to the high dimensionality of 
  $\btheta$, which often far exceeds the sample size $n$.
This limitation arises from the existing sparse deep learning theory \citep{SunSLiang2021}, which constrains the dimension of $\btheta$ 
to remain fixed or grow very slowly with $n$.
To address this challenge, we propose a new theoretical framework for EFI, as detailed in Section \ref{sect:narrow-neck}, 
which extends EFI to accommodate large-scale models and addresses the constraints of the current framework.


\subsection{A New Theoretical Framework of EFI for Large-Scale Models} 
\label{sect:narrow-neck}


EFI treats the random errors in observations as latent variables. Consequently, training the $\bw$-network is reduced to a problem of parameter estimation with missing data. Under a Bayesian setting, \citet{LiangKS2024EFI} addressed the problem by solving equation (\ref{identityeq}) and employ  Algorithm \ref{EFIalgorithm} for the solution.
By imposing regularity conditions such as smoothness and dissipativity \citep{raginsky2017non}, \citet{LiangKS2024EFI} established the following convergence results for Algorithm \ref{EFIalgorithm}:


\begin{lemma} \label{lem1:Algconvergence} (Theorem 4.1 and Theorem 4.2, \citep{LiangKS2024EFI}) Suppose the regularity conditions in   \citet{LiangKS2024EFI}  hold, and the 
learning rate sequence $\{\upsilon_k: k=1,2,\ldots\}$ and 
the step size sequence $\{\gamma_k: k=1,2,\ldots \}$ are set as: 
$\upsilon_k=\frac{C_\upsilon}{c_\upsilon+k^{\alpha}}$ and  
$\gamma_k=\frac{C_\gamma}{c_\gamma+k^{\beta}}$
for some constants 
$C_{\upsilon}>0$, $c_{\upsilon}>0$, $C_{\gamma}>0$ and $c_{\gamma}>0$, and 
$\alpha,\beta \in (0,1]$ satisfying $\beta \leq \alpha \leq \min\{1,2\beta\}$. 
\begin{itemize} 
\item[(i)] (Root Consistency) There exists a root $\bw_n^* \in \{\bw: \nabla_{\bw} \log \pi(\bw|\bX_n,\bY_n)=0\}$ such that 
\[
 E \| \bw_n^{(k)}-\bw_n^*\| \leq \zeta \gamma_k, \quad k \geq k_0,
\]
 for some constant $\zeta>0$ and iteration number $k_0>0$. 
\item[(ii)] (Weak Convergence of Latent Variables) Let $\pi_z^*=\pi(\bZ_n|\bX_n,\bY_n,\bw_n^*)$ and $T_k = \sum_{i=0}^{k-1}\upsilon_{i+1}$,  and let $\pi_z^{(T_k)}$ denote the probability law of $\bZ_n^{(k)}$. Then,
\[  
\mathbb{W}_2(\pi_z^{(T_k)}, \pi_z^*) \leq (c_0 \delta_g^{1/4}+c_1 \gamma_1^{1/4})T_k + c_2 e^{-T_k/c_{LS}},
\]
for any $k\in \mathbb{N}$, where $\mathbb{W}_2(\cdot,\cdot)$ is the 2-Wasserstein distance, 
$c_0$, $c_1$, and $c_2$ are some positive constants, 
$c_{LS}$ is the logarithmic Sobolev constant of $\pi_z^*$, and $\delta_g$ is a coefficient reflecting the variation of the stochastic gradient 
used in latent variable imputation step. 
\end{itemize}
\end{lemma}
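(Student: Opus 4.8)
The plan is to treat Algorithm~\ref{EFIalgorithm} as a stochastic approximation recursion with controlled (Markovian) dynamics: step (iii) is a slow SGD recursion on $\bw_n$ designed to solve the mean-field equation~(\ref{identityeq}), namely $h(\bw_n):=\nabla_{\bw_n}\log\pi_\epsilon(\bw_n|\bX_n,\bY_n)=0$, while step (ii) is an SGLD chain whose invariant law $\pi_\epsilon(\bZ_n|\bX_n,\bY_n,\bw_n^{(k)})$ drifts slowly with $\bw_n^{(k)}$. I would import the regularity conditions of \citet{LiangKS2024EFI} (smoothness and dissipativity of the relevant potentials, a log-Sobolev inequality for the conditional law of $\bZ_n$, and a local Lyapunov / one-point-monotonicity property of $h$ near a root), and observe that the schedule $\upsilon_k=C_\upsilon/(c_\upsilon+k^\alpha)$, $\gamma_k=C_\gamma/(c_\gamma+k^\beta)$ with $\beta\le\alpha\le\min\{1,2\beta\}$ delivers the usual Robbins--Monro summability requirements together with the comparison between the two step-size decays that is needed below.

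\emph{Part (i).} I would write the $\bw_n$-increment as $\gamma_{k+1}\big(h(\bw_n^{(k)})+\xi_{k+1}+b_{k+1}\big)$, where $\xi_{k+1}$ is a martingale-difference term from minibatch subsampling (zero conditional mean, conditional second moment bounded via smoothness and dissipativity), and $b_{k+1}$ is the bias from evaluating the gradient at the not-yet-equilibrated sample $\bZ_n^{(k)}$ rather than under $\pi_\epsilon(\cdot|\bX_n,\bY_n,\bw_n^{(k)})$. The crucial estimate is that, by Lipschitzness of the integrand in~(\ref{identityeq}) in $\bZ_n$, $E\|b_{k+1}\|$ is controlled by the $\mathbb{W}_2$-distance between the law of $\bZ_n^{(k)}$ and its current target; this tracking error is in turn bounded by combining the per-step geometric contraction of the SGLD chain (from the log-Sobolev inequality) with a perturbation bound showing the target moves by $O(\gamma_k)$ in $\mathbb{W}_2$ per SGD step, so the bias vanishes at a rate expressible through the schedule. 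Plugging this into the Lyapunov recursion $E\|\bw_n^{(k+1)}-\bw_n^*\|\le(1-c\gamma_{k+1})E\|\bw_n^{(k)}-\bw_n^*\|+\gamma_{k+1}E\|b_{k+1}\|+C\gamma_{k+1}^2$ and unrolling it against the polynomial schedule yields $E\|\bw_n^{(k)}-\bw_n^*\|\le\zeta\gamma_k$ for $k\ge k_0$; the constraint $\alpha\le\min\{1,2\beta\}$ is what makes the bias and variance contributions match the order $\gamma_k$ rather than a slower rate.

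\emph{Part (ii).} With $\bw_n^{(k)}\to\bw_n^*$ established, I would bound $\mathbb{W}_2(\pi_z^{(T_k)},\pi_z^*)$ by a triangle-inequality decomposition and a non-asymptotic Langevin analysis in the style of \citet{raginsky2017non}: (a) the continuous-time overdamped Langevin diffusion with the \emph{frozen} drift $\nabla_{\bZ_n}\log\pi_\epsilon(\bZ_n|\bX_n,\bY_n,\bw_n^*)$ contracts to $\pi_z^*$ at a rate governed by the log-Sobolev constant $c_{LS}$, producing the $c_2 e^{-T_k/c_{LS}}$ term; (b) the Euler--Maruyama/SGLD discretization and the stochastic-gradient fluctuation deviate from that diffusion, in $\mathbb{W}_2$ over the time horizon $T_k$, by an amount of order $(c_0\delta_g^{1/4}+c_1\gamma_1^{1/4})T_k$, where $\delta_g$ captures the imputation-gradient variance; (c) the further discrepancy caused by the SGLD actually using the moving drift $\nabla_{\bZ_n}\log\pi_\epsilon(\cdot|\bX_n,\bY_n,\bw_n^{(k)})$ instead of the frozen one is absorbed, via a synchronous-coupling / Girsanov argument, into the same $\gamma_1^{1/4}T_k$ order using the rate from Part (i). Summing the three contributions gives the stated bound.

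\emph{Main obstacle.} The hardest part is the bias control in Part (i): one must argue quantitatively that the SGLD chain relaxes to its \emph{moving} target faster than the outer recursion displaces that target, which requires simultaneously (1) a log-Sobolev / contraction constant for the conditional laws of $\bZ_n$ that is uniform over $\bw_n$ in a neighborhood of $\bw_n^*$, (2) a $\mathbb{W}_2$-Lipschitz bound of those conditional laws in $\bw_n$, and (3) reconciliation with the nonconvexity of the $\bw$-network, where only a \emph{local} Lyapunov/monotonicity property for $h$ is available, so the argument must either be localized around $\bw_n^*$ or coupled with a stability argument keeping the iterates in the good region. A secondary subtlety is bookkeeping: every constant must be tracked explicitly in the schedule exponents so that the \emph{rate} $\gamma_k$ (not merely $o(1)$) emerges in Part (i), and so that the $\delta_g^{1/4}$ and $\gamma_1^{1/4}$ dependencies come out with the right powers in Part (ii).
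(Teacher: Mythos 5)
This lemma is not proved in the paper at all: it is imported verbatim as Theorems 4.1 and 4.2 of \citet{LiangKS2024EFI}, so there is no in-paper argument to compare your sketch against. Your reconstruction---treating step (iii) as a Robbins--Monro recursion with a bias term controlled by the $\mathbb{W}_2$-tracking error of the adaptive SGLD chain, and obtaining part (ii) from a \citet{raginsky2017non}-style decomposition into a log-Sobolev contraction term plus discretization/stochastic-gradient/adaptive-drift errors of order $(\delta_g^{1/4}+\gamma_1^{1/4})T_k$---is essentially the route taken in that cited reference, so the proposal is consistent with the source of the result rather than an alternative to anything in this paper.
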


In our implementation, the latent variable imputation step is done for 
each observation separately, ensuring $\delta_g=0$. 
We choose $\alpha \in (0,1]$  and set
$\gamma_1 \prec \frac{1}{T_k^4}$ for any $T_k$, which ensures $\mathbb{W}_2(\pi_z^{(T_k)}, \pi_z^*) \to 0$ as $k\to \infty$.  
It is worth noting that Lemma \ref{lem1:Algconvergence} holds regardless of the size of the $\bw$-network. Thus, it remains 
valid for large-scale models. However, in this work, 
we do not impose any priors on $\bw_n$, which corresponds to the 
non-informative prior setting $\pi(\bw_n) \propto 1$.

 Given the root consistency result,  it is still necessary to establish that the resulting inverse function estimator (\ref{mappingest}) is consistent with respect to the true parameter $\btheta^*$ to ensure valid downstream inference. \citet{LiangKS2024EFI} established this consistency using sparse deep learning theory \citep{SunSLiang2021}, but their approach was limited to settings where the dimension of $\btheta$ is fixed or increases very slowly with  the sample size $n$. 
In this work, we achieve the consistency by employing a narrow-neck $\bw$-network, which overcomes the limitation of the current framework of EFI.



To motivate the development of the narrow-neck $\bw$-network,  
we first note an important mathematical fact: As implied by (\ref{app:energyfunction11}), each 
$\hat{\btheta}_i \in \mathbb{R}^p$ in the EFI network tends to converge to a constant vector as $\epsilon \to 0$. Consequently, 
different components of $\hat{\btheta}$ become highly correlated across 
$n$ observations and $\hat{\btheta}$ 
can be effectively represented in a much lower-dimensional space, 
even though the dimension of $\btheta$ may be very high.
A straightforward solution to address this issue is to 
incorporate a restricted Boltzmann machine (RBM) \citep{Hinton2006ReducingTD} into the $\bw$-network, as illustrated in 
Figure \ref{fig:RBM}, where the neck layer is binary and the last two layers form a Gaussian-binary RBM \citep{Gu2022ApproximationPO,Chu2017RestrictedBM}. 
As discussed in \citet{LiangKS2024EFI}, $\hat{\btheta}$ can be treated as Gaussian random variables in the EFI network. 
The binary layer of the RBM serves as a dimensionality reducer for $\hat{\btheta}$.
Such a RBM-embedded $\bw$-network can be trained using 
the imputation-regularized optimization (IRO) algorithm \citep{Liang2018missing} in a similar way to that used in  \citet{Wu2019AccelerateTO}.

Since the primary role of the $\bw$-network is forward learning of 
$\btheta$, it can also be formulated as a  
 stochastic neural network (StoNet) \citep{LiangSLiang2022,SunLiang2022kernel}, where the binary layer can be extended to be continuous.  
 This StoNet-formulation leads to the following hierarchical model: 
 \begin{equation} \label{hiereq}
 \begin{split}
 \hat{\theta}_{i}^{(j)} &= \bm_i^T \bxi_j + e_{i,j},  \\ 
 \bm_i &= R(\bmu_i, \bv_i), \\ 
 \end{split}
 \end{equation} 
 where $i=1,2,\ldots,n$, $j=1,2,\ldots,p$, 
 $e_{i,j} \sim N(0,\sigma_{\hat{\theta}}^2)$, $\bv_i \in \mathbb{R}^{d_h}$ denotes a vector of random errors following a known distribution, $d_h$ denotes the width of the stochastic 
  neck layer, $h$ denotes the number of hidden layers of the $\bw$-network, 
 $\bmu_i=g(X_i,Y_i,Z_i;\bw_n^{(1)})$ is the mean of the feeding  
 vector to the stochastic neck layer, and $R(\cdot)$ represents a 
 transformation. 
 With a slight abuse of notation, we assume that $\bm_i$ has been augmented with  
 a constant component to account for the intercept term in the regression model for $\hat{\theta}_{i}^{(j)}$.  
 For $R(\cdot)$, we recommend the setting: 
\begin{itemize}
\item[(*)]  {\it The neck layer is stochastic with:
$R(\bmu_i,\bv_i)=\Psi(\bmu_i+\bv_i)$, 
where $\bv_i \sim N(0, \sigma_{v}^2)$ with a pre-specified value of $\sigma_v^2$, and 
$\Psi$ is an (element-wise) activation function. }
\end{itemize} 

 Under this setting, the $\bw^{(1)}$-network forms a nonlinear Gaussian regression with response $\bmu_i+\bv_i$ for $i=1,2,\ldots,n$.
Conceptually, the StoNet can be trained using the
stochastic EM algorithm \citep{Celeux1996StochasticVO} 
by iterating  between the  steps: 
(i) {\it Latent variable imputation}: 
  Impute the latent variables $\{\bv_i: i=1,\ldots,n\}$ and  
  $\bZ_n=\{z_1,\ldots,z_n\}$ conditioned on the current
 estimates of $\bw_n=\{\bw_n^{(1)},\bw_n^{(2)}\}$, where $\bw_n^{(2)}=\{\bxi_j: j=1,2,\ldots,p\}$. 
 (ii) {\it Optimization}: Conditioned 
 on the imputed latent variables, update the 
  estimates of $\bw_n^{(1)}$ and $\bw_n^{(2)}$ separately.
 Specifically, $\bw_n^{(1)}$ can be estimated by training 
 the $\bw_n^{(1)}$-network using SGD, and $\bw_n^{(2)}$ can be estimated by performing  $p$ linear regressions as specified in (\ref{hiereq}).

Following the standard theory of the stochastic EM algorithm \citep{Nielsen2000, Liang2018missing},  $\{\bw_n^{(1)},\bw_n^{(2)}\}$ will converge to a solution 
 to  the equation:
$\nabla_{\bw_n} \log \pi_{\epsilon}(\bw_n|\bX_n,\bY_n) 
=\int  \Big[\nabla_{\bw_n} \log \pi_{\epsilon}(\bw_n|\bX_n,\bY_n,\bZ_n,\bV_n) 
\pi_{\epsilon}(\bZ_n,\bV_n|\bX_n,\bY_n,\bw_n)\Big] d\bZ_n d \bV_n=0$,     
as the sample size $n$ and the number of iterations of the algorithm become 
large, where the prior $\pi(\bw_n) \propto 1$ and $\bV_n=(\bv_1, \bv_2, \ldots, \bv_n)^T$. Denote the converged solution by $\bw_n^*=\{\bw_n^{*(1)},\bw_n^{*(2)}\}$. 
The consistency of the resulting inverse function estimator can be 
established by leveraging the sufficient dimension reduction property of
the StoNet \citep{LiangSLiang2022} and the prediction property 
of linear regressions.  
 Specifically, $\{\bm_i:i=1,\ldots,n\}$ serves as a sufficient dimension reduction for $\{(X_i, Y_i, Z_i):i=1,\ldots,n\}$. 
 The consistency of the inverse mapping 
 can thus be ensured by the linear relationship   $\hat{\btheta}_i \sim \bm_i$, as described in (\ref{hiereq}), along with the prediction property of linear regression. Note that the linear relationship   $\hat{\btheta}_i \sim \bm_i$ can be generally ensured by the universal approximation ability 
 of the DNN. 
 
 Toward a rigorous mathematical development, we impose the following 
 conditions: 
\begin{equation} \label{w2req} 
\begin{split}
(i) \ \lambda_{\min}(\mathbb{E}(\bm_i\bm_i^T)) \geq \rho_{\min}:=c' \sigma_v^2+\delta>0, \quad 
(ii) \  
 \lambda_{\max} (\mathbb{E}(\bm_i\bm_i^T))  \leq \rho_{\max} <\infty,
\end{split}
\end{equation}
where $\mathbb{E}(\cdot)$ denotes expectation;
$\lambda_{\min}(\cdot)$ and $\lambda_{\max}(\cdot)$ denote the minimum and maximum eigenvalue of a matrix, respectively;
and $c'> 0$, $\delta>0$,
and $\rho_{\max}>0$ are some constants. 
Condition (i) is generally satisfied by choosing a sufficiently narrow 
neck layer, in particular, we set $d_h \prec n$. 
Condition (ii) is justified in Appendix \ref{sect:justification}.

By the asymptotic equivalence between the StoNet and conventional
DNN \citep{LiangSLiang2022} (see also Appendix \ref{sect:proofEqu}), 
the StoNet can be trained by directly training the DNN. Based on this asymptotic equivalence, 
Theorem \ref{thm:1} establishes the consistency of the inverse mapping 
learned by Algorithm \ref{EFIalgorithm} for large-scale models, see 
Appendix \ref{sect:proof3.1} for the proof. 
 
\begin{theorem} \label{thm:1} 
Suppose that the narrow neck layer is set as in (*) with 
$\sigma_v^2 \prec \frac{\epsilon}{\eta h d_h p}$,  
the activation function $\Psi(\cdot)$ is $c$-Lipschitz continuous, 
and $d_h \prec n$ is sufficiently small such that the conditions 
in (\ref{w2req}) are satisfied while admitting a non-empty zero energy set 
$\mathcal{Z}_n$. 
 Additionally, assume that the other regularity conditions (Assumptions \ref{ass:1}-\ref{ass:2} in Appendix \ref{sect:proof}) hold.
If  $\epsilon \prec \min\{ \frac{n}{p^2 d_h^2}, \frac{h}{p d_h}\}$,
then the inverse mapping $\btheta=G(\bY_n,\bX_n,\bZ_n)$, learned by Algorithm \ref{EFIalgorithm} with a narrow neck $\bw$-network, is consistent. 
\end{theorem}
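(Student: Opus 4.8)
The plan is to chain together three ingredients: (a) the root consistency of Algorithm~\ref{EFIalgorithm} supplied by Lemma~\ref{lem1:Algconvergence}(i); (b) the asymptotic equivalence between the narrow-neck $\bw$-network trained as a conventional DNN and the StoNet hierarchical model~(\ref{hiereq}), which decomposes the learning problem into a sufficient dimension reduction followed by $p$ coordinatewise linear regressions; and (c) the standard prediction bound for least squares under the eigenvalue conditions~(\ref{w2req}).

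First I would invoke the StoNet--DNN equivalence (Appendix~\ref{sect:proofEqu}, in the spirit of \citet{LiangSLiang2022}): under setting~(*) with $\sigma_v^2 \prec \epsilon/(\eta h d_h p)$ and $\Psi$ being $c$-Lipschitz, the energy of the DNN run by Algorithm~\ref{EFIalgorithm} and that of the StoNet~(\ref{hiereq}) agree up to an $o(1)$ term, so their stationary points coincide asymptotically. Combining this with Lemma~\ref{lem1:Algconvergence}(i) gives a limiting weight vector $\bw_n^*=\{\bw_n^{*(1)},\bw_n^{*(2)}\}$ solving the stochastic-EM fixed-point equation stated before~(\ref{w2req}), where $\bw_n^{*(1)}$ is the fitted nonlinear Gaussian regression feeding the neck layer and $\bw_n^{*(2)}=\{\bxi_j^*:j=1,\dots,p\}$ collects the $p$ ordinary least squares fits appearing in~(\ref{hiereq}).

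Next I would show that $\{\bm_i=R(\bmu_i,\bv_i)\}_{i=1}^n$, with $\bmu_i=g(X_i,Y_i,Z_i;\bw_n^{*(1)})$, is a sufficient dimension reduction of $\{(X_i,Y_i,Z_i)\}$ for $\{\hat\btheta_i\}$; this is the SDR property of StoNets \citep{LiangSLiang2022} together with the universal approximation of the $\bw^{(1)}$-subnetwork, which makes the linear model $\hat\btheta_i\sim\bm_i$ in~(\ref{hiereq}) correctly specified up to an error that vanishes as $\epsilon\downarrow 0$. Under~(\ref{w2req}) the sample Gram matrix $n^{-1}\sum_i\bm_i\bm_i^T$ has eigenvalues bounded away from $0$ and $\infty$ with probability tending to one (the lower bound being where $d_h\prec n$ and the injected noise $\sigma_v^2$ enter), so the least squares estimator satisfies $\|\hat\bxi_j-\bxi_j^*\|^2=O_p(\sigma_{\hat\theta}^2 d_h/n)$ for each $j$. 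Because the term $\eta_{\theta}\sum_i\|\hat\btheta_i-\bar\btheta\|^2$ in the energy forces $\sigma_{\hat\theta}^2=O(\epsilon)$ as $\epsilon\downarrow 0$, aggregating the prediction error across the $p$ coordinates and averaging over the $n$ observations shows that the estimator $G^*(\bY_n,\bX_n,\bZ_n)=n^{-1}\sum_i\hat g(y_i,\bx_i,z_i,\bw_n^*)$ of~(\ref{mappingest}) differs from the true inverse map evaluated at the imputed errors by three terms: the regression estimation error of order $p\epsilon d_h/n$, the StoNet approximation error governed by $\eta h d_h p\,\sigma_v^2$, and the neck-compression bias of order $p d_h/h$. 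The rate hypothesis $\epsilon\prec\min\{n/(p^2 d_h^2),\,h/(p d_h)\}$ together with $\sigma_v^2\prec\epsilon/(\eta h d_h p)$ drives all three to zero, which, fed back through the root consistency and the remaining regularity assumptions, yields $G^*\to\btheta^*$ and hence the claimed consistency.

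The main obstacle is the simultaneous control of these competing error sources in the regime $p\gg n$. The linear-regression term alone scales like $p\sqrt{d_h/n}$ and is only tamed by the concentration $\sigma_{\hat\theta}^2=O(\epsilon)$; the StoNet-to-DNN gap is controlled only because~(*) forces $\sigma_v^2$ to be this small; and the compression bias is negligible only because $\hat\btheta_i$ becomes nearly degenerate as $\epsilon\downarrow 0$, so that a neck of width $d_h\prec n$ suffices to represent it. Verifying that the single rate condition on $\epsilon$ annihilates all three contributions while still leaving a non-empty zero-energy set $\mathcal{Z}_n$ compatible with~(\ref{w2req}) is the delicate accounting; once that is in place, the remainder is a routine combination of Lemma~\ref{lem1:Algconvergence}, the SDR property of StoNets, and least-squares prediction theory.
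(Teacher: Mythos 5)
Your proposal follows essentially the same route as the paper's proof: StoNet--DNN asymptotic equivalence under the $\sigma_v^2 \prec \epsilon/(\eta h d_h p)$ condition, the sufficient-dimension-reduction role of the neck outputs $\bm_i$, the per-coordinate OLS error bound $O(d_h\sigma_{\hat\theta}^2/(n\rho_{\min}))$ under (\ref{w2req}) with $\sigma_{\hat\theta}^2=O(\epsilon/\eta)$ forced by the energy, and the final aggregation over the $p$ coordinates killed by the stated rate condition on $\epsilon$. The only caveat is bookkeeping: the paper's aggregated $\ell_1$ bound is of order $p d_h\sqrt{\epsilon/n}+\sqrt{\epsilon p d_h/h}$ (which is exactly why $\epsilon\prec\min\{n/(p^2d_h^2),\,h/(pd_h)\}$ is the right condition), whereas your quoted orders for the regression error and the ``neck-compression bias'' are slightly off, though this does not change the structure or validity of the argument.
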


This narrow neck setting for the $\bw$-network eliminates the need to specify a prior for $\bw_n^{(1)}$. 
Notably, under this setting, the resulting estimate 
$\btheta$ is not necessarily sparse, introducing a new research paradigm for high-dimensional problems; moreover,  
 the  $\bw_n^{(1)}$-network does not need to be excessively large, 
keeping the overall size of the $\bw$-network manageable. This facilitates the application of EFI to high-dimensional and complex models.

\section{Simulation Studies}

We compared EFI with Bayesian and dropout methods across multiple simulation studies, including the Poisson equation under various settings and the Black-Scholes model. Below, we present the results for the 1-D Poisson equation, with results from other studies provided in Appendix \ref{sect:AdditionalStudy}. The experimental settings for all simulation studies are detailed in Appendix \ref{sect:settings}. 


\subsection{1-D Poisson Equation}

Consider a 1-D Poisson equation as in \citet{BPINN}: 
 \begin{equation} \label{poissoneq}
 \beta \frac{\partial^2 u}{\partial x^2} =f, \quad \mbox{$x \in \Omega$},
 \end{equation}
 where $\Omega=[-0.7,0.7]$, $\beta=0.01$, $u=\sin^3(6x)$, and $f$ can be 
 derived from \eqref{poissoneq}. Here, we assume the analytical expression of $f$ is unavailable; instead, 200 sensor measurements of $f$ are available with the sensors equidistantly distributed across $\Omega$. 
Additionally, there are two sensors at $x=-0.7$ and $x=0.7$ to provide the left/right  Dirichlet boundary conditions for $u$. 
We model the boundary noise as $z_i^u \sim N(0,0.05^2)$ for $i=1,2,\ldots,20$, with 10 observations drawn from each boundary sensor. For the interior domain, we assume noise-free measurements of $f$.
This simulation is repeated over 100 independent datasets to evaluate the robustness and reliability of the proposed approach.

 The Bayesian method was first applied to this example as in \citet{BPINN} using a two-hidden-layer DNN, with 50 hidden units 
in each layer, to approximate $u(\bx)$. 
\textcolor{black}{The same network architecture was also used for this example 
by other methods as shown in Table \ref{table:1d poisson}.} 
The negative log-posterior is given as follows: 
 \begin{equation}
 \label{eq:bpinn loglikelihood}
 \begin{split}
\ell(\bvartheta) &= \sum_{i=1}^{n_u} \frac{\|u_i-u_{\bvartheta}(\bx_i^u)\|^2}{2\sigma_u^2} 
+ \sum_{i=1}^{n_f} \frac{\|
f_i-\mathcal{F}(u_{\bvartheta}(\bx_i^f); \bbeta) \|^2}{2\sigma_f^2}  
+ \sum_{i=1}^{n_b} \frac{\| b_i -\mathcal{B}(u_{\bvartheta}(\bx_i^b))\|^2}{2\sigma_b^2} \\
& \quad - \log(\pi(\bvartheta))+C,     
 \end{split}
\end{equation}
where $C$ denotes the log-normalizing constant, $n_u=20$, $n_f=200$, $n_b=0$, and $\sigma_u=0.05$.
The prior $\pi(\bvartheta)$
is specified as in \citet{BPINN}, where each unknown parameter is 
assigned an independent standard Gaussian distribution. The same prior is used across all Bayesian simulations in this paper. We implement the method using 
the Python package {\it hamiltorch} \citep{cobb2020scaling}.
Since $f$ is observed exactly, the corresponding variance $\sigma_f$ should theoretically be zero. However, the formulation in (\ref{eq:bpinn loglikelihood}) does not allow $\sigma_f=0$, necessitating experimentation with different non-zero values of $\sigma_f$. These variations resulted in different widths of confidence intervals, as illustrated in Figure \ref{fig:bpinn_traj}. Notably, there is no clear guideline for selecting $\sigma_f$ to achieve the appropriate interval width necessary for the desired coverage rate. This ambiguity underscores the dilemma (mentioned in Introduction) inherent in the Bayesian method. Specifically, the $f$-term in (\ref{eq:bpinn loglikelihood}) functions as part of the prior for the parameter $\bvartheta$.


A similar issue arises when selecting the dropout rate for PINNs. As illustrated in Figure \ref{fig:pinn_traj}, the uncertainty estimation in PINNs is highly sensitive to the choice of the dropout rate. When the dropout rate approaches zero, the confidence interval collapses into a point estimate, failing to capture uncertainty. Conversely, excessively large dropout rates introduce significant bias, leading to unreliable interval estimates. This highlights the challenge of determining an appropriate dropout rate to balance bias and variability in uncertainty quantification. 
\textcolor{black}{As a possible way to address this issue, Concrete Dropout \citep{gal2017concrete} was implemented for the example with the code provided at
\url{https://github.com/yaringal/ConcreteDropout}. Unlike the conventional dropout method using fixed dropout rate, Concrete Dropout treats the dropout rate as a hyperparameter and learns it simultaneously when training the model.}


\begin{figure*}[!ht]
    \centering
    \begin{subfigure}{0.3\textwidth}
        \centering
        \includegraphics[width=\textwidth]{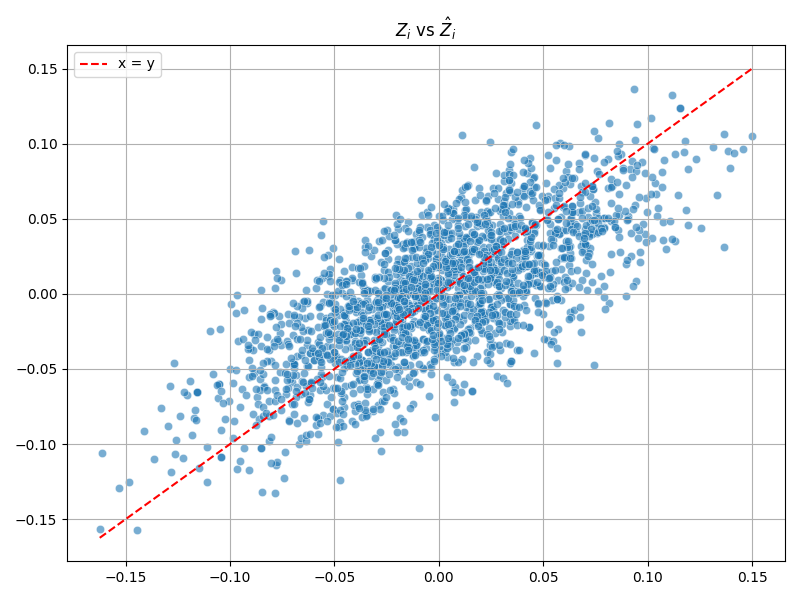}
        \caption{Imputed errors vs true errors}
    \end{subfigure}
    \hfill
    \begin{subfigure}{0.3\textwidth}
        \centering
        \includegraphics[width=\textwidth]{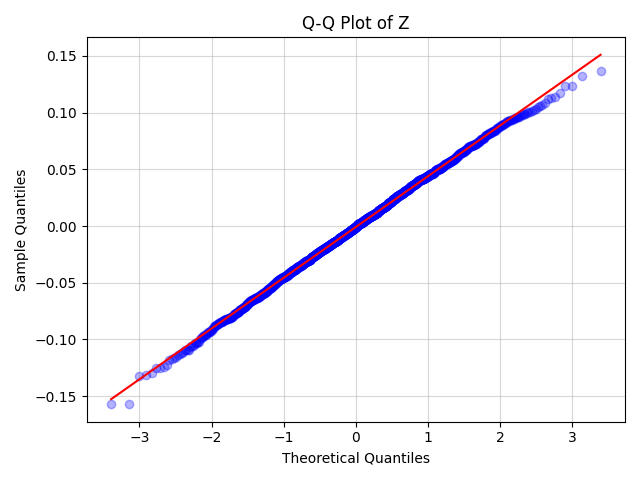}
        \caption{QQ-plot of imputed errors}
    \end{subfigure}
    \hfill
    \begin{subfigure}{0.3\textwidth}
        \centering
        \includegraphics[width=\textwidth]{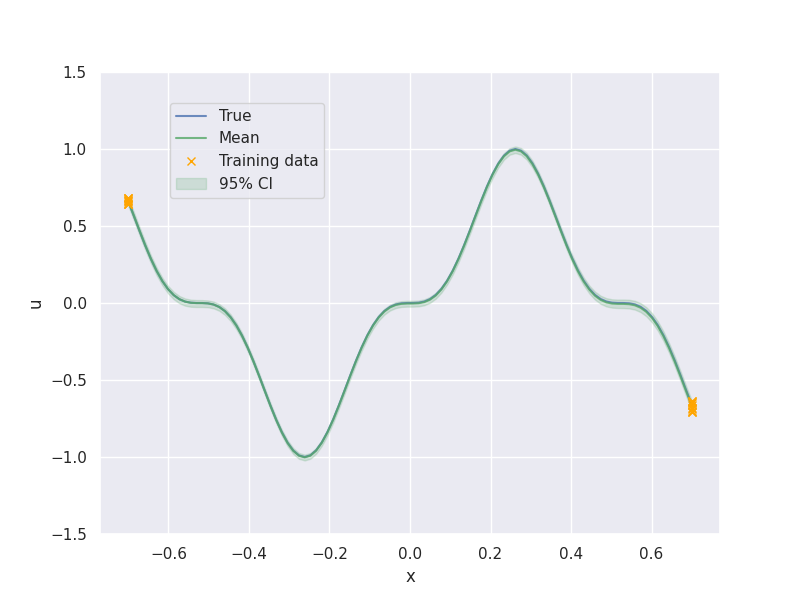}
        \caption{Confidence interval}
    \end{subfigure}
    \caption{EFI-PINN diagnostic for 1D-Poisson}
    \label{fig:efi poisson}
\end{figure*}

 We then applied EFI to this example, using the same DNN structure 
 for the data-modeling network.
 Figure \ref{fig:efi poisson} summarizes the imputed random errors by 
 EFI for 100 datasets, showing that EFI correctly imputes the realized random errors from the observations.  Consequently, EFI achieves the correct recovery of the underlying physical law.

 \begin{table*}[!ht]
  \caption{Metrics for 1D-Poisson, averaged over 100 runs.}
  \label{table:1d poisson}
  \centering
  \begin{tabular}{lccccc}
    \toprule
    Method    & MSE    & Coverage Rate     & CI-Width \\
    \midrule
        PINN (no dropout) &  0.000121 (0.000012) & 0.0757 (0.011795) & 0.002139 (0.000024) \\
        Dropout (0.5\%)  & 0.000228 (0.000024) & 1.0000 (0.000000) & 0.191712 (0.004367) \\
        Dropout (1\%)  & 0.000851 (0.000111) & 0.9999 (0.000104) & 0.274651 (0.006734) \\
        Dropout (5\%) & 0.006276 (0.000982) & 0.9893 (0.004024) & 0.660927 (0.030789) \\
        Concrete Dropout   & 0.000184 (0.000020) & 0.1185 (0.018002) & 0.003706 (0.000073) \\
        Bayesian ($\sigma_f=0.05$)  & 0.000826 (0.000102) & 0.9997 (0.000302) & 0.372381 (0.002635) \\
        Bayesian ($\sigma_f=0.005$)  & 0.000170 (0.000020) & 0.9914 (0.008600) & 0.081557 (0.000692) \\
        Bayesian ($\sigma_f=0.0005$)  & 0.001460 (0.000219) & 0.5884 (0.032390) & 0.055191 (0.001123) \\
        \bf{EFI}  & \bf{0.000148 (0.000016)} & \bf{0.9517 (0.014186)} & \bf{0.050437 (0.000462)} \\
    \bottomrule
  \end{tabular}
\end{table*}

 We evaluated the accuracy and robustness of each method by recording three metrics, as summarized in Table \ref{table:1d poisson}, based on an average across 100 repeated experiments. The `MSE' measures the mean squared distance between the network prediction and the true solution. The coverage rate represents the percentage of the true solution contained within the 95\% confidence interval, while the CI-Width quantifies the corresponding interval width. 
 As discussed above, the dropout method produces inflated coverage rates and   excessively wide confidence intervals, with CI widths ranging from 0.1917 to 0.6609. Similarly, the Bayesian method is sensitive to the choice of $\sigma_f$, with larger $\sigma_f$ values yielding inflated coverage rates,  while smaller $\sigma_f$ values result in underestimated coverage rates.
 In contrast, EFI is free of hyperparameter tuning, making inference
 on the model parameters based  on the observations and the embedded 
 physical law.  It produces the most balanced results, with a mean squared error (MSE) of $1.48\times 10^{-4}$ and a coverage rate of 95.17\%, closely aligning with the nominal 95\% target. Moreover, it provides the shortest CI-width  0.0504. 
 \textcolor{black}{Concrete Dropout achieves a comparable MSE as EFI, indicating similar model estimation accuracy. However, it performs significantly worse in terms of uncertainty quantification, with a coverage rate of only 11.85\%, far below the nominal 95\%. This substantial under-coverage is due to an underestimation of predictive uncertainty, as evidenced by the markedly narrower confidence interval width.}

\section{Real Data Examples}

This section demonstrates the abiliy of the EFI-PINN framework to quantify uncertainty for models learned from real data. We considered two models: the Montroll growth model, 
\begin{equation} \label{Montrollmodel}
    \frac{du}{dt} = k \cdot u \cdot \left(1 - \left(\frac{u}{C}\right)^\theta \right),
\end{equation}
where \(k\), \(C\), and \(\theta\) are unknown parameters; and  
the reaction-diffusion model governed by the generalized 
Porous-Fisher-Kolmogoriv-Petrovsky-Piskunov (P-FKPP) equation: 
\begin{equation} \label{G-P-FKPPmodel}
\frac{du}{dt}=D \frac{\partial}{\partial x} \left[ \left(\frac{u}{K}\right)^m \frac{\partial u}{\partial x}\right]  + ru\left( 1-\frac{u}{K} \right),
\end{equation}
where $D$, $m$, and $r$ are unknown parameters. 
We modeled the Chinese hamster V79 fibroblast tumor cell growth data 
\citep{Montroll} using equation (\ref{Montrollmodel}), and the 
scratch assay data \citep{Jin2006EstimatingTN} using equation (\ref{G-P-FKPPmodel}). 
Additional details about the datasets are provided in the Appendix.
Figure \ref{fig:2PDEmodels} illustrates that the proposed EFI-PINN framework is not only capable of learning the PDE models from data but also effectively quantifying the uncertainty of the models. Further results and discussions can be found in the Appendix. The prediction uncertainty can also be quantified with the proposed method.





\begin{figure}[h]
    \centering
 \begin{tabular}{cc}
        \includegraphics[width=0.45\linewidth,height=1.65in]{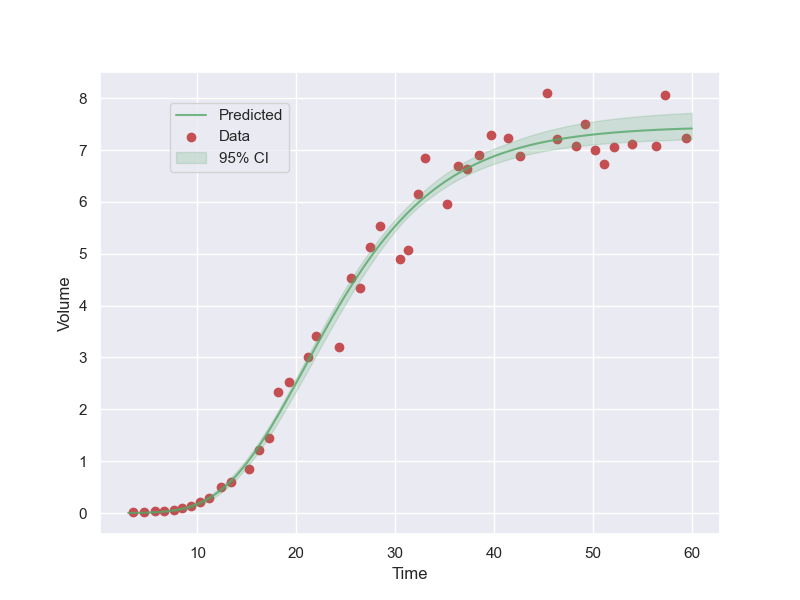} & 
        \includegraphics[width=0.45\linewidth,height=1.5in]{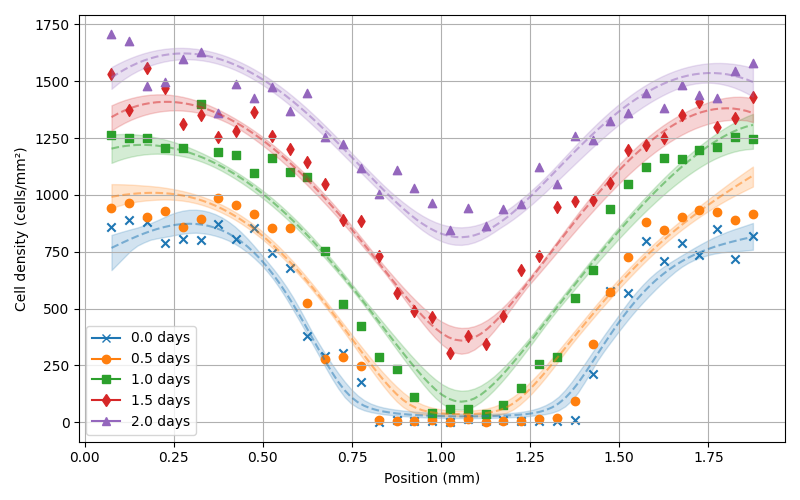}
\end{tabular} 
    \caption{ 
    Confidence bands (shaded areas) for the learned models: 
    (left): Montroll growth model; (right): generalized P-FKPP model. 
    }
    \label{fig:2PDEmodels}
\end{figure}

\section{Conclusion}
This paper presents a novel theoretical framework for EFI, enabling effective uncertainty quantification for PINNs. EFI addresses the challenge of uncertainty quantification through a unique approach of “solving data-generating equations,” transforming it into an objective process that facilitates the construction of honest confidence sets. In contrast, existing methods such as dropout and Bayesian approaches rely on subjective hyperparameters (e.g., dropout rates or priors), undermining the honesty of the resulting confidence sets.
This establishes a new research paradigm for statistical inference of complex models, with the potential to significantly impact the advancement of modern data science.


 Although Algorithm \ref{EFIalgorithm} performs well in our examples, its efficiency can be further improved through several enhancements. For instance, in the latent variable imputation step, the  SGLD algorithm \citep{Welling2011Bayesian} can be replaced with the Stochastic Gradient Hamiltonian Monte Carlo \citep{SGHMC2014}, which offers better sampling efficiency. Similarly, in the parameter updating step, the SGD algorithm can be accelerated with momentum.
Momentum-based algorithms facilitate faster convergence to a good local minimum during the early stages of training. As training progresses, the momentum can be gradually reduced to zero, ensuring alignment with the  convergence theory of EFI. This approach balances computational efficiency with theoretical rigor, enhancing the overall performance of EFI.


While the examples presented focus on relatively small-scale models, this is not a limitation of the approach. In principle, EFI can be extended to large-scale models, such as ResNets and CNNs, through transfer learning—a direction we plan to explore in future work.
 
 \section*{Acknowledgments}
  Liang's research is supported in part by the NSF grant DMS-2210819 and the NIH grant R01-GM152717.\\
  Shih's research is partially supported by MSK Cancer Center Support Grant/Core Grant (P30 CA008748).

\newpage
\bibliography{reference}
\bibliographystyle{asa}


\appendix


\newpage
\section*{NeurIPS Paper Checklist}

\begin{enumerate}

\item {\bf Claims}
    \item[] Question: Do the main claims made in the abstract and introduction accurately reflect the paper's contributions and scope?
    \item[] Answer: \answerYes{} 
    \item[] Justification: The abstract states that we provide theoretical proof for the proposed framework and numerical experiment for overcoming limitations for Bayesian PINN and Dropout PINN.

\item {\bf Limitations}
    \item[] Question: Does the paper discuss the limitations of the work performed by the authors?
    \item[] Answer: \answerYes{} 
    \item[] Justification: In the conclusion, we mentioned that the examples we presented are relatively small scale models, however, the proposed algorithm can be extended easily to large scale models through transfer learning.

\item {\bf Theory assumptions and proofs}
    \item[] Question: For each theoretical result, does the paper provide the full set of assumptions and a complete (and correct) proof?
    \item[] Answer: \answerYes{} 
    \item[] Justification: In this paper, we provide convergence theory for proposed algorithm.

    \item {\bf Experimental result reproducibility}
    \item[] Question: Does the paper fully disclose all the information needed to reproduce the main experimental results of the paper to the extent that it affects the main claims and/or conclusions of the paper (regardless of whether the code and data are provided or not)?
    \item[] Answer: \answerYes{} 
    \item[] Justification: We provide our source code and hyperparameters in the supplementary material.

\item {\bf Open access to data and code}
    \item[] Question: Does the paper provide open access to the data and code, with sufficient instructions to faithfully reproduce the main experimental results, as described in supplemental material?
    \item[] Answer: \answerYes{} 
    \item[] Justification: We provide our source code and hyperparameters in the supplementary material.

\item {\bf Experimental setting/details}
    \item[] Question: Does the paper specify all the training and test details (e.g., data splits, hyperparameters, how they were chosen, type of optimizer, etc.) necessary to understand the results?
    \item[] Answer: \answerYes{} 
    \item[] Justification: We provide our source code and hyperparameters in the supplementary material.

\item {\bf Experiment statistical significance}
    \item[] Question: Does the paper report error bars suitably and correctly defined or other appropriate information about the statistical significance of the experiments?
    \item[] Answer: \answerYes{} 
    \item[] Justification: All our experiments are replicated 100 times with randomly chosen random seeds, each metic is averaged over all 100 runs to ensure statistical significance.

\item {\bf Experiments compute resources}
    \item[] Question: For each experiment, does the paper provide sufficient information on the computer resources (type of compute workers, memory, time of execution) needed to reproduce the experiments?
    \item[] Answer: \answerYes{}
    \item[] Justification: We provide the wall-clock time for the Poisson-1D experiment with different algorithms in the supplementary material.
    
\item {\bf Code of ethics}
    \item[] Question: Does the research conducted in the paper conform, in every respect, with the NeurIPS Code of Ethics \url{https://neurips.cc/public/EthicsGuidelines}?
    \item[] Answer: \answerYes{} 
    \item[] Justification: The research relies solely on simulated and publicly available open-source data. No human, animal, or private data were used. The methods developed and evaluated pose no foreseeable ethical concerns and comply fully with the NeurIPS Code of Ethics.

\item {\bf Broader impacts}
    \item[] Question: Does the paper discuss both potential positive societal impacts and negative societal impacts of the work performed?
    \item[] Answer: \answerYes{} 
    \item[] Justification: This work develops a foundational algorithm for uncertainty quantification in PINNs. While it does not involve sensitive data or direct deployment, it may support future applications in scientific modeling and engineering.
    
\item {\bf Safeguards}
    \item[] Question: Does the paper describe safeguards that have been put in place for responsible release of data or models that have a high risk for misuse (e.g., pretrained language models, image generators, or scraped datasets)?
    \item[] Answer: \answerNA{} 
    \item[] Justification: The paper does not release any pretrained models or datasets with potential for misuse. All experiments are conducted using simulation or publicly available open-source data, and the developed methods pose no foreseeable dual-use concerns.

\item {\bf Licenses for existing assets}
    \item[] Question: Are the creators or original owners of assets (e.g., code, data, models), used in the paper, properly credited and are the license and terms of use explicitly mentioned and properly respected?
    \item[] Answer: \answerYes{} 
    \item[] Justification: All external assets used in this work—such as open-source code and datasets—are properly credited in the paper. We ensured that their licenses (e.g., MIT, Apache 2.0) were respected, and we include relevant citations and links where applicable.

\item {\bf New assets}
    \item[] Question: Are new assets introduced in the paper well documented and is the documentation provided alongside the assets?
    \item[] Answer: \answerNA{} 
    \item[] Justification: The paper does not release new assets yet.

\item {\bf Crowdsourcing and research with human subjects}
    \item[] Question: For crowdsourcing experiments and research with human subjects, does the paper include the full text of instructions given to participants and screenshots, if applicable, as well as details about compensation (if any)? 
    \item[] Answer: \answerNA{} 
    \item[] Justification: This paper does not involve any crowdsourcing or research with human subjects.

\item {\bf Institutional review board (IRB) approvals or equivalent for research with human subjects}
    \item[] Question: Does the paper describe potential risks incurred by study participants, whether such risks were disclosed to the subjects, and whether Institutional Review Board (IRB) approvals (or an equivalent approval/review based on the requirements of your country or institution) were obtained?
    \item[] Answer: \answerNA{} 
    \item[] Justification:  This paper does not involve research with human subjects or any activity requiring IRB or equivalent approval.

\item {\bf Declaration of LLM usage}
    \item[] Question: Does the paper describe the usage of LLMs if it is an important, original, or non-standard component of the core methods in this research? Note that if the LLM is used only for writing, editing, or formatting purposes and does not impact the core methodology, scientific rigorousness, or originality of the research, declaration is not required.
    \item[] Answer: \answerNA{} 
    \item[] Justification: This paper does not involve the use of large language models (LLMs) as part of the core methodology. Any use of LLMs was limited to minor writing or editing support and did not influence the scientific content or originality of the research.
\end{enumerate}

\newpage

\appendix

{\Large\bf Appendix}

\setcounter{section}{0}
\renewcommand{\thesection}{A\arabic{section}}
\setcounter{table}{0}
\renewcommand{\thetable}{A\arabic{table}}
\setcounter{equation}{0}
\renewcommand{\theequation}{A\arabic{equation}}
\setcounter{figure}{0}
\renewcommand{\thefigure}{A\arabic{figure}}

This appendix is organized as follows. Section \ref{sect:sup:EFI} provides a brief review of the EFI method and its self-diagnostic property. Section \ref{sect:EFDlimit} presents the analytical expression of the extended fiducial density (EFD) function of $\bZ_n$. Section \ref{sect:narrow-net} includes an illustrative plot of the proposed narrow-neck $\bw$-network. Section \ref{sect:proof} gives the proof of Theorem \ref{thm:1}. Section \ref{sect:AdditionalStudy} presents additional numerical results. Finally, Section \ref{sect:settings} details the parameter settings used in our numerical experiments.

\section{A Brief Review of the EFI Method and Its Self-Diagnosis} \label{sect:sup:EFI} 


\subsection{The EFI Method}

To ensure a smooth presentation of the EFI method, this section partially overlaps with Section \ref{sect:EFI} of the main text.
Consider a regression model: 
\[
Y=f(\bX,Z,\btheta),
\]
where $Y\in \mathbb{R}$ and $\bX\in \mathbb{R}^{d}$ represent the response and explanatory variables, respectively; $\btheta\in \mathbb{R}^p$ represents the vector of  parameters; 
and $Z\in \mathbb{R}$ represents a scaled random error following  
 a known distribution $\pi_0(\cdot)$.  
Suppose that a random sample of size $n$,  denoted by $\{(y_1,\bx_1), (y_2,\bx_2),\ldots,(y_n,\bx_n)\}$, has been collected from the model. In structural inference, the observations can be expressed in data-generating equations as follows:  
\begin{equation} \label{app:dataGeneqg}
y_i=f(\bx_i,z_i,\btheta), \quad i=1,2,\ldots,n.
\end{equation}
This system of equations consists of $n+p$ unknowns, namely, $\{\btheta, z_1, z_2, \ldots, z_n
\}$, while there are only $n$ equations. Therefore, the values of $\btheta$ cannot be uniquely determined by the data-generating equations, and this lack of uniqueness of unknowns introduces uncertainty in $\btheta$.  

Let $\bZ_n=\{z_1,z_2,\ldots,z_n\}$ denote the unobservable random errors contained in the data,
which are also called latent variables in EFI.  
Let $G(\cdot)$ denote an inverse function/mapping for   $\btheta$, i.e., 
\begin{equation} \label{app:Inveq}
\btheta=G(\bY_n,\bX_n,\bZ_n).
\end{equation}
It is worth noting that the inverse function is generally non-unique. For example, it can be constructed by solving any $p$ equations in (\ref{dataGeneqg}) for $\btheta$. 
As noted by \citet{LiangKS2024EFI}, this non-uniqueness of inverse function 
mirrors the flexibility of frequentist methods, where different 
estimators of $\btheta$ can be constructed to achieve desired properties 
such as efficiency, unbiasedness, and robustness. 



 \begin{figure}[htbp]
     \centering
   \includegraphics[width=0.8\textwidth]{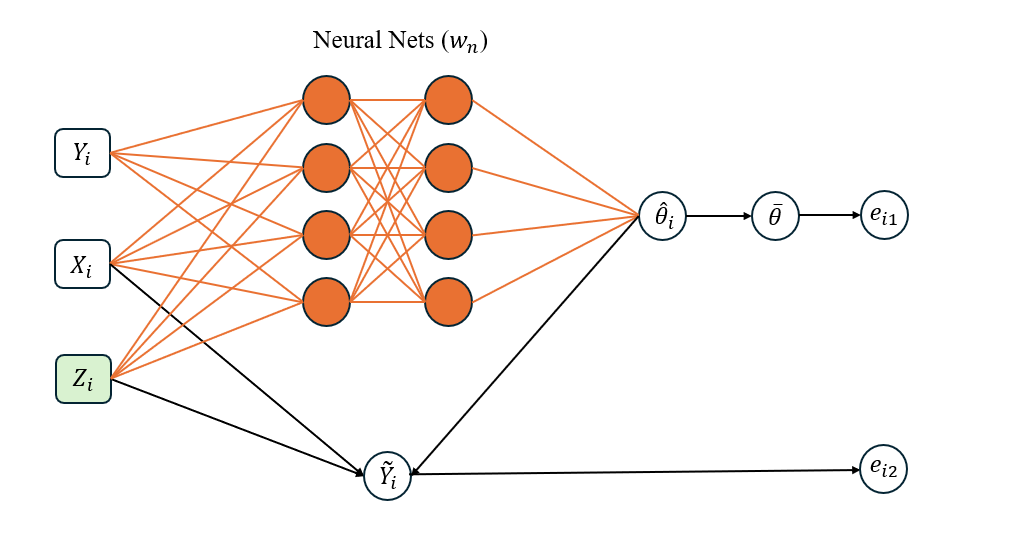}
  \caption{Diagram of EFI given in \citet{LiangKS2024EFI}: 
  The orange nodes and orange links form a deep neural network (DNN), referred to as the $\bw$-network, which is parameterized by $\bw_n$ (with the subscript $n$ indicating its dependence on the training sample size $n$); the green node represents the latent variable to impute; and  the black lines represent deterministic functions.}
     \label{EFInetwork}
\end{figure}

Since the inverse function $G(\cdot)$ is generally unknown, \citet{LiangKS2024EFI} proposed to approximate it using a sparse DNN, see Figure \ref{EFInetwork}  for illustration.
The EFI network  has two output nodes defined, respectively,  by 
\begin{equation} \label{app:outputeq}
\begin{split} 
&e_{i,1} :=\|\hat{\btheta}_i-\bar{\btheta}\|^2, \\
\quad \mbox{and} \quad 
&e_{i,2} :=d(y_i,\tilde{y}_i):=d(y_i,\bx_i, z_i, \bar{\btheta}), 
\end{split}
\end{equation}
where $\tilde{y}_i=f(\bx_i,z_i,\bar{\btheta})$,  $f(\cdot)$ is as specified in (\ref{dataGeneqg}), and $d(\cdot)$ is a function measuring the difference between $y_i$ and $\tilde{y}_i$. That is, the node $e_{i,1}$ quantifies 
the variation of $\hat{\btheta}_i$, while the node $e_{i,2}$ represents the 
fitting error. For a normal linear/nonlinear regression, 
$d(\cdot)$ can be defined as 
\begin{equation} \label{app:deq0}
d(y_i,\bx_i,z_i,\bar{\btheta})=\|y_i-f(\bx_i,z_i,\bar{\btheta})\|^2.
\end{equation}
For logistic regression, it is defined as a ReLU function, see  \citet{LiangKS2024EFI} for details.


 
Let $\hat{\btheta}_i:=\hat{g}(y_i,\bx_i,z_i,\bw_n)$ denote the DNN prediction function parameterized by the weights $\bw_n$ in the EFI network, and let 
\begin{equation} \label{app:thetabareq}
\bar{\btheta}:=\frac{1}{n} \sum_{i=1}^n \hat{\btheta}_i=\frac{1}{n} \sum_{i=1}^n \hat{g}(y_i,\bx_i,z_i,\bw_n),
\end{equation}
which serves as an estimator of the inverse function $G(\cdot)$.  

EFI defines an energy function 
\begin{equation} \label{app:energyfunction11}
 U_n(\bY_n,\bX_n,\bZ_n,\bw_n) = \eta_{\theta} \sum_{i=1}^n\| \hat{\btheta}_i- \bar{\btheta} \|^2 + \sum_{i=1}^n d(y_i,\bx_i,z_i,\bar{\btheta}),    
\end{equation}
where $\eta_{\theta}>0$ is a  regularization parameter, $\hat{\btheta}_i$'s and $\bar{\btheta}$ can be expressed as functions of $(\bY_n,\bX_n,\bZ_n,\bw_n)$,
and $d(\cdot)$ is a function measuring the difference 
between $y_i$ and $\tilde{y}_i$.
The likelihood function is given by 
\begin{equation} \label{app:likelihoodeq} 
\pi_{\epsilon}(\bY_n|\bX_n,\bZ_n,\bw_n) \propto  e^{- U_n(\bY_n,\bX_n,\bZ_n,\bw_n)/\epsilon},
\end{equation}
for some constant $\epsilon$ close to 0. 
As discussed in \citet{LiangKS2024EFI}, the choice of $\eta_{\theta}$ does not affect much on the performance of EFI as long as 
$\epsilon$ is sufficiently small. 
Subsequently, the posterior of $\bw_n$ is given by 
\[
\begin{split}
\pi_{\epsilon}(\bw_n|\bX_n,\bY_n,\bZ_n) &  
\propto \pi(\bw_n) e^{-U_n(\bY_n,\bX_n,\bZ_n,\bw_n)/\epsilon }, 
 \end{split} 
 \]
where $\pi(\bw_n)$ denotes the prior  of $\bw_n$;
and the predictive distribution of $\bZ_n$ is given by  
\[
\begin{split}
 \pi_{\epsilon}(\bZ_n|\bX_n,\bY_n,\bw_n) &  
 \propto \pi_0^{\otimes n}(\bZ_n) e^{-U_n(\bY_n,\bX_n,\bZ_n,\bw_n)/\epsilon},
 \end{split} 
 \]
where $\pi_0^{\otimes n}(\bZ_n)=\prod_{i=1}^n \pi_0(z_i)$ under the assumption that $z_i$'s are independently identically distributed (i.i.d.).
 In EFI, $\bw_n$ is estimated through maximizing the posterior 
 $\pi_{\epsilon}(\bw_n|\bX_n,\bY_n)$ given the observations $\{\bX_n,\bY_n \}$.
  By the Bayesian version of Fisher's identity \citep{SongLiang2020eSGLD}, 
 the gradient equation $\nabla_{\bw_n} \log \pi_{\epsilon}$ $ (\bw_n|\bX_n,\bY_n)=0$ can be re-expressed as
\begin{equation} \label{app:identityeq}
\begin{split}
\nabla_{\bw_n} \log \pi_{\epsilon}(\bw_n|\bX_n,\bY_n) &=\int \Big[\nabla_{\bw_n} \log \pi_{\epsilon}(\bw_n|\bX_n,\bY_n,\bZ_n) \\ 
 & \quad \times \pi_{\epsilon}(\bZ_n|\bX_n,\bY_n,\bw_n)\Big] d\bZ_n=0,     
\end{split}
\end{equation} 
which can be solved using  an adaptive stochastic gradient MCMC algorithm 
\citep{LiangSLiang2022,deng2019adaptive}.
The algorithm works by iterating between the {\it latent variable imputation} and {\it parameter updating} steps,
see Algorithm \ref{EFIalgorithm} for the pseudo-code.
 This algorithm is termed ``adaptive'' because the transition kernel in the latent variable imputation step changes with the working parameter estimate of  
 $\bw_n$. The parameter updating step can be implemented using mini-batch SGD, and the latent variable imputation step can be executed in parallel for each observation $(y_i,\bx_i)$. Hence, the algorithm is scalable with respect to  large datasets.

Under mild conditions for the adaptive SGLD algorithm, it can be shown that  
\begin{equation} \label{app:wconvergence}
\|\bw_n^{(k)} -\bw_n^* \|\stackrel{p}{\to} 0, \quad \mbox{as $k\to \infty$},
\end{equation}
 where $\bw_n^*$ denotes a solution to equation (\ref{identityeq}) and $\stackrel{p}{\to}$ denotes convergence in probability, and  that 
\begin{equation} \label{app:Zconvergence}
\bZ_n^{(k)} \stackrel{d}{\rightsquigarrow} \pi_{\epsilon}(\bZ_n|\bX_n,\bY_n,\bw_n^*), 
\quad \mbox{as $k \to \infty$},
\end{equation}
in 2-Wasserstein distance, where  $\stackrel{d}{\rightsquigarrow}$ denotes weak convergence. 
To study the limit of (\ref{Zconvergence}) as $\epsilon$ decays to 0, i.e., 
\[
p_n^*(\bz|\bY_n,\bX_n,\bw_n^*)=  
\lim_{\epsilon \downarrow 0} \pi_{\epsilon}(\bZ_n|\bX_n,\bY_n,\bw_n^*),
\]
where 
$p_n^*(\bz|\bY_n,\bX_n,\bw_n^*)$ 
is referred to as the extended fiducial 
density (EFD) of $\bZ_n$, 
\citet{LiangKS2024EFI} impose specific conditions on the structure of the $\bw$-network, including that the $\bw$-network is sparse 
and that the output layer width (i.e., the dimension of $\btheta$) is either fixed or grows very slowly with the sample size $n$. 
Under these assumptions, they prove the consistency of $\bw_n^*$ based on the sparse deep learning theory developed in \citet{SunSLiang2021}. This consistency  further implies that 
 \begin{equation} \label{app:mappingest}
 G^*(\bY_n,\bX_n,\bZ_n)= \frac{1}{n} \sum_{i=1}^n \hat{g}(y_i,\bx_i,z_i,\bw_n^*),
 \end{equation}
 serves as a consistent estimator for the inverse 
function/mapping $\btheta=G(\bY_n,\bX_n,\bZ_n)$. 
Refer to Appendix \ref{sect:EFDlimit} for the analytic expression of $p_n^*(\bz|\bY_n,\bX_n,\bw_n^*)$. 



Let $\mathcal{Z}_n=\{\bz \in \mathbb{R}^n: U_n(\bY_n,\bX_n,\bZ_n, \bw_n^*)=0\}$  
denote the zero-energy set.
Under some regularity conditions on the energy function, \citet{LiangKS2024EFI} proved that $\mathcal{Z}_n$ is invariant to the choice of $G(\cdot)$. 
Let $\Theta:=\{\btheta \in \mathbb{R}^p: \btheta=G^*(\bY_n,\bX_n,\bz), \bz\in \mathcal{Z}_n\}$ denote the parameter space of the target model, which represents the set of all possible values of $\btheta$ that $G^*(\cdot)$ takes when $\bz$ runs over $\mathcal{Z}_n$.  
Then, for any function $b(\btheta)$ of 
interest, its EFD $\mu_n^*(\cdot|\bY_n,\bX_n)$ associated with 
 $G^*(\cdot)$ is given by 
\begin{equation} \label{app:EFDeq}
\mu_n^*(B|\bY_n,\bX_n) =\int_{\mathcal{Z}_n(B)} d P_n^*(\bz|\bY_n,\bX_n,\bw_n^*), 
\end{equation}
for any measurable set $B \subset \Theta$,
where $\mathcal{Z}_n(B)=\{\bz\in \mathcal{Z}_n: b(G^*(\bY_n,\bX_n,\bz)) \in B\}$, and $P_n^*(\bz|\bX_n,\bY_n$, $\bw_n^*)$ 
denote the cumulative distribution 
function (CDF) corresponding to $p_n^*(\bz|\bX_n,\bY_n,\bw_n^*)$.
The EFD  provides an uncertainty measure for $b(\btheta)$. 
Practically, 
it can be constructed based on the samples
$\{b(\bar{\btheta}_1), 
b(\bar{\btheta}_2), \ldots, b(\bar{\btheta}_M)\}$, where 
$\{\bar{\btheta}_1, \bar{\btheta}_2, \ldots, \bar{\btheta}_M\}$ denotes  
the fiducial $\bar{\btheta}$-samples collected at step (iv) of Algorithm \ref{EFIalgorithm}. 

Finally, we note that for a neural network model, its parameters are only unique up to certain loss-invariant transformations, such as reordering hidden neurons within the same hidden layer or simultaneously altering the sign or scale of certain connection weights \citep{SunSLiang2021}. 
Therefore, for the $\bw$-network, the consistency of $\bw_n^*$ refers to its consistency with respect to one of the equivalent 
solutions to (\ref{identityeq}), while mathematically $\bw_n^*$
can still be treated as unique.

\subsection{Self-Diagnosis in EFI}


Given the flexibility of DNN models, reliable diagnostics are crucial in deep learning to ensure model robustness and accuracy while identifying potential issues during training. Unlike dropout and Bayesian methods, which lack self-diagnostic capabilities, 
EFI includes a built-in mechanism for self-diagnosis. Specifically, this can be achieved through (i) analyzing the QQ-plot of the imputed random errors, and (ii) verifying that the energy function $U_n$ converges to zero.
 
According to Lemma \ref{lem1:Algconvergence}, the imputed random errors 
$\hat{\bZ}_n$ should follow the same distribution as the true random errors $\bZ_n$. Since the theoretical distribution of $\bZ_n$ is known, 
  the convergence of $\hat{\bZ}_n$ can be assessed using QQ-plot as 
 shown in Figure \ref{fig:efi poisson}(b).  
When $\bZ_n$ has been correctly imputed,  the energy function must converge to zero to ensure the consistency of the inverse function estimator. 
In practice, we can check whether $U_n(\bY_n,\bX_n,\bZ_n,\bw_n) = o(\epsilon)$ as $\epsilon\to 0$.  The validity of inference for the model uncertainty 
can thus be ensured if both diagnostic tests are satisfied. 
This diagnostic method is entirely data-driven, offering a 
simple way for validating the EFI results. 

If the diagnostic tests are not satisfied, the hyperparameters of EFI can be adjusted to ensure both tests are met for valid inference. 
These adjustments may include modifying the width of the neck layer, 
the size of the $\bw_n^{(1)}$-network,  the size of the data-modeling network, as well as tuning the learning rates and iteration numbers used in Algorithm \ref{EFIalgorithm}.

\section{Extended Fiducial Density Function of $\bZ_n$} \label{sect:EFDlimit}

Let $\mathcal{Z}_n=\{\bz \in \mathbb{R}^n: U_n(\bY_n,\bX_n,\bZ_n, \bw_n^*)=0\}$  
denote the zero-energy set, and let
$P_n^*(\bz|\bX_n,\bY_n$, $\bw_n^*)$ 
denote the cumulative distribution 
function (CDF) corresponding to $p_n^*(\bz|\bX_n,\bY_n,\bw_n^*)$.
Under some regularity conditions on the energy function, \citet{LiangKS2024EFI} proved that $\mathcal{Z}_n$ is invariant to the choice of $G(\cdot)$. 
Furthermore, they studied the convergence of 
$\lim_{\epsilon\downarrow 0} \pi_{\epsilon}(\bz|\bX_n,\bY_n,\bw_n^*)$ in  two cases:   $\Pi_n(\mathcal{Z}_{n})>0$ and $\Pi_n(\mathcal{Z}_{n})=0$, 
where $\Pi_n(\cdot)$ denotes the probability measure corresponding to the density function $\pi_0^{\otimes}(\bz)$ on $\mathbb{R}^n$. 
Specifically, 
\begin{itemize} 
 \item[(a)]{\it ($\Pi_n(\mathcal{Z}_{n})>0$)}:  In this case, 
 $p_n^*(\bz|\bX_n,\bY_n,\bw_n^*)$ is given by 
\begin{equation} \label{theoreticalzdista}
\frac{d P_n^*(\bz|\bX_n,\bY_n,\bw_n^*)}{d\bz}= \frac{1}{\Pi_n(\mathcal{Z}_n)} \pi_0^{\otimes n}(\bz), \quad 
\bz \in \mathcal{Z}_n,
\end{equation}
 which is invariant to the choices of the inverse function $G(\cdot)$
 and energy function $U_n(\cdot)$. For example, the logistic regression 
 belongs to this case as shown in \citet{LiangKS2024EFI}. 
 
 \item[(b)] {\it ($\Pi_n(\mathcal{Z}_{n})=0$)}: In this case, $\mathcal{Z}_n$ forms a manifold in $\mathbb{R}^n$ with the highest dimension $p$, and 
 $p_n^*(\bz|\bY_n,\bX_n,\bw_n^*)$ concentrates on the highest dimensional manifold and 
 is given by 
 \begin{equation} \label{theoreticalzdist}
  \frac{d P_n^*(\bz|\bX_n,\bY_n,\bw_n^*)}{d \mathscr{\nu}}(\bz)= \frac{ \pi_0^{\otimes n}(\bz) \left( {\rm det}( \nabla^2_{\bt} U_n(\bz))\right)^{-1/2}} { \int_{\mathcal{Z}_{n}} \pi_0^{\otimes n}(\bz) \left( {\rm det}(\nabla_{\bt}^2 U_n(\bz) \right)^{-1/2} d \mathscr{\nu}}, \quad \bz \in \mathcal{Z}_{n},
 \end{equation} 
 where $\mathscr{\nu}$ is the sum of intrinsic measures on the $p$-dimensional manifold in $\mathcal{Z}_n$, and  $\bt \in \mathbb{R}^{n-p}$ denotes the coefficients of the normalized smooth normal 
vectors in the tubular neighborhood decomposition \citep{Milnor1974CharacteristicC} of $\bz$.
 \end{itemize}
 
By Theorem 3.2 and Lemma 4.2 in \citet{LiangKS2024EFI}, if the target model is noise-additive and $d(\cdot)$ is specified as in (\ref{app:deq0}), 
then  $P_n^*(\bz|\bX_n,\bY_n,\bw_n^*)$ in 
(\ref{theoreticalzdist}) can be reduced to 
\begin{equation} \label{EFDeq:Z}
\frac{dP_n^*(\bz|\bX_n,\bY_n,\bw_n^*)}{d\nu}= \frac{\pi_0^{\otimes n}(\bz)}{\int_{\mathcal{Z}_n} \pi_0^{\otimes n}(\bz) d \nu}. 
\end{equation}
That is, under the consistency of $\bw_n^*$,   
$p_n^*(\bz|\bX_n,\bY_n,\bw_n^*)$ is reduced to a truncated density function  of $\pi_0^{\otimes n}(\bz)$ on the manifold 
$\mathcal{Z}_n$, while $\mathcal{Z}_n$ itself is also invariant to 
the choice of the inverse function.
In other words, for noise-additive models, 
the EFD of $\bZ_n$ is asymptotically invariant to the inverse function we learned given its consistency.

\section{Narrow-Neck $\bw$-Networks} \label{sect:narrow-net}

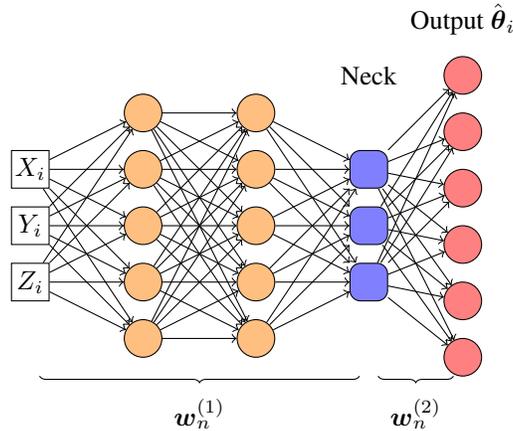
\begin{figure}[htbp]
\vspace{-0.1in}
\begin{center}
\begin{tikzpicture}[node distance=0.75cm]
\tikzstyle{randomnode} = [rectangle, rounded corners, minimum width=0.5cm, minimum height=0.5cm,text centered, draw=black, fill=blue!50]
\tikzstyle{inputnode} = [rectangle, minimum width=0.5cm, minimum height=0.5cm, text centered, draw=black]
\tikzstyle{hiddennode} = [circle, minimum width=0.5cm, minimum height=0.5cm, text centered, draw=black, fill=orange!50]
\tikzstyle{outputnode} = [circle, minimum width=0.5cm, minimum height=0.5cm,text centered, draw=black, fill=red!50]
\tikzstyle{arrow} = [thick,->,>=stealth]

\node(X1)[inputnode]{};
\node(X2)[inputnode,below of=X1]{};
\node(X3)[inputnode,below of=X2]{};
\draw (X1) node[] {$X_i$};
\draw (X2) node[] {$Y_i$};
\draw (X3) node[] {$Z_i$};

\node(Y1)[hiddennode,right of=X1,xshift=0.75cm,yshift=0.75cm]{};
\node(Y2)[hiddennode,right of=X1,xshift=0.75cm,yshift=0cm]{};
\node(Y3)[hiddennode,right of=X2,xshift=0.75cm,yshift=0cm]{};
\node(Y4)[hiddennode,right of=X3,xshift=0.75cm,yshift=0cm]{};
\node(Y5)[hiddennode,right of=X3,xshift=0.75cm,yshift=-0.75cm]{};

\node(V1)[hiddennode,right of=Y1,xshift=0.75cm]{};
\node(V2)[hiddennode,right of=Y2,xshift=0.75cm]{};
\node(V3)[hiddennode,right of=Y3,xshift=0.75cm]{};
\node(V4)[hiddennode,right of=Y4,xshift=0.75cm]{};
\node(V5)[hiddennode,right of=Y5,xshift=0.75cm]{};

\node(W2)[randomnode,right of=V2,xshift=0.75cm]{};
\node(W3)[randomnode,right of=V3,xshift=0.75cm]{};
\node(W4)[randomnode,right of=V4,xshift=0.75cm]{};
\node[above of=W2,yshift=0.5cm]{Neck};

 \node(U1)[outputnode,right of=V1,xshift=2.0cm,yshift=0.5cm]{};
 \node(U2)[outputnode,right of=V2,xshift=2.0cm,yshift=0.5cm]{};
 \node(U3)[outputnode,right of=V3,xshift=2.0cm,yshift=0.5cm]{};
 \node(U4)[outputnode,right of=V4,xshift=2.0cm,yshift=0.5cm]{};
 \node(U5)[outputnode,right of=V5,xshift=2.0cm,yshift=0.5cm]{};
 \node(U6)[outputnode,right of=V5,xshift=2.0cm,yshift=-0.25cm]{};
\node[above of=U1,xshift=0.0cm,yshift=0.0cm]{Output $\hat{\btheta}_i$};

 \draw[->](X1)--(Y1);
 \draw[->](X1)--(Y2);
 \draw[->](X1)--(Y3);
 \draw[->](X1)--(Y4);
 \draw[->](X1)--(Y5);
 \draw[->](X2)--(Y1);
 \draw[->](X2)--(Y2);
 \draw[->](X2)--(Y3);
 \draw[->](X2)--(Y4);
 \draw[->](X2)--(Y5);
 \draw[->](X3)--(Y1);
 \draw[->](X3)--(Y2);
 \draw[->](X3)--(Y3);
 \draw[->](X3)--(Y4);
 \draw[->](X3)--(Y5);

 \draw[->](Y1)--(V1);
 \draw[->](Y1)--(V2);
 \draw[->](Y1)--(V3);
 \draw[->](Y1)--(V4);
 \draw[->](Y1)--(V5);
 \draw[->](Y2)--(V1);
 \draw[->](Y2)--(V2);
 \draw[->](Y2)--(V3);
 \draw[->](Y2)--(V4);
 \draw[->](Y2)--(V5);
 \draw[->](Y3)--(V1);
 \draw[->](Y3)--(V2);
 \draw[->](Y3)--(V3);
 \draw[->](Y3)--(V4);
 \draw[->](Y3)--(V5);
 \draw[->](Y4)--(V1);
 \draw[->](Y4)--(V2);
 \draw[->](Y4)--(V3);
 \draw[->](Y4)--(V4);
 \draw[->](Y4)--(V5);
 \draw[->](Y5)--(V1);
 \draw[->](Y5)--(V2);
 \draw[->](Y5)--(V3);
 \draw[->](Y5)--(V4);
 \draw[->](Y5)--(V5);
 
 \draw[->](V1)--(W2);
 \draw[->](V2)--(W2);
 \draw[->](V3)--(W2);
 \draw[->](V4)--(W2);
 \draw[->](V5)--(W2);

 \draw[->](V1)--(W3);
 \draw[->](V2)--(W3);
 \draw[->](V3)--(W3);
 \draw[->](V4)--(W3);
 \draw[->](V5)--(W3);

 \draw[->](V1)--(W4);
 \draw[->](V2)--(W4);
 \draw[->](V3)--(W4);
 \draw[->](V4)--(W4);
 \draw[->](V5)--(W4);




 \draw[->](W2)--(U1);
 \draw[->](W2)--(U2);
 \draw[->](W2)--(U3);
 \draw[->](W2)--(U4);
 \draw[->](W2)--(U5);
 \draw[->](W2)--(U6); 

 \draw[->](W3)--(U1);
 \draw[->](W3)--(U2);
 \draw[->](W3)--(U3);
 \draw[->](W3)--(U4);
 \draw[->](W3)--(U5);
 \draw[->](W3)--(U6); 

 \draw[->](W4)--(U1);
 \draw[->](W4)--(U2);
 \draw[->](W4)--(U3);
 \draw[->](W4)--(U4);
 \draw[->](W4)--(U5);
 \draw[->](W4)--(U6); 

 \node(s1)[below of=X3,yshift=-0.5cm]{};
 \node(s2)[below of=W4,yshift=-0.5cm]{};
 \node(s3)[below of=U5,yshift=-0.25cm]{};
 \draw[decoration={brace,mirror,raise=0ex},decorate]
  (s1) -- node[midway,yshift=-0.5cm] {$\bw_n^{(1)}$} (s2);
  \draw[decoration={brace,mirror,raise=0ex},decorate]
  (s2) -- node[midway,yshift=-0.5cm] {$\bw_n^{(2)}$} (s3);
\end{tikzpicture}
\end{center}
\vspace{-0.2in}
\caption{A conceptual structure of narrow neck $\bw$-networks.}
\label{fig:RBM}
\vspace{-0.1in}
\end{figure}
 
\section{Proof of Theorem \ref{thm:1}} \label{sect:proof}

\subsection{Asymptotic Equivalence between StoNets and DNNs}
\label{sect:proofEqu}

We first provide a brief review of the theory regarding asymptotic equivalence between the StoNet and DNN models, which was originally  
established in \citet{LiangSLiang2022}. 

Consider a StoNet model: 
\begin{equation}\label{eq:stonet}
    \begin{split}
    \bY_1 &= \bb_1 + \bw_1\bX + \be_1, \\
    \bY_i &= \bb_i + \bw_i\Psi(\bY_{i-1})+ \be_i, \quad i=2,3,\dots,h,\\
    \bY &= \bb_{h+1} + \bw_{h+1}\Psi(\bY_{h}) + \be_{h+1}, 
    \end{split}
\end{equation}
where $\bX\in \mathbb{R}^p$ and $\bY\in \mathbb{R}^{d_{h+1}}$ represent 
the input and response variables, respectively;  
$\bY_i\in \mathbb{R}^{d_i}$ are latent variables;  $\be_i\in \mathbb{R}^{d_i}$ are introduced noise variables;
$\bb_i \in \mathbb{R}^{d_i}$ and $\bw_i \in \mathbb{R}^{d_i \times d_{i-1}}$ are model parameters, $d_0 = p$,  and $\Psi(\bY_{i-1})=(\psi({\bY}_{i-1,1}), \psi({\bY}_{i-1,2}), \ldots,\psi(\bY_{i-1,d_{i-1}}))^T$ is an element-wise activation function for $i = 1, \dots, h + 1$. The StoNet 
defines a latent variable model that 
reformulates the DNN as a composition of many simple regressions.
In this context, we assume that $\be_i \sim N(0, \sigma_i^2 I_{d_i})$ for $i=1,2,\dots, h, h+1$, though other distributions can also be considered 
for $\be_i$'s \citep{SunLiang2022kernel}.

The DNN model corresponding to (\ref{eq:stonet}) is given as follows:  
\begin{equation}\label{eq:DNN}
    \begin{split}
    \tilde{\bY}_1 &= \bb_1 + \bw_1\bX, \\ \tilde{\bY}_i & = \bb_i + \bw_i\Psi(\tilde{\bY}_{i-1}), \quad i=2,3,\dots,h,\\
    \bY &= \bb_{h+1} + \bw_{h+1}\Psi(\tilde{\bY}_{h}) + \be_{h+1}. 
    \end{split}
\end{equation}
Let $\bvartheta=\{\bb_1,\bw_1,\ldots,\bb_{h+1},\bw_{h+1}\}$ denote the 
parameter set of the DNN model. 
Let $\pi_{\rm DNN}(\bY|\bX, \bvartheta)$ denote the likelihood function of the DNN model (\ref{eq:DNN}), and let $\pi(\bY, \bYmis | \bX, \bvartheta)$ denote the likelihood function of the StoNet (\ref{eq:stonet}). Let $Q^*(\bvartheta)=\mathbb{E}(\log\pi_{\rm DNN}(\bY|\bX,\bvartheta))$, where the expectation is taken with respect to the joint distribution $\pi(\bX,\bY)$, \citet{LiangSLiang2022} made the following assumption regarding the network structure, activation function, and the variance of the latent variables of StoNet:

\begin{assumption} \label{ass:1}
(i) Parameter space $\tilde{\bTheta}$ (of $\bvartheta$) is compact;
(ii) For any $\bvartheta \in \tilde{\bTheta}$, $\mathbb{E}(\log\pi(\bY, \bYmis|\bX, \bvartheta))^2<\infty$ 
; 
(iii) The activation function $\psi(\cdot)$ is $c$-Lipschitz continuous;
(iv) The network's widths $d_l$'s and depth $h$ are allowed to increase with $n$; 
(v) The noise introduced in StoNet satisfies the following condition: $\sigma_{1} \leq \sigma_{2} \leq \cdots \leq \sigma_{h+1}$, 
and $d_{h+1} (\prod_{i=k+1}^{h} d_i^2) d_k \sigma^2_{k} \prec \frac{\sigma_{h+1}^2}{h}$ for any $k\in\{1,2,\dots,h\}$.
\end{assumption}



\begin{assumption}\label{ass:2}
(i) $Q^*(\bvartheta)$ is continuous in $\bvartheta$ and uniquely maximized at $\bvartheta^*$;
(ii) for any $\epsilon>0$, $sup_{\bvartheta\in\Theta\backslash B(\epsilon)}Q^*(\bvartheta)$ exists, where $B(\epsilon)=\{\bvartheta:\|\bvartheta-\bvartheta^*\|<\epsilon\}$, and $\delta=Q^*(\bvartheta^*)-sup_{\bvartheta\in\Theta\backslash B(\epsilon)}Q^*(\bvartheta)>0$.
\end{assumption}

Under Assumptions \ref{ass:1} and \ref{ass:2}, \citet{LiangSLiang2022} proved the following lemma.

\begin{lemma} \label{lemma:2} \citep{LiangSLiang2022} Suppose that Assumptions \ref{ass:1}-\ref{ass:2}  hold, and $\pi(\bY,\bYmis|\bX,\bvartheta)$ is continuous in $\bvartheta$. Then
\begin{equation} \label{Lemma: StoNet}
 \begin{split}
 (i)  &  \quad  \sup_{\bvartheta\in \Theta}\Big|\frac{1}{n}\sum_{i=1}^n\log\pi(\bY^{(i)}, \bY^{(i)}_{mis}|\bX^{(i)},\bvartheta)
 -\frac{1}{n}\sum_{i=1}^n\log\pi_{\rm DNN}(\bY^{(i)}|\bX^{(i)},\bvartheta)\Big|\overset{p}{\rightarrow} 0, \\
 (ii)  & \quad \|\hat{\bvartheta}_n-\bvartheta^*\|\overset{p}{\rightarrow}0, \quad \mbox{as $n\to \infty$},
\end{split}
\end{equation}
where  $\bvartheta^*=\arg\max_{\bvartheta\in \tilde{\bTheta}}\mathbb{E}(\log\pi_{\rm DNN}(\bY|\bX,\bvartheta))$ denotes the true parameters of the DNN model as specified in (\ref{eq:DNN}), and 
$\hat{\bvartheta}_n = \arg\max_{\bvartheta\in \tilde{\bTheta}}\{\frac{1}{n}\sum_{i=1}^n\log\pi(\bY^{(i)},\bY_{mis}^{(i)}|\bX^{(i)},\bvartheta)\}$ denotes the maximum likelihood estimator of the StoNet model (\ref{eq:stonet}) with the pseudo-complete data. 
\end{lemma}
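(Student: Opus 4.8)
The plan is to reproduce the asymptotic-equivalence argument of \citet{LiangSLiang2022}: first show that the noise $\{\be_i\}$ injected into the StoNet's hidden layers propagates in a controlled way, so that the pseudo-complete-data log-likelihood of the StoNet and the DNN log-likelihood differ — up to an additive term depending only on the fixed hidden-layer variances $\sigma_i^2$, hence irrelevant to both the maximization and the diagnostic comparison — by a quantity whose average over the $n$ samples is uniformly negligible on the compact parameter space; then deduce part (ii) from a textbook $M$-estimation argument.

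For the noise-propagation step, fix $\bvartheta\in\tilde\bTheta$ and let $\bY_i$ denote the StoNet latent layers in (\ref{eq:stonet}) and $\tilde\bY_i$ the DNN pre-activations in (\ref{eq:DNN}). Subtracting the recursions gives $\bY_i-\tilde\bY_i=\be_i+\bw_i\bigl(\Psi(\bY_{i-1})-\Psi(\tilde\bY_{i-1})\bigr)$, so by the $c$-Lipschitz property of $\Psi$ (Assumption \ref{ass:1}(iii)) and compactness of $\tilde\bTheta$ (Assumption \ref{ass:1}(i), which bounds the weight norms),
\[
\mathbb{E}\|\bY_i-\tilde\bY_i\|^2 \le 2 d_i\sigma_i^2 + 2c^2\|\bw_i\|^2\,\mathbb{E}\|\bY_{i-1}-\tilde\bY_{i-1}\|^2 .
\]
Unrolling through layer $h$ produces $\mathbb{E}\|\bY_h-\tilde\bY_h\|^2 \lesssim \sum_{k=1}^{h}\bigl(\prod_{j=k+1}^{h}c^2\|\bw_j\|^2\bigr)d_k\sigma_k^2$, and, pushing this through the output layer and rescaling by $\sigma_{h+1}^2$, the condition $d_{h+1}\bigl(\prod_{i=k+1}^{h}d_i^2\bigr)d_k\sigma_k^2\prec\sigma_{h+1}^2/h$ of Assumption \ref{ass:1}(v) is exactly what forces each layer's contribution to be $o(1/h)$, so the sum over the $h$ hidden layers stays $o(1)$; all the bounds depend on $\bvartheta$ only through uniformly bounded weight norms, so the control is uniform over $\tilde\bTheta$.

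For part (i), write the StoNet joint density as a product of Gaussian transition kernels; the difference $\log\pi(\bY,\bYmis|\bX,\bvartheta)-\log\pi_{\rm DNN}(\bY|\bX,\bvartheta)$ is then, modulo a $\bvartheta$-free additive term, a quadratic form in the layer discrepancies $\bY_i-\tilde\bY_i$ and in $\Psi(\bY_h)-\Psi(\tilde\bY_h)$, whose expectation is $o(1)$ by the previous step. Combining this with the second-moment bound $\mathbb{E}(\log\pi(\bY,\bYmis|\bX,\bvartheta))^2<\infty$ (Assumption \ref{ass:1}(ii)), continuity in $\bvartheta$, and compactness of $\tilde\bTheta$, a uniform law of large numbers (via stochastic equicontinuity, or a bracketing argument) upgrades the pointwise statement to the claimed uniform convergence. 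Part (ii) is then a standard argmax-consistency argument: part (i) together with the ordinary uniform LLN for the i.i.d.\ DNN log-likelihoods gives $\sup_{\bvartheta\in\tilde\bTheta}\bigl|\frac1n\sum_i\log\pi(\bY^{(i)},\bY^{(i)}_{mis}|\bX^{(i)},\bvartheta)-Q^*(\bvartheta)\bigr|\overset{p}{\to}0$; since Assumption \ref{ass:2} makes $Q^*$ continuous with a well-separated unique maximizer $\bvartheta^*$, the maximizer $\hat\bvartheta_n$ of the pseudo-complete-data objective satisfies $\|\hat\bvartheta_n-\bvartheta^*\|\overset{p}{\to}0$.

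The main obstacle is the noise-propagation step: the layer discrepancies compound \emph{multiplicatively} with depth while both the widths $d_i$ and the depth $h$ are permitted to grow with $n$ (Assumption \ref{ass:1}(iv)), so one must carefully balance the accumulated product of width factors $\prod d_i^2$ against the shrinking injected variances $\sigma_k^2$ — which is precisely the role of the calibration in Assumption \ref{ass:1}(v) — and, moreover, obtain the bound \emph{uniformly} over $\tilde\bTheta$ rather than at a single parameter value, which is what allows the uniform LLN in the subsequent step to go through.
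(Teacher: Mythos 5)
The paper never proves this lemma: it is imported verbatim from \citet{LiangSLiang2022} (``Under Assumptions \ref{ass:1} and \ref{ass:2}, \citet{LiangSLiang2022} proved the following lemma''), so there is no in-paper argument to compare against line by line. Your overall route --- layer-wise propagation of the injected noise using the $c$-Lipschitz activation and the compactness of $\tilde{\bTheta}$, with Assumption \ref{ass:1}(v) calibrating the accumulated width/depth factors against the shrinking $\sigma_k^2$, followed by a uniform LLN and a standard argmax-consistency step for (ii) --- is indeed the strategy of the cited source, and your identification of the multiplicative compounding across growing depth/width as the crux is accurate.

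There is, however, one concrete soft spot you should not wave away: the hidden-layer Gaussian factors of the complete-data likelihood. You assert that $\log\pi(\bY,\bYmis|\bX,\bvartheta)-\log\pi_{\rm DNN}(\bY|\bX,\bvartheta)$ is, ``modulo a $\bvartheta$-free additive term,'' a quadratic form in the layer discrepancies. First, each hidden-layer term $-\tfrac{d_l}{2}\log(2\pi\sigma_l^2)-\tfrac{1}{2\sigma_l^2}\|\bY_l-\bb_l-\bw_l\Psi(\bY_{l-1})\|^2$ depends on $\bvartheta$ through $(\bb_l,\bw_l)$; if $\bYmis^{(i)}$ is held fixed while $\bvartheta$ ranges over all of $\tilde{\bTheta}$ (which is how the pseudo-complete-data objective in the statement reads), the residual is $O(1)$ away from the imputing parameter and is divided by $\sigma_l^2\to 0$, so your claimed uniform-in-$\bvartheta$ control breaks down; the argument only works if you make explicit how the latent variables are coupled to (re-imputed under) each $\bvartheta$, and that coupling is precisely where the cited proof does its work. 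Second, even at the imputing parameter the leftover term $\sum_l\bigl[-\tfrac{d_l}{2}\log(2\pi\sigma_l^2)-\|\be_l\|^2/(2\sigma_l^2)\bigr]$ is $\bvartheta$-free but not $o(1)$, so it is ``irrelevant to the maximization'' (hence harmless for part (ii)) but it does not disappear from part (i) as literally stated; you need either to show why this contribution is negligible under the stated scaling of the $\sigma_l$'s or to acknowledge that (i) is being established only up to such an additive normalization. As written, your sketch establishes (ii) and the content actually used downstream in the paper, but it does not yet deliver claim (i) in the exact form quoted.
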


Lemma \ref{lemma:2} implies that the StoNet and DNN are asymptotically equivalent as the training sample size $n$ becomes large, and it forms the basis for the bridging the StoNet and the DNN. 
The asymptotic equivalence can be elaborated from two perspectives. First, suppose the DNN model (\ref{eq:DNN}) is true. Lemma \ref{lemma:2} implies that when $n$ becomes large, the weights of the DNN can be learned by training a StoNet of the same structure with $\sigma_i^2$'s satisfying Assumption \ref{ass:1}-(v). 
On the other hand,  suppose that the  StoNet (\ref{eq:stonet}) is true, and then 
 Lemma \ref{lemma:2} implies that {\it for any StoNet satisfying Assumptions \ref{ass:1} \& \ref{ass:2}, the weights $\bvartheta$ can be learned by training a DNN of the same structure when the training sample size is large.}


\subsection{Justification for Condition (\ref{w2req})-(ii)}  \label{sect:justification}

To justify this condition, we first introduce the following lemma: 

 \begin{lemma} \label{lemma:eigen} 
Consider a random matrix $\mathbb{M}\in \mathbb{R}^{n\times d}$ with $n \geq d$.  Suppose that the 
eigenvalues of $\mathbb{M}^T \mathbb{M}$ are upper bounded, i.e., $\lambda_{\max}(\mathbb{M}^T\mathbb{M}) \leq \kappa_{\max}$ for 
some constant $\kappa_{\max}>0$. 
Let $\Psi(\mathbb{M})$ denote an elementwise transformation of $\mathbb{M}$. Then 
$\lambda_{\max}\left( (\Psi(\mathbb{M}))^T (\Psi(\mathbb{M})) \right) \leq \kappa_{\max}$
for the {\it tanh}, {\it sigmoid} and ReLU transformations. 
\end{lemma}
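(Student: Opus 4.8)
The plan is to reduce the spectral-norm claim to an entrywise comparison of $\Psi(\mathbb{M})$ with $\mathbb{M}$. First I would invoke the elementary bound $\lambda_{\max}(A^{T}A) \le \mathrm{tr}(A^{T}A) = \|A\|_{F}^{2} = \sum_{i,j} A_{ij}^{2}$, valid for any matrix $A$ because the eigenvalues of $A^{T}A$ are nonnegative and sum to $\mathrm{tr}(A^{T}A)$. Taking $A = \Psi(\mathbb{M})$ reduces the problem to bounding $\sum_{i,j}\Psi(\mathbb{M}_{ij})^{2}$.

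Next I would establish, for each activation, a pointwise control of $|\Psi(x)|$. For ReLU this is immediate, since $0 \le \max(x,0) \le |x|$. For $\tanh$, oddness together with $\tanh(0)=0$ and $|\tanh'(x)| = 1 - \tanh^{2}(x) \le 1$ gives $|\tanh(x)| \le |x|$ by the mean value theorem. Hence $\Psi(\mathbb{M}_{ij})^{2} \le \mathbb{M}_{ij}^{2}$ entrywise, so $\|\Psi(\mathbb{M})\|_{F}^{2} \le \|\mathbb{M}\|_{F}^{2}$ and therefore $\lambda_{\max}\big(\Psi(\mathbb{M})^{T}\Psi(\mathbb{M})\big) \le \|\mathbb{M}\|_{F}^{2}$. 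The sigmoid does not contract toward the origin, since $\sigma(0) = 1/2$, so for it I would instead use plain boundedness, $0 < \sigma(x) < 1$, which yields $\|\Psi(\mathbb{M})\|_{F}^{2} < nd$. In every case $\lambda_{\max}(\Psi(\mathbb{M})^{T}\Psi(\mathbb{M}))$ stays finite whenever $\mathrm{tr}(\mathbb{M}^{T}\mathbb{M})$ does, and applying the same trace bound directly to the $d_h \times d_h$ matrix $\mathbb{E}(\bm_{i}\bm_{i}^{T})$ gives $\lambda_{\max}(\mathbb{E}(\bm_{i}\bm_{i}^{T})) \le \mathbb{E}\|\bm_{i}\|^{2} \le \mathbb{E}\|\bmu_{i} + \bv_{i}\|^{2} < \infty$, the last bound following from compactness of the parameter space of the $\bw_n^{(1)}$-network (Assumption \ref{ass:1}-(i)) together with Gaussianity of $\bv_{i}$; this is exactly Condition (\ref{w2req})-(ii).

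The step I expect to be the real obstacle is matching the \emph{same} constant $\kappa_{\max}$ on both sides. Because $\|\mathbb{M}\|_{F}^{2} = \sum_{k}\lambda_{k}(\mathbb{M}^{T}\mathbb{M})$ can be as large as $d\,\lambda_{\max}(\mathbb{M}^{T}\mathbb{M})$, the trace route only delivers $\lambda_{\max}(\Psi(\mathbb{M})^{T}\Psi(\mathbb{M})) \le d\,\kappa_{\max}$, and no activation-free inequality with the identical constant can hold for ReLU: for $\mathbb{M}$ with rows $(1,1)$ and $(1,-1)$ one has $\lambda_{\max}(\mathbb{M}^{T}\mathbb{M}) = 2$ whereas its ReLU image attains $\lambda_{\max} = (3 + \sqrt{5})/2 > 2$, the activation having destroyed the cancellation in $\mathbb{M}^{T}\mathbb{M}$. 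This loss is harmless for the intended use, since Condition (\ref{w2req})-(ii) only needs a finite bound independent of $n$; to make the displayed inequality hold verbatim one should read $\kappa_{\max}$ as a bound on $\mathrm{tr}(\mathbb{M}^{T}\mathbb{M}) = \|\mathbb{M}\|_{F}^{2}$ rather than on $\lambda_{\max}(\mathbb{M}^{T}\mathbb{M})$ (with $nd$ in place of $\kappa_{\max}$ for the sigmoid). If a genuinely spectral statement with the original constant is wanted, I would restrict the lemma to nonnegative-entry inputs, where for ReLU and $\tanh$ entrywise domination does transfer to the spectral norm.
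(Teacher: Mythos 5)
Your route is genuinely different from the paper's, and it exposes a real problem rather than containing one. The paper's own proof of Lemma \ref{lemma:eigen} is a two-line citation: the ReLU case is attributed to Lemma 5 of \citet{Dittmer2018SingularVF}, and the tanh and sigmoid cases are dispatched by saying that $1$-Lipschitz continuity lets the same lemma apply. Your counterexample is correct: for $\mathbb{M}=\left(\begin{smallmatrix}1&1\\1&-1\end{smallmatrix}\right)$ one has $\lambda_{\max}(\mathbb{M}^T\mathbb{M})=2$ while the ReLU image $\left(\begin{smallmatrix}1&1\\1&0\end{smallmatrix}\right)$ has $\lambda_{\max}=(3+\sqrt{5})/2>2$, so the displayed inequality with the \emph{same} constant $\kappa_{\max}$ is false for general matrices. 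Since ReLU is itself $1$-Lipschitz with $\Psi(0)=0$, this also shows that no argument based only on Lipschitzness (the paper's stated reason for tanh and sigmoid) can deliver the claim; the sigmoid case fails even more directly because $\Psi(0)=1/2$ (take $\mathbb{M}=0$ and $\kappa_{\max}$ small). What is true unconditionally is the vector-level contraction $\|\Psi(\bv)\|\le\|\bv\|$ for ReLU and tanh, i.e., a statement about the layer map acting on vectors, not about the Gram spectrum of the entrywise-transformed data matrix; the paper's citation is being stretched across exactly that gap, which your example makes concrete.

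Your substitute argument is sound as far as it goes: $\lambda_{\max}(A^TA)\le\|A\|_F^2$, the pointwise bounds $|\mathrm{ReLU}(x)|\le|x|$ and $|\tanh(x)|\le|x|$, and boundedness of the sigmoid correctly yield $\lambda_{\max}\bigl(\Psi(\mathbb{M})^T\Psi(\mathbb{M})\bigr)\le d\,\kappa_{\max}$ (resp.\ $\le nd$), and both of your repairs — reading $\kappa_{\max}$ as a bound on $\mathrm{tr}(\mathbb{M}^T\mathbb{M})$, or restricting to nonnegative-entry inputs, where entrywise domination does transfer to the spectral norm — are valid, the latter because a nonnegative matrix attains its spectral norm at nonnegative singular vectors. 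Two caveats. First, to be precise, you have not proved the lemma as stated (nor could you, since it is false for ReLU and sigmoid); you have proved a strictly weaker statement, which is the honest reading but should be flagged as a correction to the lemma rather than a proof of it. Second, ``harmless for the intended use'' needs bookkeeping: the justification of condition (\ref{w2req})-(ii) applies the lemma layerwise for unbounded activations, and Assumption \ref{ass:1}-(iv) allows the widths to grow with $n$, so the extra factor $d$ is innocuous only if the relevant widths (e.g., $d_h\prec n$ and the layer feeding the neck) are treated as bounded or the factor is explicitly absorbed into $\rho_{\max}$; with that proviso, your weakened bound does support the downstream argument, whereas the paper's proof-by-citation does not actually establish the stated spectral inequality.
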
 
\begin{proof}  
For ReLU, the result follows from Lemma 5 of \citet{Dittmer2018SingularVF}. For 
{\it tanh} and {\it sigmoid}, since they are Lipschitz continuous with a Lipschitz constant of 1, Lemma 5 of  \citet{Dittmer2018SingularVF} also applies. 
\end{proof}

Since the connection weights take values in a compact space $\tilde{\bTheta}$, 
 there exists a constant $0<\tau_{\max}<\infty$ such that 
\[
\lambda_{\max}(\bw_l \bw_l^T) \leq  \tau_{\max},
\]
for any  $l=1,2,\ldots,h+1$, where $\bw_l\in \mathbb{R}^{d_{l} \times d_{l-1}}$ is the weight matrix of the DNN at layer $l$.

Let $\mathbb{M}_l\in \mathbb{R}^{n\times d_l}$ denote the output of  hidden layer $l \in \{1,2,\ldots,h\}$ of the StoNet; that is, 
\begin{equation} \label{layereq}
\begin{split}
\mathbb{M}_l & =\Psi( \Psi(\mathbb{M}_{l-1}) \bw_l^T), \quad l=1,2,\ldots, h-1, \\
\mathbb{M}_h &= \Psi(\Psi(\mathbb{M}_{h-1}) \bw_h^T+\mathbb{V}). 
\end{split} 
\end{equation}
Consider the case that the activation functions are bounded, such as
{\it sigmoid} and {\it tanh}. Then 
the  matrix $\mathbb{E}(\mathbb{M}_h^T \mathbb{M}_h)$ has the eigenvalues upper 
bounded by $n \kappa_{\max}$ for some constant $0<\kappa_{\max}< \infty$.

For the case that the activation functions are unbounded, such as 
{\it ReLU} or {\it leaky ReLu}, we can employ the layer-normalization method in 
training. In this case, by Lemma \ref{lemma:eigen}, 
the  matrix $\Psi(\mathbb{M}_{h-1})^T \Psi(\mathbb{M}_{h-1})$ 
has the eigenvalues upper bounded by $n \kappa_{\max}$, provided that the 
eigenvalues of the  matrix $\mathbb{M}_{h-1}^T \mathbb{M}_{h-1}$ is bounded  by $n \kappa_{\max}$ after layer-normalization.  

Let $\tilde{\mathbb{M}}_h= \Psi(\mathbb{M}_{h-1}) \bw_h^T$. Then, by an extension of Ostrowski's theorem, see Theorem 3.2 of \citet{Higham1998ModifyingTI}, 
 the eigenvalues of the matrix $\tilde{\mathbb{M}}_h^T \tilde{\mathbb{M}}_h= 
\bw_h (\Psi(\mathbb{M}_{h-1}))^T \Psi (\mathbb{M}_{h-1}) \bw_h^T$ is bounded by  
\[
\lambda_{\max}(\tilde{\mathbb{M}}_h^T \tilde{\mathbb{M}}_h) \leq n\kappa_{\max} \tau_{\max}.
\]
Also, we have $\frac{1}{n}\lambda_{\max}(\mathbb{E}(\mathbb{V}^T \mathbb{V}))=\sigma_v^2$  under the assumption (*). Then, with the use of 
Lemma \ref{lemma:eigen} and the Cauchy-Schwarz inequality,  we have 
\[ 
\frac{1}{n}\lambda_{\max}(\mathbb{E}(\mathbb{M}_h^T \mathbb{M}_h)) \leq \kappa_{\max} \tau_{\max} +  \sigma_v^2 + 2  
\sigma_v \sqrt{\kappa_{\max} \tau_{\max}},
\]
as the sample size $n$ becomes large. This concludes the proof 
for condition (\ref{w2req})-(ii).

\subsection{Proof of Theorem \ref{thm:1}} \label{sect:proof3.1}

\begin{proof} 
First, we note that $\sigma_{\hat{\theta}}^2$,
the variance of each component of
$\hat{\btheta}_i (\in \mathbb{R}^p)$,  is of the order 
$O(\epsilon/\eta)$ under the setting of the energy function (\ref{app:energyfunction11}). 
By setting $\sigma_u^2 \prec \frac{\epsilon}{\eta h d_h p}$, it is easy to verify that the $\bw$-network 
satisfies Assumption \ref{ass:1}. Therefore, under the additional regularity conditions given in Assumption \ref{ass:2}, 
the proposed stochastic $\bw$-network is 
asymptotically equivalent to the original $\bw$-network.
Furthermore, the stochastic $\bw$-network can be trained by training the original $\bw$-network using Algorithm \ref{EFIalgorithm}.
 
Next, we prove that the inverse mapping learned through training the stochastic 
$\bw$-network is consistent.
For Lemma \ref{lem1:Algconvergence}, \citet{LiangKS2024EFI} first established the convergence 
of the weights to $\bw_n^*$, and subsequently established the weak convergence of 
the latent variables. Therefore, 
due to the Markov chain nature of Algorithm \ref{EFIalgorithm}, 
it suffices to prove that the inverse mapping produced by the StoNet is consistent, assuming the latent variables have been correctly imputed 
  (i.e., the true values of $\bZ_n$ are known). 
  Once the inverse mapping's consistency is established, the latent variables will be correctly imputed in the next iteration, owing to the algorithm's equation-solving nature, which is ensured by setting $\epsilon \to 0$ upon   convergence to the zero-energy region. 
This ensures that algorithm remains in its equilibrium, enabling accurate
inference of model uncertainty.

Under the StoNet setting,  estimation of $\bw_n^{(2)}$ involves solving  $p$ low-dimensional regressions.  
In particular,  given $\mathbb{M}=(\bm_1,\bm_2,\ldots,\bm_n)^T \in \mathbb{R}^{n\times d_l}$, solving each of the regressions contributes a 
parameter estimation error:
\begin{equation} \label{boundeq1}
\mathbb{E} \| \hat{\bxi}_j^{m}-\bxi_j^*\|^2 = \frac{\sigma_{\hat{\theta}}^2}{n} \mbox{Tr}([\mathbb{E}(\bm_i\bm_i^T)]^{-1})  \leq   
\frac{d_h \sigma_{\hat{\theta}}^2}{n \rho_{\min}}  = O(\frac{d_h \epsilon}{n \eta}),
\end{equation}
where $\bxi_i^*$ denotes the $i$th column of $\bw_n^{*(2)}$, i.e., 
$\bw_n^{*(2)}:=(\bxi_1^*,\ldots,\bxi_p^*)^T \in \mathbb{R}^{p \times d_h}$;  $\hat{\bxi}_j^{m}= 
(\mathbb{M}^T\mathbb{M})^{-1} \mathbb{M}^T \hat{\btheta}^{(j)}$ is the OLS estimator for the regression coefficients; and $\hat{\btheta}^{(j)}=(\hat{\theta}_{1}^{(j)},\ldots,\hat{\theta}_{n}^{(j)})^T$.  

Let $\hat{\bw}^{m(2)}=(\hat{\bxi}_1^{m}, \ldots, \hat{\bxi}_p^{m})^T \in \mathbb{R}^{p \times d_h}$.
By a fundamental property of linear regression, the mean prediction 
$\hat{\btheta}_i^{*}=\hat{\bw}^{m(2)} \bm_i$ is consistent with 
respect to  $\btheta^*=\mathbb{E}(\hat{\btheta}_i)$, 
provided that $d_h \prec n$, the $\bm_i$'s  
extract all $\hat{\btheta}$-relevant information from the input variables,  
and the $\bw$-network has sufficient capacity to establish the linear relationship $\hat{\btheta}_i \sim \bm_i$ for $i=1,2,\ldots,n$.  
The equality $\btheta^*=\mathbb{E}(\hat{\btheta}_i)$ is guaranteed by 
the setting of $\epsilon \to 0$ and 
by the construction of the energy function, 
which approaches to zero if and only if the empirical mean $\frac{1}{n} \sum_{i=1}^n\hat{\btheta}_i$ converges to $\btheta^*$ and the variance of $\hat{\btheta}_i$ approaches to zero.  
Furthermore, as shown in \citet{LiangSLiang2022}, 
the stochastic layer effectively provides a sufficient dimension reduction for the input variables.

Finally, we prove that the inverse mapping obtained by Algorithm \ref{EFIalgorithm} during the training of the non-stochastic $\bw$-network is also consistent. 
Let $\tilde{\btheta}_i^*= \bw^{*(2)}\Psi(\bmu_i)$. 
Let $\tilde{\theta}_i^{*(j)}$
and $\hat{\theta}_i^{*(j)}$ denote the $j$th element of 
$\tilde{\btheta}_i^*$ and 
$\hat{\btheta}_i^*$, respectively. From  equation (*), we have  
\begin{equation} \label{L1bound}
\begin{split} 
\mathbb{E} |\hat{\theta}_{i}^{*(j)}-\tilde{\theta}_{i}^{*(j)}| & = 
\mathbb{E} |(\hat{\bxi}_j^{m})^T \bm_i - (\bxi_j^{*})^T
\Psi(\bmu_i) | \\
 & \leq  \mathbb{E} | (\hat{\bxi}_j^{m} - \bxi_j^{*})^T \bm_i| + 
   \mathbb{E} | (\bxi_j^{*})^T (\bm_i-\Psi(\bmu_i))|  \\ 
 & \leq (\mathbb{E} \|\hat{\bxi}_j^{m} - \bxi_j^{*}\|^2)^{1/2}  
   (\mathbb{E} \|\bm_i\|^2)^{1/2} + (\mathbb{E} \|\bxi_j^*\|^2)^{1/2} ( \mathbb{E}\|\bm_i-\Psi(\bmu_i)\|^2)^{1/2} \\
 & \leq (\mathbb{E} \|\hat{\bxi}_j^{m} - \bxi_j^{*}\|^2)^{1/2} (\mbox{Tr}( \mathbb{E}(\bm_i\bm_i^T))^{1/2} + c (\mathbb{E}\|\bxi_j^*\|^2)^{1/2}  (\mbox{Tr}(\sigma_v^2 I_{d_h}))^{1/2} \\
 & \lesssim \sqrt{\frac{d_h \epsilon}{n \eta}} \sqrt{d_h \rho_{\max}} + c 
 d_{\tilde{\Theta}}  d_h \sigma_v,   \\
\end{split} 
\end{equation}
where $d_{\tilde{\Theta}}$ denotes the radius of the parameter space $\tilde{\bTheta}$ (centered 
at 0), the second inequality follows from 
Cauchy-Schwarz inequality, the third inequality follows from the Taylor expansion for $\Psi(\bmu_i+\bv_i)$ (at the point $\bmu_i$),  and the last inequality 
follows from (\ref{boundeq1}), condition (\ref{w2req})-(ii), and the boundedness of $\bxi_j^*$ as stated in Assumption \ref{ass:1}-(i). 
 
 Substituting $\sigma_v \prec \sqrt{\frac{\epsilon}{\eta h d_h p}}$ and ignoring some constant factors in (\ref{L1bound}), we obtain 
\[
\mathbb{E}\|\hat{\btheta}_i^* - \tilde{\btheta}_i^*\|_1  
\leq p \mathbb{E} |\hat{\theta}_{i}^{*(j)}-\tilde{\theta}_{i}^{*(j)}|
\lesssim p d_h \sqrt{\frac{\epsilon}{n}} + 
 \sqrt{ \frac{\epsilon p d_h}{h}},
\]
where $\|\cdot\|_1$ denotes the $l_1$-norm of a vector. 

By setting $\epsilon \prec \min\{ \frac{n}{p^2 d_h^2}, \frac{h}{p d_h}\}$, we  
have $\mathbb{E}\|\hat{\btheta}_i^* - \tilde{\btheta}_i^*\|_1 =o(1)$, 
which implies $\tilde{\btheta}_i^*$ is also  
consistent with respect to  $\btheta^*=\mathbb{E}(\hat{\btheta}_i)$. 
Consequently, the inverse mapping
$\frac{1}{n} \sum_{i=1}^n \tilde{\btheta}_i^*$ produced by 
Algorithm \ref{EFIalgorithm} in training the non-stochastic $\bw$-network
is also consistent with respect to $\btheta^*$. 
This concludes the proof of the theorem. 
\end{proof}


\section{Additional Numerical Results} \label{sect:AdditionalStudy}

Regarding uncertainty quantification, we note that there are two types of
uncertainties:

\begin{enumerate}
    \item \textbf{Aleatoric uncertainty:} This refers to the \textit{irreducible noise} inherent in the data-generating process. It can be modeled as
    $$
    y_i = f(x_i \mid \theta) + \epsilon_i,
    $$
    where $\epsilon_i \sim \mathcal{N}(0, \sigma^2)$. Estimating the unknown variance $\sigma^2$ corresponds to quantifying the aleatoric uncertainty (system random error). This is precisely what we addressed in the last experiment included in our previous rebuttal, where $\sigma^2 = 0.1$ was treated as unknown.
    \item \textbf{Epistemic uncertainty:} This refers to the \textit{reducible estimation error} due to limited data or incomplete knowledge of the true model (see, for example, model comparison in Section A5.8). 
    In classical statistics, confidence intervals quantify epistemic uncertainty: as the dataset size increases, epistemic uncertainty—and thus the width of the confidence interval—decreases.
\end{enumerate}

In the following section, we demonstrate through different examples that 
EFI is able to accurately quantity both types of uncertainties. We use the coverage rate as the key metric to quantify epistemic uncertainty. The coverage rate can reach the nominal level only when the parameter estimates are unbiased and the uncertainty estimation is accurate.  

\subsection{1-D Poisson Equation}

Figure \ref{fig:pinn_traj} and Figure \ref{fig:bpinn_traj}
provide typical trajectories learned for the 1-D Poisson model (\ref{poissoneq}) 
using the methods: PINN, Dropout, and Bayesian PINN. For the ablation study, we vary the noise standard deviation from $0.01$ to $0.1$ to further assess the validity and accuracy of the proposed method in uncertainty quantification. The results, presented in Tables~\ref{table:sd0.01},~\ref{table:sd0.025}, and~\ref{table:sd0.1}, show that under different noise levels, the EFI algorithm consistently achieves a 95\% coverage rate for the 95\% confidence intervals.

To further investigate the relationship between confidence intervals and epistemic uncertainty, we conducted an additional experiment using two different sample sizes: 20 and 80. The results, presented in Table~\ref{table:sample size different}, clearly show that as the sample size increases, the width of the confidence interval decreases while the coverage rate remains consistent. This demonstrates that the confidence interval effectively captures the reducible nature of epistemic uncertainty.

\begin{figure*}[!ht]
    \centering
    \begin{tabular}{ccc}
        \begin{subfigure}[b]{0.35\textwidth}
            \includegraphics[width=\textwidth]{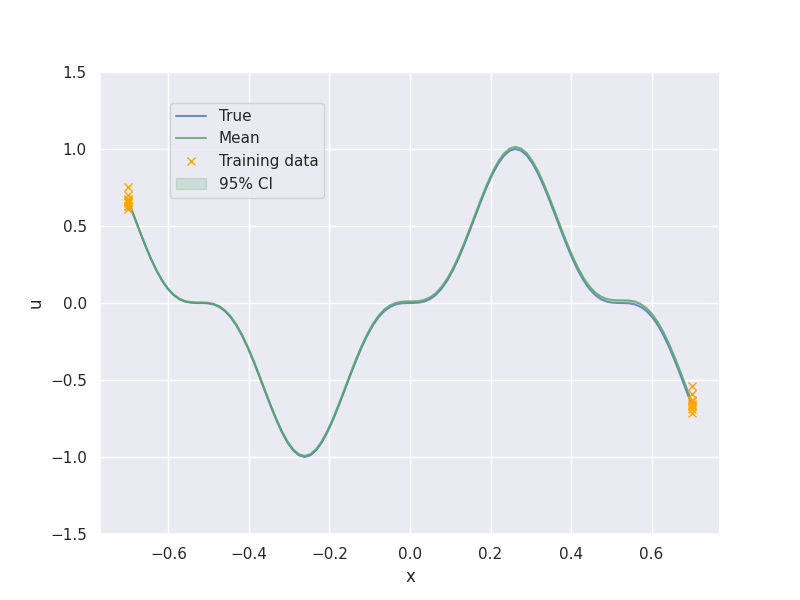}
            \caption{PINN}
        \end{subfigure} &
        \begin{subfigure}[b]{0.35\textwidth}
            \includegraphics[width=\textwidth]{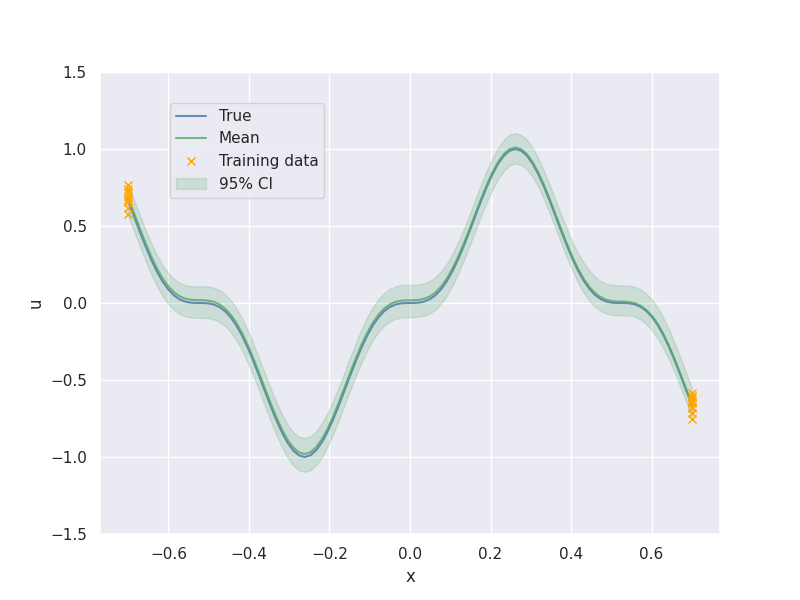}
            \caption{PINN dropout $0.5\%$}
        \end{subfigure} \\
        \begin{subfigure}[b]{0.35\textwidth}
            \includegraphics[width=\textwidth]{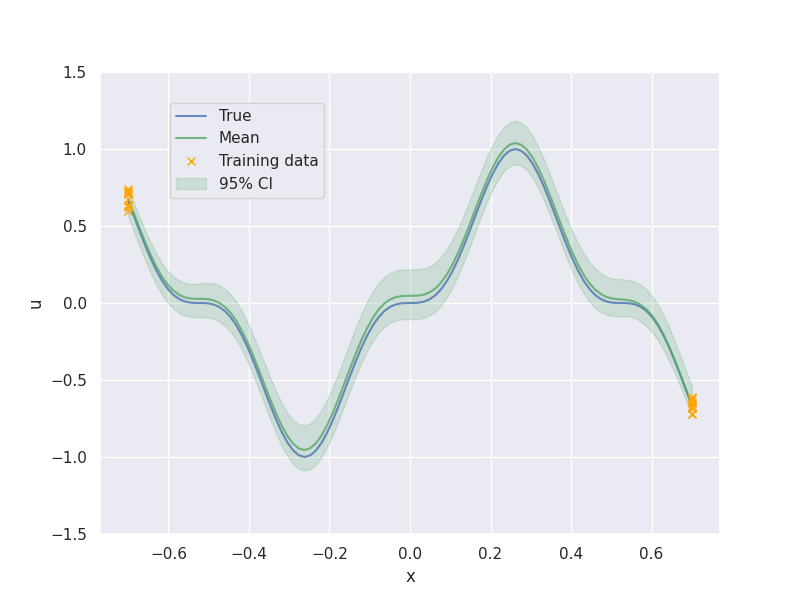}
            \caption{PINN dropout $1\%$}
        \end{subfigure} &
        \begin{subfigure}[b]{0.35\textwidth}
            \includegraphics[width=\textwidth]{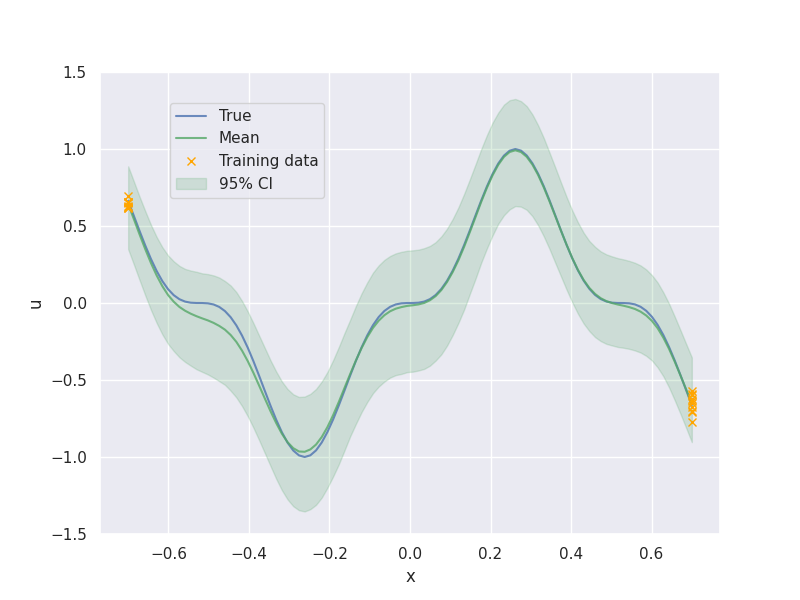}
            \caption{PINN dropout $5\%$}
        \end{subfigure}
    \end{tabular}
    \caption{ 1-D Poisson model (\ref{poissoneq}): 
    (a) Trajectory learned by PINN (without uncertainties); 
     (b)-(d) Trajectories learned 
     by Dropout with different dropout rates.}
    \label{fig:pinn_traj}
\end{figure*}

\begin{figure*}[!ht]
    \centering
    \begin{subfigure}[b]{0.32\textwidth}
        \includegraphics[width=\textwidth]{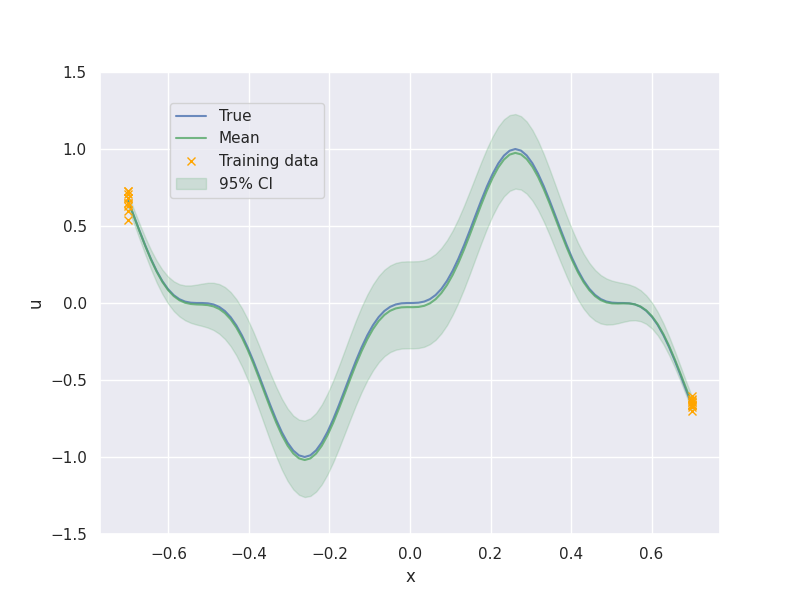}
        \caption{$\sigma_{f}=0.05$}
    \end{subfigure}
    \hfill
    \begin{subfigure}[b]{0.32\textwidth}
        \includegraphics[width=\textwidth]{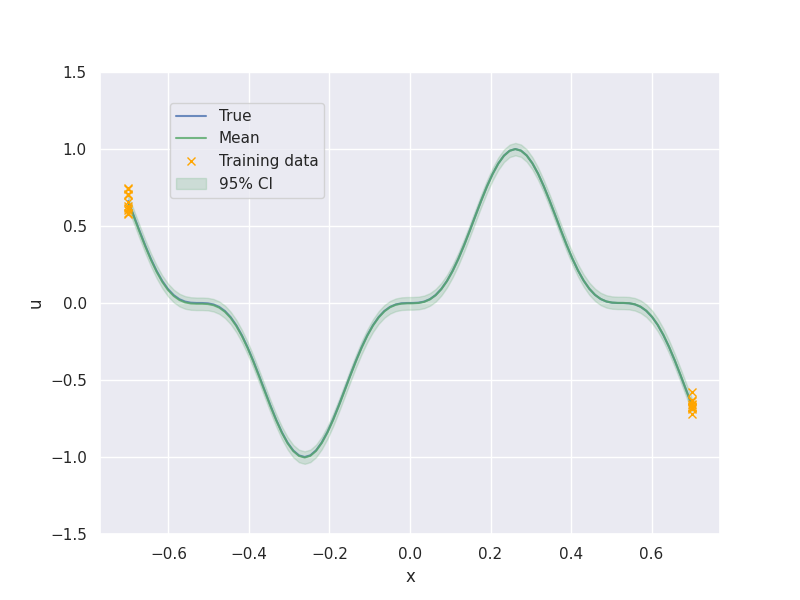}
        \caption{$\sigma_{f}=0.005$}
    \end{subfigure}
    \hfill
    \begin{subfigure}[b]{0.32\textwidth}
        \includegraphics[width=\textwidth]{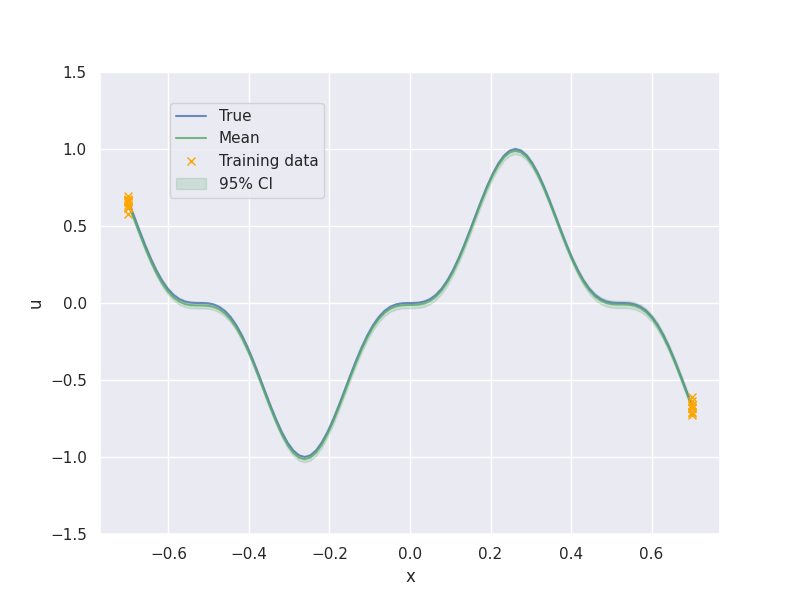}
        \caption{$\sigma_{f}=0.0005$}
    \end{subfigure}
    \caption{1-D Poisson model (\ref{poissoneq}): Typical trajectories learned 
    by Bayesian PINN with different $\sigma_{f}$ values.}
    \label{fig:bpinn_traj}
\end{figure*}

\begin{table*}[htbp]
\caption{Comparison of different methods for 1D-Poisson with $\sigma=0.01$}
\label{table:sd0.01}
\vspace{0.1in}
\centering
\footnotesize
\begin{tabular}{lcccccc}
    \toprule
    Method     & MSE    & Coverage Rate     & CI-Width \\
    \midrule
    PINN (no dropout)  & 0.000008 (0.000001) & 0.2808 (0.028721) & 0.002079 (0.000026) \\
    Dropout (0.5\%)  & 0.000160 (0.000037) & 1.0000 (0.000000) & 0.199798 (0.004135) \\
    Dropout (1\%)  & 0.000676 (0.000082) & 1.0000 (0.000000) & 0.276534 (0.006642) \\
    Dropout (5\%)  & 0.009080 (0.001516) & 0.9639 (0.011218) & 0.593541 (0.025812) \\
    Bayesian ($\sigma_f=0.05$) & 0.000181 (0.000025) & 0.9984 (0.000400) & 0.254861 (0.001554) \\
    Bayesian ($\sigma_f=0.005$) & 0.000007 (0.000001) & 0.9915 (0.003220) & 0.028386 (0.000201) \\
    Bayesian ($\sigma_f=0.0005$) & 0.000007 (0.000001) & 0.9544 (0.013751) & 0.010437 (0.000046) \\
    \bf{EFI}  & \bf{0.000007 (0.000001)} & \bf{0.9423 (0.015229)} & \bf{0.010759 (0.000117)} \\
    \bottomrule
\end{tabular}
\end{table*}

\begin{table*}[htbp]
\caption{Comparison of different methods for 1D-Poisson with $\sigma=0.025$}
\label{table:sd0.025}
\vspace{0.1in}
\centering
\footnotesize
\begin{tabular}{lcccccc}
    \toprule
    Method    & MSE    & Coverage Rate     & CI-Width \\
    \midrule
    PINN (no dropout) & 0.000046 (0.000006) & 0.1529 (0.019175) & 0.002110 (0.000025) \\
    Dropout (0.5\%) & 0.000157 (0.000028) & 1.0000 (0.000000) & 0.195658 (0.003892) \\
    Dropout (1\%) & 0.000666 (0.000084) & 1.0000 (0.000000) & 0.267936 (0.005751) \\
    Dropout (5\%) & 0.004573 (0.000654) & 0.9979 (0.001157) & 0.643227 (0.031514) \\
    Bayesian ($\sigma_f=0.05$) & 0.000161 (0.000019) & 0.9992 (0.000307) & 0.257184 (0.001612) \\
    Bayesian ($\sigma_f=0.005$) & 0.000060 (0.000013) & 0.9630 (0.012690) & 0.037341 (0.000418) \\
    Bayesian ($\sigma_f=0.0005$) & 0.000090 (0.000014) & 0.8504 (0.023270) & 0.024676 (0.000366) \\
    \bf{EFI} & \bf{0.000048 (0.000006)} & \bf{0.9577 (0.014487)} & \bf{0.027845 (0.000115)} \\
    \bottomrule
\end{tabular}
\end{table*}

\begin{table*}[htbp]
\caption{Comparison of different methods for 1D-Poisson with $\sigma=0.1$}
\label{table:sd0.1}
\vspace{0.1in}
\centering
\footnotesize
\begin{tabular}{lcccccc}
    \toprule
    Method   & MSE    & Coverage Rate     & CI-Width \\
    \midrule
    PINN (no dropout) & 0.000581 (0.000066) & 0.0320 (0.006518) & 0.002088 (0.000024) \\
    Dropout (0.5\%) & 0.000935 (0.000106) & 0.9905 (0.002706) & 0.196226 (0.003748) \\
    Dropout (1\%) & 0.001413 (0.000175) & 0.9976 (0.001929) & 0.274188 (0.007653) \\
    Dropout (5\%) & 0.006840 (0.001027) & 0.9955 (0.002439) & 0.658607 (0.031770) \\
    Bayesian ($\sigma_f=0.05$) & 0.000966 (0.000106) & 0.9936 (0.002068) & 0.285660 (0.001405) \\
    Bayesian ($\sigma_f=0.005$) & 0.000674 (0.000080) & 0.9582 (0.015014) & 0.104743 (0.000325) \\
    Bayesian ($\sigma_f=0.0005$) & 0.004944 (0.000788) & 0.3827 (0.030245) & 0.066296 (0.002515) \\
    \bf{EFI} & \bf{0.000624 (0.000058)} & \bf{0.9493 (0.013526)} & \bf{0.099543 (0.000775)} \\
    \bottomrule
\end{tabular}
\end{table*}

\begin{table*}[!ht]
\caption{Numerical results for 1D-Poisson with $\sigma=0.1$, where the confidence interval width shrinks as sample size increases. The results are computed based on 100 independent simulations.}
\label{table:sample size different}
\vspace{0.1in}
\centering
\footnotesize
\begin{tabular}{lcccccc}
    \toprule
    Method  & $n_b$ (sample size) & MSE    & Coverage Rate   & CI-Width \\
    \midrule
    EFI & 20 & 0.000624 (0.000058) & 0.9493 (0.013526) & 0.099543 (0.000775) \\
    EFI & 80 & 0.000173 (0.000017) & 0.9501 (0.013459) & 0.054256 (0.000723) \\
    \bottomrule
\end{tabular}
\end{table*}

\subsection{1-D Poisson Equation with $f$-measurement error}

We revisit the same 1-D Poisson model as defined in \eqref{poissoneq}, but now incorporate measurement errors in both $u$ and $f$. Specifically, we consider  $4$ sensors for $u$ and $40$ sensors for $f$, with each sensor recording 10 replicate measurements. Measurement errors are modeled as $z_i^u\sim N(0, 0.05^2)$ and $ z_i^f \sim N(0, 0.05^2)$, and we simulate 100 independent datasets under this setting. The experimental results are summarized in Table \ref{table:poisson v2}.

For the Bayesian PINN (B-PINN) method \citep{BPINN}, we set $\sigma_u = \sigma_f =0.05$, 
ensuring the likelihood function is correctly specified. However, as shown in Table \ref{table:poisson v2}, B-PINN produces excessively wide confidence intervals, resulting in an inflated and inaccurate coverage rate. This highlights another issue inherent to Bayesian DNNs, as noted in \citet{Liang2018BNN,SunSLiang2021}: their performance can be significantly affected by the choice of prior in small-$n$-large-$p$ settings. Here, 
$p$ refers to the number of parameters in the DNN used to approximate the solution $u(\bx)$. 
Similarly, the dropout method continues to produce overly wide 
confidence intervals and inflated coverage rates. 

In contrast, EFI achieves a coverage rate of 94.88\% with the smallest confidence interval width. Notably, this experiment involves noise in both $u$ and $f$ observations. To evaluate the imputed errors, the QQ-plot of $\hat{z}_i^u$ and $\hat{z}_i^f$ across 100 experiments is shown in Figure \ref{fig:efi_poisson-v2}. The Q-Q plot confirms that the distribution of imputed errors closely follows its theoretical distribution, supporting the validity of the EFI approach in handling measurement noise from different sources.

\begin{table*}[!ht]
  \caption{Comparison of different methods for the 1-D Poisson model (\ref{poissoneq}) (with $f$-measurement error), averaged over 100 runs.}
  \label{table:poisson v2}
  \centering
   \begin{adjustbox}{width=1.0\textwidth}
  \begin{tabular}{lcccccc}
    \toprule
    Method & hidden layers   & MSE    & Coverage Rate     & CI-Width \\
    \midrule
        PINN & [50, 50] & 0.000271 (0.000019) & 0.0921 (0.008076) & 0.003625 (0.000085) \\
        Dropout (0.5\%) & [50, 50] & 0.000310 (0.000022) & 1.0000 (0.000000) & 0.240585 (0.002282) \\
        Dropout (1.0\%) & [50, 50] & 0.000530 (0.000046) & 1.0000 (0.000000) & 0.357782 (0.005141) \\
        Dropout (5.0\%) & [50, 50] & 0.003527 (0.000193) & 1.0000 (0.000000) & 0.669799 (0.006158) \\
        Bayesian & [50, 50] & 0.000233 (0.000017) & 0.9960 (0.002035) & 0.086291 (0.000068)  \\
        \bf{EFI} & \bf{[50, 50]} & \bf{0.000238 (0.000017)} &  \bf{0.9488 (0.009419)} & \bf{0.060269  (0.000365)} \\
        
    \bottomrule
  \end{tabular}
\end{adjustbox}
\end{table*}

 \begin{figure*}[htbp]
    \centering
    \begin{subfigure}{0.32\textwidth}
        \centering
        \includegraphics[width=\textwidth]{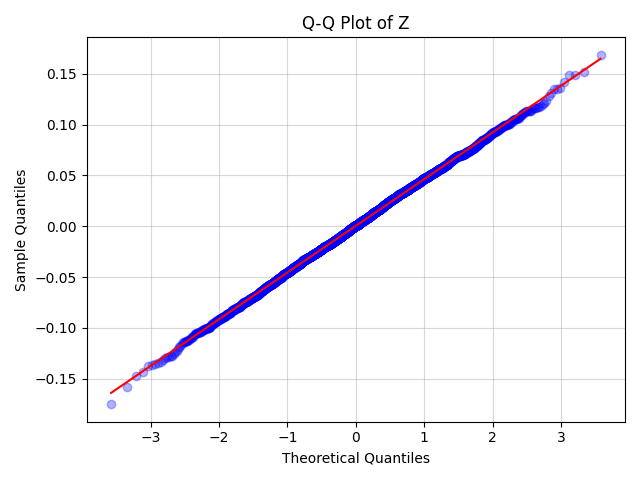}
        \caption{QQ-plot of imputed errors in $u$}
    \end{subfigure}
    \hfill
    \begin{subfigure}{0.32\textwidth}
        \centering
        \includegraphics[width=\textwidth]{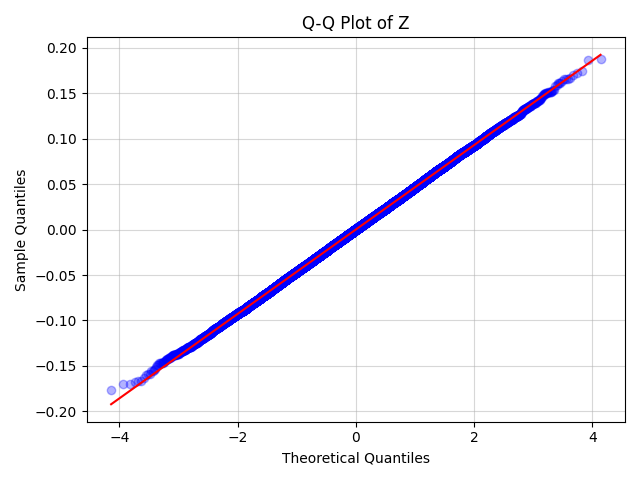}
        \caption{QQ-plot of imputed errors in $f$}
    \end{subfigure}
    \hfill
    \begin{subfigure}{0.34\textwidth}
        \centering
        \includegraphics[width=\textwidth]{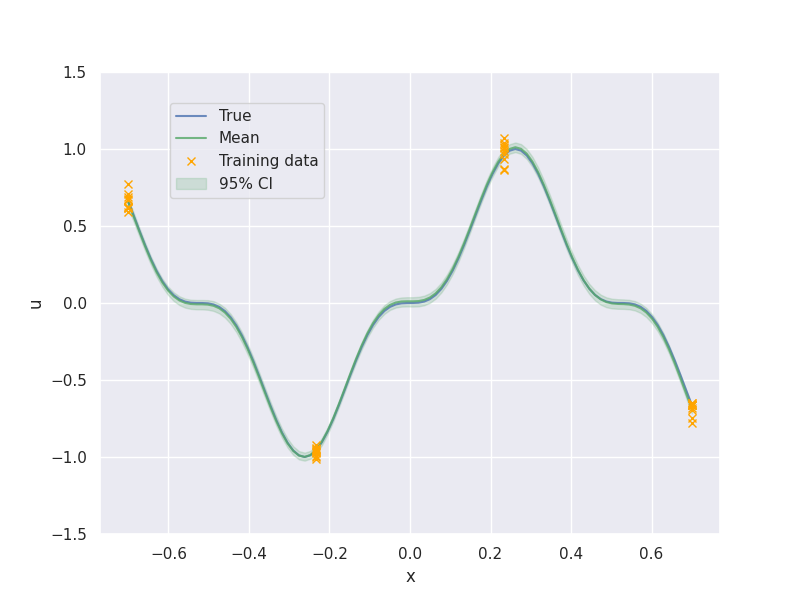}
        \caption{Confidence interval}
    \end{subfigure}
    \caption{EFI-PINN diagnostic for the 1-D Poisson model (\ref{poissoneq}) (with $f$-measurement error)}
    \label{fig:efi_poisson-v2}
\end{figure*}

\subsection{Computating Time}


Table~\ref{tab:wall time} reports the wall-clock time for the Poisson-1D experiment with different algorithms. As shown by the table, EFI is slower than PINN (with dropout) but much faster than B-PINN (with the HMC sampler). 
Importantly, we have demonstrated that EFI is the only existing method capable of correctly constructing confidence intervals in a statistically rigorous manner. 
For larger networks, we can adopt transfer learning techniques by constructing EFI hyper-networks only for the last few layers of the $\btheta$-network, which would significantly reduce the computational cost.

\begin{table}[htbp]
\centering
\caption{Wall-clock time for PINN, B-PINN and EFI-PINN}
\label{tab:wall time}
\begin{adjustbox}{width=1.0\textwidth}
\begin{tabular}{lcccc}
\toprule
\textbf{Algorithm} & \textbf{Hypernetwork} & \textbf{Epoch ($\times 10^3$)} & \textbf{Wall-clock time (s)} & \textbf{Time per epoch (ms)} \\
\midrule
PINN (dropout) & - & 200 & 133 & 0.665 \\
B-PINN & - & 100 & 1330 & 13.300\\
EFI & [16,16,16] & 200 & 391 & 1.955\\
EFI & [16,16,4] & 200 & 385 & 1.925\\
\bottomrule
\end{tabular}
\end{adjustbox}
\end{table}

\subsection{Non-linear Poisson Equation}

We extend our study to a non-linear Poisson equation given by:
\begin{equation} \label{eq:nonlinear poisson}
    \beta \frac{\partial^2 u}{\partial x^2} + k \tanh (u) =f, \quad \mbox{$x \in \Omega$},
\end{equation}
 where $\Omega=[-0.7,0.7]$, $\beta=0.01$, $k=0.7$, $u=\sin^3(6x)$, and $f$ can be 
 derived from \eqref{eq:nonlinear poisson}. 
 For this scenario, we use $4$ sensors located at $x\in \{-0.7, -0.47, 0.47, 0.7\}$ to provide noisy observations of the solution $u$. Additionally, we employ $40$ sensors, equally spaced within $[-0.7, 0.7]$, to measure $f$. Both $u$ and $f$ measurements are assumed to contain noise.
 In the simulation, measurement errors are modeled as $z_i^u \sim N(0,0.05^2)$ for $i=1,2,\ldots,40$, with each solution sensor providing $10$ replicate measurements, and $z_i^f\sim N(0,0.05^2)$ for $i=1,2,\ldots,400$.

   The experimental results are summarized in Table \ref{table:poisson nonlinear}. The findings exhibit a similar pattern to those in Table \ref{table:poisson v2}: The Bayesian and dropout methods yield inflated coverage rates and overly wide confidence intervals, whereas the EFI method achieves an accurate coverage rate and the narrowest confidence interval.
 For a fair comparison, we exclude cases where B-PINN converged to incorrect solutions, as these represent instances of failure in the optimization process, see Figure \ref{fig:bpinn_nonlinear_unstable} for an instance.


\begin{table*}[!ht]
  \caption{Comparison of different methods  for the nonlinear 1-D Poisson model (\ref{eq:nonlinear poisson}) (with $f$-measurement error), averaged over 100 runs.}
  \label{table:poisson nonlinear}
  \centering
 \begin{adjustbox}{width=1.0\textwidth}
  \begin{tabular}{lcccccc}
    \toprule
Method  & hidden layers   & MSE    & Coverage Rate     & CI-Width \\
    \midrule
        PINN & [50, 50] & 0.000507 (0.000044) & 0.0947 (0.007564) & 0.005154 (0.000565) \\
        Dropout (0.5\%) & [50, 50] & 0.001050 (0.000123) & 0.9962 (0.002068) & 0.246367 (0.002182) \\
        Dropout (1\%) & [50, 50] & 0.002807 (0.000309) & 0.9861 (0.003829) & 0.358088 (0.005178) \\
        Dropout (5\%) & [50, 50] & 0.007010 (0.000394) & 0.9956 (0.001566) & 0.543565 (0.003096) \\
        Bayesian (unstable removed) & [50, 50] & 0.000376 (0.000033) & 0.9938 (0.002618) & 0.104673 (0.000394)  \\
       \bf{EFI} & \bf{[50, 50]} & \bf{0.000385 (0.000039)} & \bf{0.9483 (0.009191) }& \bf{0.099880 (0.002853)} \\
        
    \bottomrule
  \end{tabular}
\end{adjustbox}
\end{table*}

\begin{figure}[htbp]
    \centering
    \begin{subfigure}{0.45\textwidth}
        \centering
        \includegraphics[width=\textwidth]{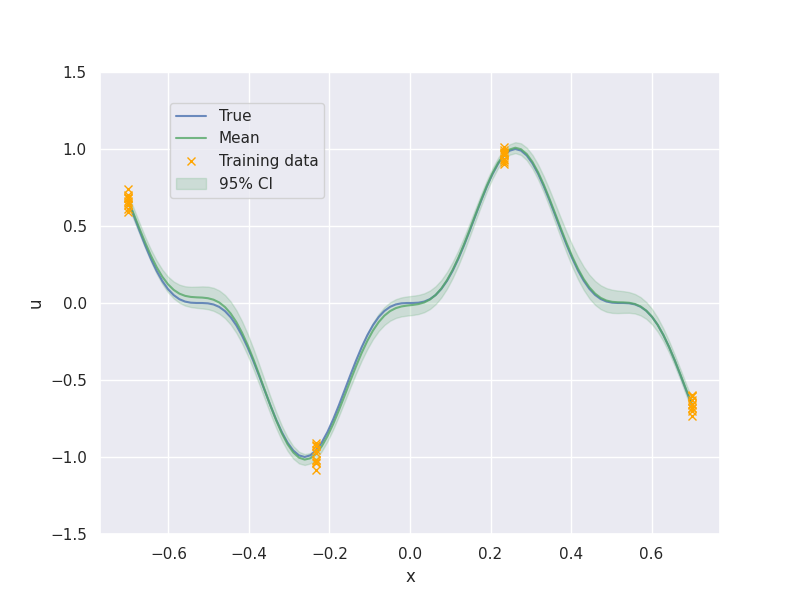}
        \caption{Success}
    \end{subfigure}
    \hfill
    \begin{subfigure}{0.45\textwidth}
        \centering
        \includegraphics[width=\textwidth]{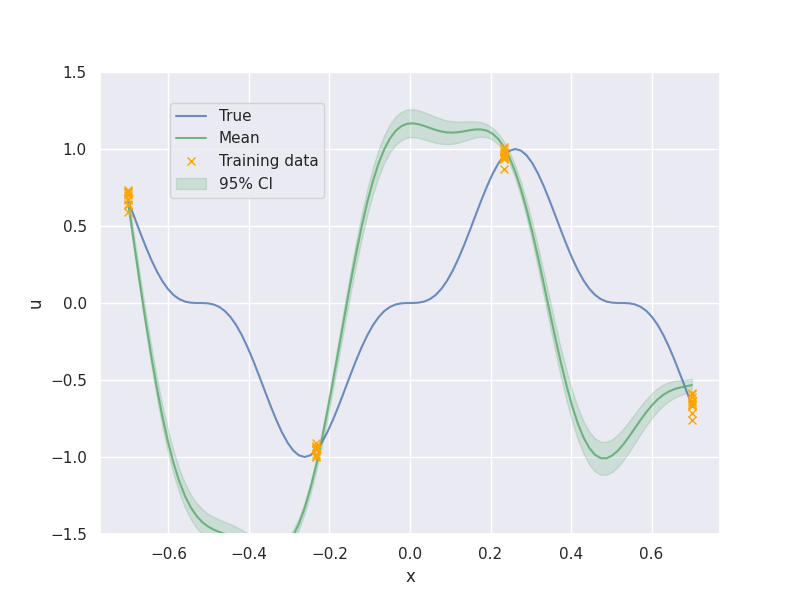}
        \caption{Failure}
    \end{subfigure}
    \caption{Successful and failed optimization results of Bayesian PINN for the nonlinear 1-D Poisson model (\ref{eq:nonlinear poisson}) (with $f$-measurement  error)}
    \label{fig:bpinn_nonlinear_unstable}
\end{figure}

\subsection{Non-linear Poisson Inverse Problem}
In this section, we consider the same non-linear Poisson equation as in (\ref{eq:nonlinear poisson}), but with $k=0.7$ treated as an unknown parameter to be estimated. For this setup, we utilize 8 sensors, evenly distributed across $\Omega=[-0.7,0.7]$ to measure $u$, with measurement noise modeled as $z_i^u \sim N(0, 0.05^2)$. Additionally, 200 sensors are employed to measure $f$, and these measurements are assumed to be noise-free.

To apply EFI framework to inverse problem, we extend the output of the $w$-network by adding an additional dimension dedicated to estimating $k$, as depicted in Figure \ref{fig:efi pe}. This modification enables the EFI framework to simultaneously estimate the solution $u$ and the 
parameter $k$, along with their respective uncertainties.
The results are presented in Table \ref{table:poisson inverse}, demonstrating the capability of EFI to provide accurate uncertainty quantification for both $u$ and $k$. In contrast, B-PINN consistently produces excessively large confidence intervals for both the solution u and the parameter k. Notably, the confidence interval for $k$ estimated by B-PINN is approximately twice as wide as that produced by EFI, indicating a significant overestimation of uncertainty. This highlights the superior precision and robustness of the EFI framework in inverse problems.

\begin{figure}[htbp]
    \centering
    \includegraphics[width=0.5\textwidth]{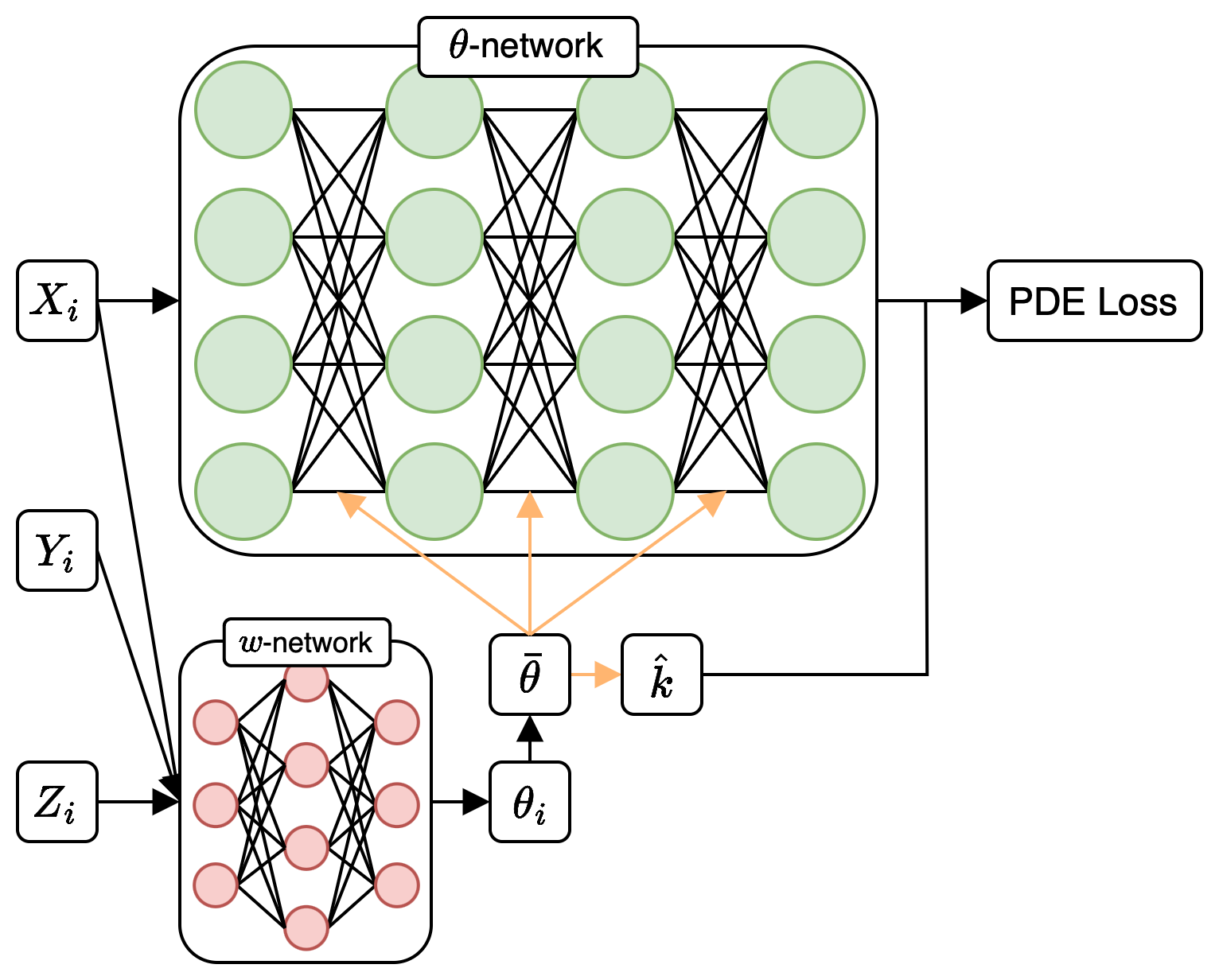}
    \caption{Diagram of EFI for inverse problems, where the orange links indicates the contribution of $\bar{\theta}$ to both $\theta$-network and $k$ estimation.}
    \label{fig:efi pe}
\end{figure}

For this problem, we also tested EFI with a larger data modeling network, consisting of two hidden layers and each hidden layer consisting of 100 hidden units. As expected, EFI produced results similar to those obtained with a much smaller data modeling network. As noted earlier, EFI can accurately quantify model uncertainty as long as the random errors are correctly imputed and the inverse function is consistently estimated. This capability is independent of the specific configurations of the $\bw$-network and the data modeling network, highlighting EFI’s flexibility and robustness.


\begin{table*}[!ht]
  \caption{Comparison of different methods  for the nonlinear 1-D Poisson model (\ref{eq:nonlinear poisson}) with parameter estimation, averaged over 100 runs.}
  \label{table:poisson inverse}
  \centering
  \begin{adjustbox}{width=1.0\textwidth}
  \begin{tabular}{lccccccccc}
    \toprule
    Method  & hidden layers   & MSE($\times 10^{-4})$    & Coverage Rate     & CI-Width & k-Mean & k-Coverage Rate & k-CI-Width \\
    \midrule
        PINN & [50, 50] & 1.79 (0.13) & 0.1464 (0.0097) & 0.0045 (0.0001) & 0.6998 (0.0006) & 0.00 (0.0000) & 7.9e-5 (5e-6) \\
        Dropout (0.5\%) & [50, 50] & 1.84 (0.10) & 1.0000 (0.0000) & 0.2393 (0.0035) & 0.6912 (0.0007) & 0.09 (0.0288) & 0.0045 (0.0002) \\
        Dropout (1.0\%) & [50, 50] & 2.72 (0.24) & 1.0000 (0.0000) & 0.3032 (0.0071) & 0.6874 (0.0008) & 0.11 (0.0314) & 0.0091 (0.0007) \\
        Dropout (5.0\%) & [50, 50] & 27.08 (2.12) & 1.0000 (0.0000) & 0.5363 (0.0131) & 0.6493 (0.0023) & 0.06 (0.0239) & 0.0252 (0.0014) \\
        Bayesian & [50, 50] & 1.31 (0.08)  &    0.9752 (0.0055) & 0.0517 (3e-5) & 0.6994 (0.0005) & 1.00 (0.0000)  &  0.0411 (0.0002)   \\
        \bf{EFI} & \bf{[50, 50]} & \bf{1.03 (0.08)} & \bf{0.9473 (0.0099)} & \bf{0.0396 (0.0002)} & \bf{0.6985 (0.0004)} & \bf{0.94 (0.0239)} & \bf{0.0179 (0.0002)} \\
        \bf{EFI} & \bf{[100, 100]} & \bf{0.98 (0.07)} & \bf{0.9560 (0.0099)} & \bf{0.0395 (0.0002)} & \bf{0.6995 (0.0004)} & \bf{0.96 (0.0197)} & \bf{0.0168 (0.0002)} \\
        
    \bottomrule
  \end{tabular}
\end{adjustbox}
\end{table*}

\subsection{Poisson equation with unknown noise standard deviation}



We now consider the case where the noise standard deviation is treated as an unknown parameter. In this setting, the variability in the observations due to the inherent randomness of the data-generating process—often referred to as systematic error—can be interpreted as aleatoric uncertainty.

As shown in Table~\ref{table:sd unknown}, the EFI algorithm successfully recovers accurate estimates for both the solution $u$ and the noise standard deviation $\sigma$, accompanied by well-calibrated confidence intervals.

\begin{table*}[!ht]
\caption{Numerical results for 1D-Poisson with unknown $\sigma=0.1$ and sample size $n_b=60$, where the number in the parentheses represents the standard error of the estimator.}
\label{table:sd unknown}
\vspace{0.1in}
\centering
\footnotesize
\resizebox{\textwidth}{!}{%
\begin{tabular}{lcccccc}
    \toprule
    Method   & MSE    & Coverage Rate     & CI-Width & $\sigma$-mean & $\sigma$-CR & $\sigma$-CI-Width \\
    \midrule
   EFI  & 0.000220 (0.000025)  & 0.9559 (0.014349) & 0.061347 (0.000848) & 0.097269 (0.001158) & 0.9400 (0.023868) & 0.056768  (0.001284) \\
    \bottomrule
\end{tabular}
}
\end{table*}

\subsection{Black-Scholes Model}

As a practical example, we consider the classical option pricing model in finance --- the Black-Scholes model \citep{Black1973ThePO}:
\begin{equation}
\begin{split}
    \frac{\partial V}{\partial t} + \frac{1}{2} \sigma^2 S^2 \frac{\partial^2 V}{\partial S^2} + r S \frac{\partial }{\partial S} - r V = 0, \\
    C(0,t) = 0 \text{ for all }t, \\
    C(S, t) \rightarrow S-K \text{ as } S\rightarrow \infty, \\
    C(S,T) = \max\{S-K, 0\},
\end{split}
\end{equation}
which describes the price 
$V(S,t)$ of an option. 
Here,  
$S$ is the price of the underlying asset (e.g., a stock), $t$ is time, 
 $\sigma$ represents the volatility of the asset, $r$ is the risk-free interest rate, $K$ is the strike price, and $T$ is the expiration time of the option. The boundary conditions reflect specific financial constraints. 
 
 This model has been widely used to calculate the price of European call and put options. Specifically, 
 the analytic solution for the call option price $C(S_t,t)$ is given by 
\begin{equation}
\begin{split}
    C(S_t, t) &= \Phi(d_{+}) S_t - \Phi(d_{-})K e^{-r(T-t)},\\
    d_{+} &= \frac{1}{\sigma\sqrt{T-t}} \left[\ln \left(\frac{S_t}{K}\right) + \left( r + \frac{\sigma^2}{2} \right) (T-t) \right], \\
    d_{-} &= d_{+} - \sigma \sqrt{T-t},
\end{split}
\end{equation}
where $\Phi(\cdot)$ denotes the standard normal cumulative distribution function. 
However, the uncertainty of the model has not yet been well studied in the literature.  
Accurately quantifying model uncertainty can significantly benefit decision-making, providing investors with a scientific foundation for making safer and more informed choices.

In this simulation experiment, we set $T=1$, $\sigma=0.5$, $r=0.05$ and $K=1$. The domain is defined on $\Omega = [0, T]\times [0,S_{\max}]$, where $S_{\max}=2$. We assume the availability of 5 sensors at $t=0$ for the price levels $S\in\{0.2, 0.4, 0.6, 0.8 ,1.0\}$, with each sensor providing 10 replicate measurements. Measurement errors are modeled as $z_i^u \sim N(0, 0.05^2)$ for $i=1,\dots, 50$, representing noisy observations. For the boundaries at $\{S=0\}$ and $\{t=T\}$, we use $50$ sensors with noise-free measurements. For physical domain, we randomly pick $800$ points from $\Omega$ to satisfy the Black-Scholes equation. \textcolor{black}{The results of the simulation are presented in Table \ref{table:european call}, where the metrics are evaluated at $t=0$ and $t=0.5$. At $t=0$, the evaluation reflects the model’s performance using noisy observed data, while at $t=0.5$, the solutions are extended from the boundaries using the Black-Scholes equation. This setup highlights the model’s ability to handle noisy observations and accurately propagate solutions over time through the governing equation. } EFI demonstrates superior performance by providing not only the most accurate solutions, as evidenced by the lowest MSE, but also the most reliable confidence intervals. 

\begin{table*}[!ht]
  \caption{Comparison of different methods for the Black-Scholes Model, averaged over 100 runs: 
   `CR' refers to the coverage rate with a nominal value of 95\%.}
  \label{table:european call}
  \centering
  \footnotesize
    \begin{adjustbox}{width=1.0\textwidth}
  \begin{tabular}{lccccccc}
    \toprule
    Method  & hidden layers   & MSE($t$=0)($\times 10^{-4})$    &  CR($t$=0)  & CI-Width($t$=0) & MSE($t$=0.5)$(\times 10^{-4})$   
    & CR($t$=0.5) & CI-Width($t$=0.5)\\
    \midrule
        PINN & [50, 50] & 3.08 (0.44) & 0.1410 (0.0102) & 0.0046 (0.0002) & 15.73 (2.26) & 0.2427 (0.0192) & 0.0080 (0.0006) \\
        Dropout (0.5\%) & [50, 50] & 1.37 (0.20) & 0.5897 (0.0216) & 0.0190 (0.0002) & 2.19 (0.37) & 0.6303 (0.0252) & 0.0197 (0.0004) \\
        Dropout (1.0\%) & [50, 50] & 4.30 (2.59) & 0.6743 (0.0219) & 0.0244 (0.0005) & 1.83 (0.26) & 0.6983 (0.0255) & 0.0234 (0.0004) \\
        Dropout (5.0\%) & [50, 50] & 1.71 (0.70) & 0.9137 (0.0122) & 0.0538 (0.0004) & 1.70 (0.21) & 0.9387 (0.0110) & 0.0510 (0.0002) \\
        Bayesian ($\sigma_f=0.05$) & [50, 50] & 1.59 (0.41) & 0.9637 (0.0175) & 0.0516 (0.0010) & 12.61 (7.14) & 0.9413 (0.0187) & 0.0658 (0.0015) \\
        Bayesian ($\sigma_f=0.005$) & [50, 50] & 10.75 (1.46) & 0.5437 (0.0255) & 0.0388 (0.0007) & 39.39 (8.82) & 0.4807 (0.0225) & 0.0426 (0.0010) \\
        \bf{EFI} & \bf{[50, 50]} & \bf{0.38 (0.05)} & \bf{0.9440 (0.0133)} & \bf{0.0158 (0.0001)} & \bf{0.17 (0.02)} & \bf{0.9600 (0.0082)} & \bf{0.0123 (0.0001)} \\
    \bottomrule
  \end{tabular}
  \end{adjustbox}
\end{table*}

To further illustrate these findings, we visualize the prediction surface in Figure \ref{BS:surface}. The figure reveals that B-PINN extends the solution poorly toward the edge at $S=2$, where no data points are available, relying solely on physical laws for extrapolation. In contrast, EFI provides a smoother and more accurate extension.
The dropout method performs reasonably well for this example with a dropout rate of 5\%;  however, its confidence interval remains significantly wider than that of EFI. As previously noted, determining an appropriate dropout rate is not feasible without additional information.
Figure \ref{BS:curve} highlights EFI’s ability to correctly quantify uncertainties. 
Near the boundary at $S=0$, where boundary information is available, the confidence interval is appropriately narrow. As the stock price $S$ increases, and boundary information becomes scarce, the confidence interval widens, reflecting the growing uncertainty.
In comparison, dropout and Bayesian methods fail to capture this behavior accurately. They produce overly broad or inconsistent intervals, particularly near the boundaries and regions with limited data, underscoring their limitations in handling uncertainty quantification for this problem.

\begin{figure*}[!ht]
    \centering
    \begin{tabular}{ccc}
    \begin{subfigure}[b]{0.4\textwidth}
        \centering
        \includegraphics[width=\linewidth]{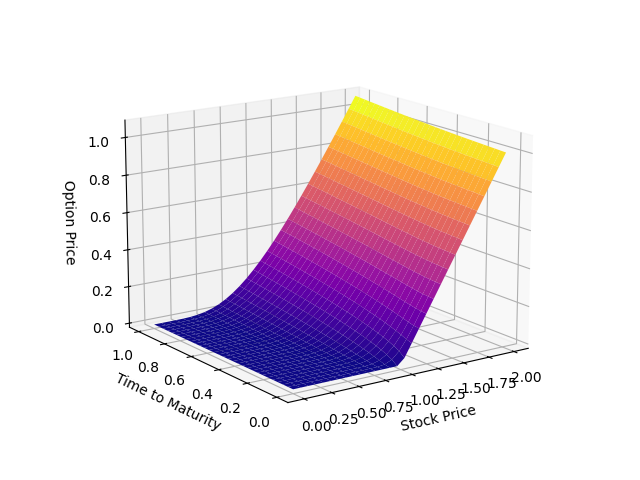}
        \caption{True price}
    \end{subfigure} & 
    \begin{subfigure}[b]{0.4\textwidth}
        \centering
        \includegraphics[width=\linewidth]{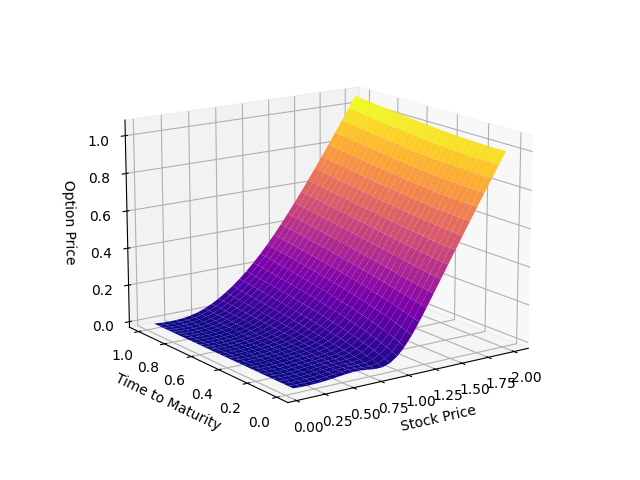}
        \caption{PINN dropout (5.0\%)}
    \end{subfigure} \\
    \begin{subfigure}[b]{0.4\textwidth}
        \centering
        \includegraphics[width=\linewidth]{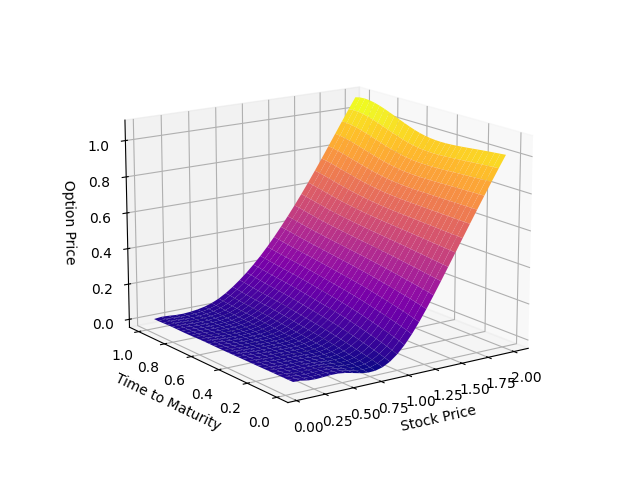}
        \caption{Bayesian PINN ($\sigma_f=0.05$)}
    \end{subfigure} & 
    \begin{subfigure}[b]{0.4\textwidth}
        \centering
        \includegraphics[width=\linewidth]{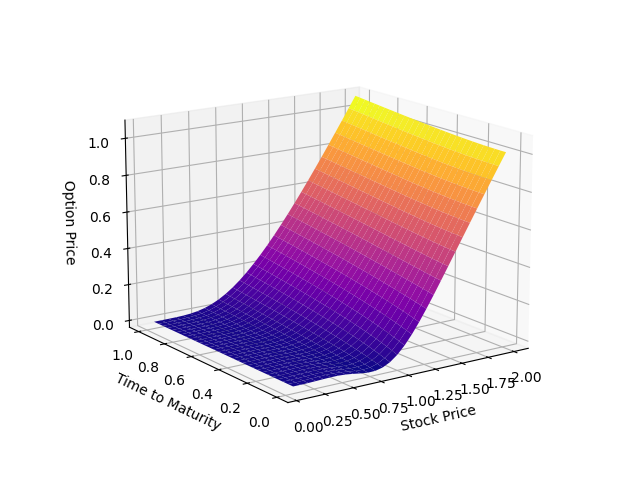}
        \caption{EFI}
    \end{subfigure}
    \end{tabular}
    \caption{European Call Option Price.} 
    \label{BS:surface}
\end{figure*}

\begin{figure*}[!ht]
    \centering
    \begin{subfigure}[b]{0.33\textwidth}
        \centering
        \includegraphics[width=\linewidth]{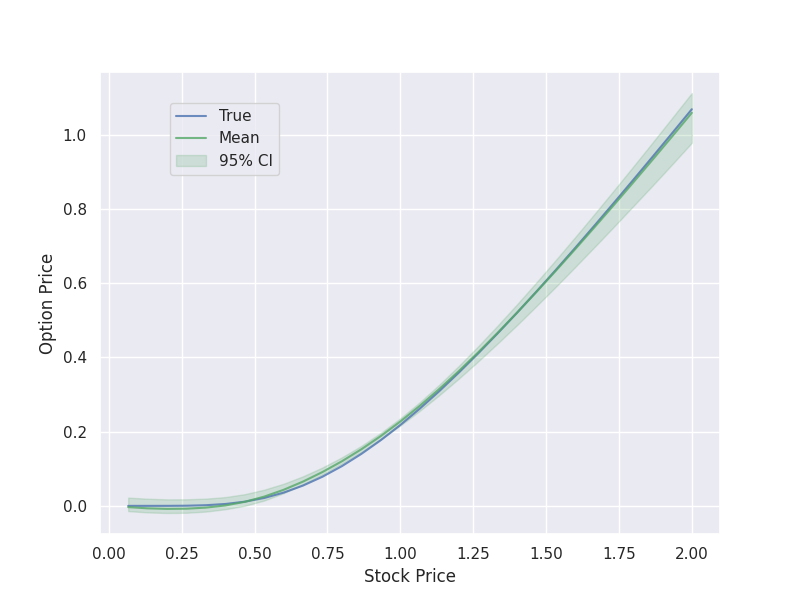}
        \caption{PINN dropout (5.0\%)}
    \end{subfigure}%
    \begin{subfigure}[b]{0.33\textwidth}
        \centering
        \includegraphics[width=\linewidth]{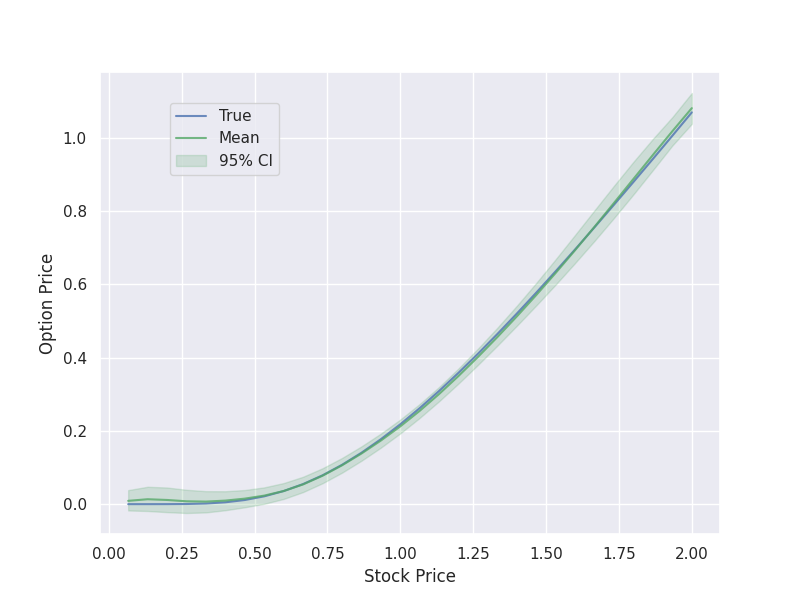}
        \caption{Bayesian PINN ($\sigma_f=0.05$)}
    \end{subfigure}%
    \begin{subfigure}[b]{0.33\textwidth}
        \centering
        \includegraphics[width=\linewidth]{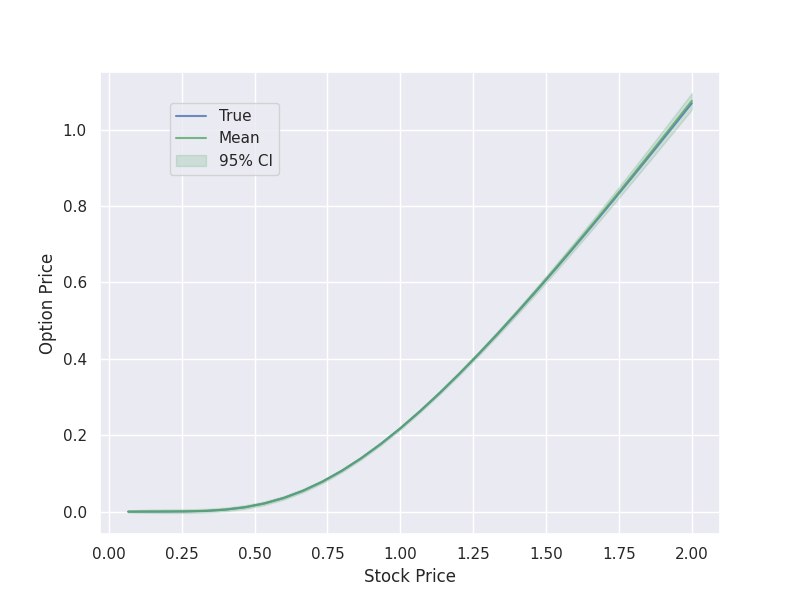}
        \caption{EFI}
    \end{subfigure}%
    \caption{European Call Option Price at $t=0$. }
    \label{BS:curve}
\end{figure*}

\subsection{Real Data: Montroll Growth Model}

Consider the Montroll growth model: 
\begin{equation} \label{eq:app:montroll}
    \frac{dp}{dt}(t) = k \cdot p(t) \cdot \left(1 - \left(\frac{p(t)}{C}\right)^\theta \right),
\end{equation}
where \(k\), \(C\), and \(\theta\) are unknown parameters. 
We applied this model to published data on the growth of Chinese hamster V79 fibroblast tumor cells \citep{Marusic1994AnalysisOG}, which also appears in \cite{Montroll}. The dataset comprises 45 measurements of tumor volumes 
($10^9$ vm$^3$) collected over a 60-day period.
Table \ref{tab:montroll:PINN} shows the parameter estimates obtained using 
PINN, and Figure \ref{fig:montroll 3 param}(a) shows the learned growth curve.

\begin{table}[htbp]
    \centering
      \caption{Parameter estimates obtained with PINN for the model (\ref{eq:app:montroll}).} 
    \vspace{0.1in} 
    \begin{tabular}{cc}
    \hline
        \textbf{Parameter} & \textbf{Estimated Value} \\
        \hline
        \(k\) & 0.8311 \\
        \(C\) & 7.3327 \\
        \(\theta\) & 0.1694 \\
        \hline
    \end{tabular}
    \label{tab:montroll:PINN}
\end{table}


\begin{figure}[!ht]
    \centering
    \begin{minipage}[t]{0.47\linewidth}
        \centering
        \includegraphics[width=\linewidth]{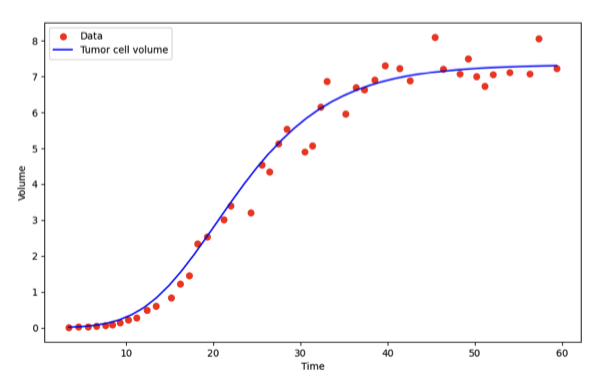}
        \caption*{(a) Montroll growth curve learned with PINN}
    \end{minipage}
    \hfill
    \begin{minipage}[t]{0.47\linewidth}
        \centering
        \includegraphics[width=\linewidth]{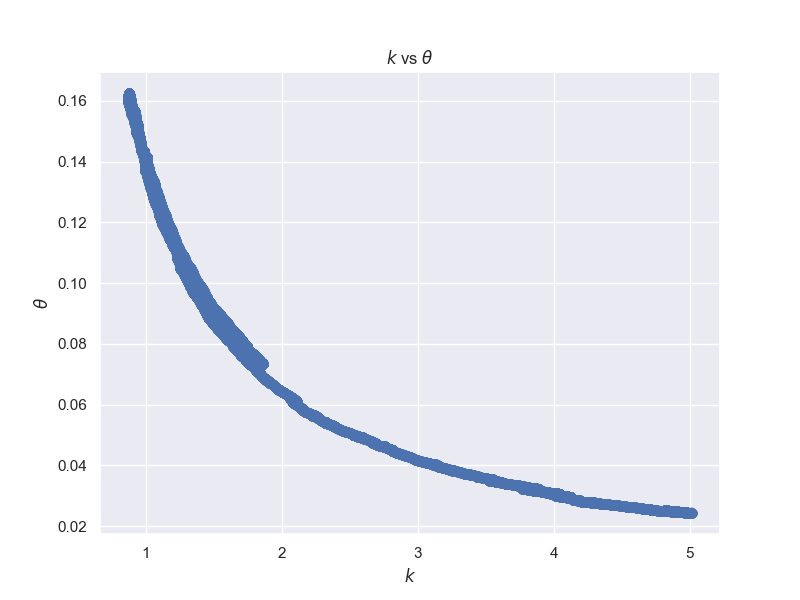}
        \caption*{(b) Relationship between $k$ and $\theta$}
    \end{minipage}
    \caption{Montroll growth Model for Chinese hamster V79 fibroblast tumor cells.}
    \label{fig:montroll 3 param}
\end{figure}

However, the standard PINN method does not provide confidence intervals for \(p(t)\) or for the parameters \(k\), \(C\), and \(\theta\). To apply EFI-PINN to this dataset, we assume a heteroscedastic noise structure given by \(\sigma_t = \sqrt{t} \cdot \sigma\), where \(\sigma\) is treated as an additional unknown parameter. Thus, we estimate four parameters in total under the EFI framework.
Using EFI-PINN, we detected that 
$k$ and $\theta$ are nearly non-identifiable, 
see Figure \ref{fig:montroll 3 param}(b),
which shows their joint distribution by plotting their samples throughout training. 


To address this non-identifiability issue, we fix \(\theta = 0.1\) and reduce the model to:
\begin{equation*}
    \frac{dp}{dt}(t) = k \cdot p(t) \cdot \left(1 - \left(\frac{p(t)}{C}\right)^{0.1} \right).
\end{equation*}

The corresponding confidence intervals for \(p(t)\) are shown in the left plot of 
Figure \ref{fig:2PDEmodels}.  
The confidence intervals of 
$k$, $C$, and $\sigma$ are given in Table \ref{tab:params}.

\begin{table}[!ht]
\centering
\caption{Parameter estimates (with 95\% confidence intervals) obtained by EFI-PINN for the Montroll growth model.}
\label{tab:params}
\vspace{0.1in}
\begin{tabular}{ccc}
\hline
\textbf{Parameter} & \textbf{95\% Confidence Interval} & \textbf{Mean} \\
\hline
\(\theta\) & -- & 0.1\\
\(k\) & (1.25, 1.40) & 1.25 \\
\(C\) & (7.30, 7.69) & 7.44 \\
\(\sigma\) & (0.27, 0.47) & 0.36 \\
\hline
\end{tabular}
\end{table}


The Montroll experiment highlights the strength of EFI in quantifying uncertainty for all parameters of interest. Moreover, it demonstrates EFI’s ability to detect
model identifiability issues, underscoring its utility in the statistical inference of scientific models. 
Additionally, EFI produces an estimate of $\sigma$, which enables the quantification of predictive uncertainty.

\subsection{Real Data: FKPP Model and Porous-FKPP Model}

Consider the Fisher–Kolmogorov–Petrovsky–Piskunov (FKPP) model and the porous FKPP (P-FKPP) model, which are governed by the following reaction-diffusion equations:

\begin{equation} \label{eq:app:FKPPmodel}
\frac{\partial u}{\partial t} = D \frac{\partial^2 u}{\partial x^2} + ru\left(1 - \frac{u}{K}\right),
\end{equation}

\begin{equation} \label{eq:app:G-P-FKPPmodel}
\frac{\partial u}{\partial t} = D \frac{\partial}{\partial x} \left[ \left(\frac{u}{K}\right)^m \frac{\partial u}{\partial x} \right] + ru\left(1 - \frac{u}{K}\right),
\end{equation}

where $D$, $r$, and $m$ are unknown parameters, and $K$ denotes the carrying capacity.
This equation has been used to model a wide range of growth and transport 
of biological processes. We applied it to scratch assay data  \citep{Jin2006EstimatingTN}. The biological experiments were conducted under varying initial cell densities --- specifically, 10,000, 12,000, 14,000, 16,000, 18,000, and 20,000 cells per well. Cell densities were recorded at 37 
equally-spaced spatial positions across five equally-spaced time points --- specifically, 0.0 days, 0.5 days, 1.0 days, 1.5 days, and 2.0 days. See  also \cite{Lagergren2020BiologicallyinformedNN} for additional descriptions of the dataset. In addition to the dataset, we partitioned the space-time domain $[0, 2] \times [0, 2]$, where the first interval corresponds to the spatial domain and the second to the temporal domain, into a $50 \times 10$ grid for computing the PDE loss (i.e., the $f$-term in equation (10)). The fitting curves of the EFI algorithm for different models and initial density values are shown in Figures~\ref{fig:fkpp} and~\ref{fig:p-fkpp}. Notably, beyond point estimation, EFI also constructs confidence intervals. 
Additionally, we note that the right plot of Figure~\ref{fig:2PDEmodels} was generated using a parameter setting different from that listed in Table~\ref{table:params_fkpp}. In this setting, fewer sample points were allocated for computing the PDE loss, which resulted in smaller fitting errors but larger deviations from the assumed PDE model.
Essentially, the two settings correspond to different datasets, as the number of sample points used to evaluate the energy function differs between them.

In Table~\ref{tab:rmse_parameters}, the root mean squared errors (RMSEs) are computed between the predicted solution $u$ and the observed data, reflecting a combination of epistemic and aleatoric uncertainty. Based on the model formulations in \eqref{eq:app:FKPPmodel} and \eqref{eq:app:G-P-FKPPmodel}, the P-FKPP model is more flexible and is therefore expected to exhibit reduced epistemic uncertainty, leading to smaller RMSE values. Consistent with this expectation, Table~\ref{tab:rmse_parameters} shows that the P-FKPP model achieves lower RMSEs compared to the standard FKPP model.

Regarding parameter uncertainty, 
we note that EFI is able to quantify the 
uncertainty associated each parameter. However, due to the transformation applied to the first term of \eqref{eq:app:G-P-FKPPmodel}, the values of $D$ are no longer on the same scale across the two models, 
whereas the values of $r$ remain comparable in scale. The results are reported in 
Table \ref{tab:rmse_parameters}.

\paragraph{Summary}
 Through both simulation and real data experiments, we have demonstrated that the proposed EFI algorithm effectively quantifies uncertainties associated with the model and the data-generating process, resulting in accurate estimation of both epistemic and aleatoric uncertainties.


\begin{table}[!ht]
    \centering
    \caption{RMSE and estimated parameters with 95\% confidence intervals for FKPP and Porous-FKPP models.}
     \begin{adjustbox}{width=1.0\textwidth}
    \begin{tabular}{l|c|c|cc|cc|cc}
        \toprule
        \textbf{Model} & \textbf{Initial Cell Density} & \textbf{RMSE} 
        & \textbf{$D$} & 
         Interval 
        & \textbf{$R$} & 
         Interval
        & \textbf{$M$} &  
        Interval \\
        \midrule
        FKPP & 10000 & 58.04 & 0.00936 & (0.00754, 0.01195)  & 0.829 & (0.797, 0.877) & -- & -- \\
        FKPP & 12000 & 82.09 & 0.00378 & (0.00281, 0.00461) & 0.632 & (0.603, 0.658) & -- & -- \\
        FKPP & 14000 & 82.93 & 0.02929 & (0.02739, 0.03268) & 0.534 & (0.505, 0.585) & -- & -- \\
        FKPP & 16000 & 99.14 & 0.02636 & (0.02503, 0.02789) & 0.608 & (0.585, 0.633) & -- & -- \\
        FKPP & 18000 & 115.27 & 0.03784 & (0.03541, 0.04032) & 0.549 & (0.524, 0.575) & -- & -- \\
        FKPP & 20000 & 136.67 & 0.05471 & (0.05007, 0.05817) & 0.492 & (0.458, 0.520) & -- & --  \\
        \midrule
        P-FKPP & 10000 & 46.90 & 1167.67 & (72.48, 2719.97) & 0.846 & (0.832, 0.856) & 1.335 & (1.037, 1.490) \\
        P-FKPP & 12000 & 67.88 & 1825.54 & (32.84, 4416.53) & 0.674 & (0.649, 0.696) & 1.433 & (1.033, 1.603) \\
        P-FKPP & 14000 & 73.59 & 289.37 & (79.08, 552.13) & 0.625 & (0.600, 0.649) & 1.096 & (0.951, 1.199) \\
        P-FKPP & 16000 & 70.83 & 57.36 & (19.21, 97.22) & 0.628 & (0.608, 0.650) & 0.920 & (0.804, 0.999) \\
        P-FKPP & 18000 & 96.50 & 21.58 & (9.07, 35.78) & 0.563 & (0.536, 0.587) & 0.780 & (0.683, 0.863) \\
        P-FKPP & 20000 & 123.34 & 1.472 & (1.058,2.020) & 0.496 & (0.464, 0.530) & 0.408 & (0.370, 0.452) \\
        \bottomrule
    \end{tabular}
    \label{tab:rmse_parameters}
\end{adjustbox}
\end{table}

\begin{figure}[htbp]
    \centering

    \begin{minipage}[b]{0.45\textwidth}
        \centering
        \includegraphics[width=\linewidth]{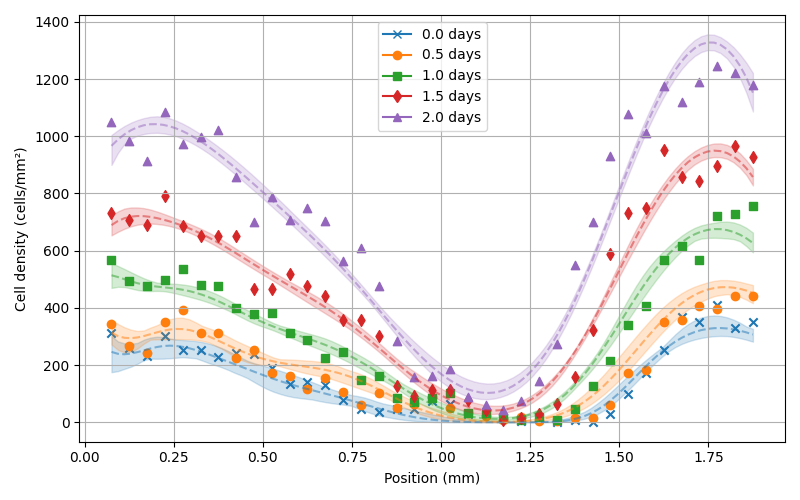}
        \caption*{(a) Initial density 10,000}
    \end{minipage}
    \hfill
    \begin{minipage}[b]{0.45\textwidth}
        \centering
        \includegraphics[width=\linewidth]{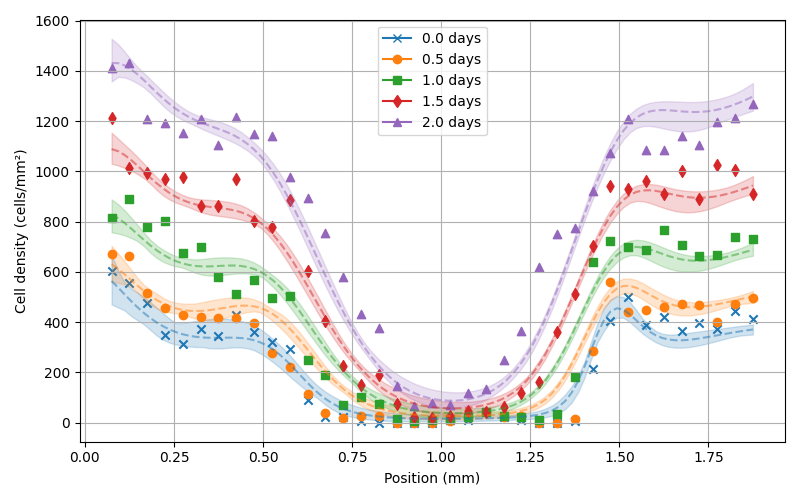}
        \caption*{(b) Initial density 12,000}
    \end{minipage}
    
    \vspace{1em}
    
    \begin{minipage}[b]{0.45\textwidth}
        \centering
        \includegraphics[width=\linewidth]{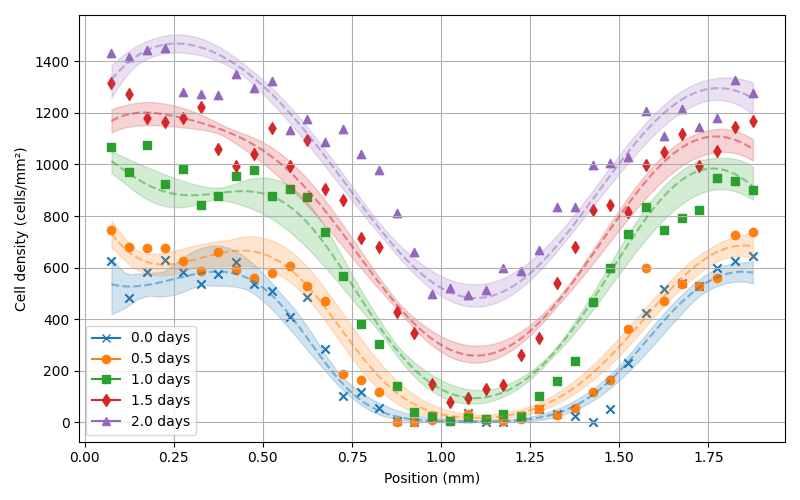}
        \caption*{(c) Initial density 14,000}
    \end{minipage}
    \hfill
    \begin{minipage}[b]{0.45\textwidth}
        \centering
        \includegraphics[width=\linewidth]{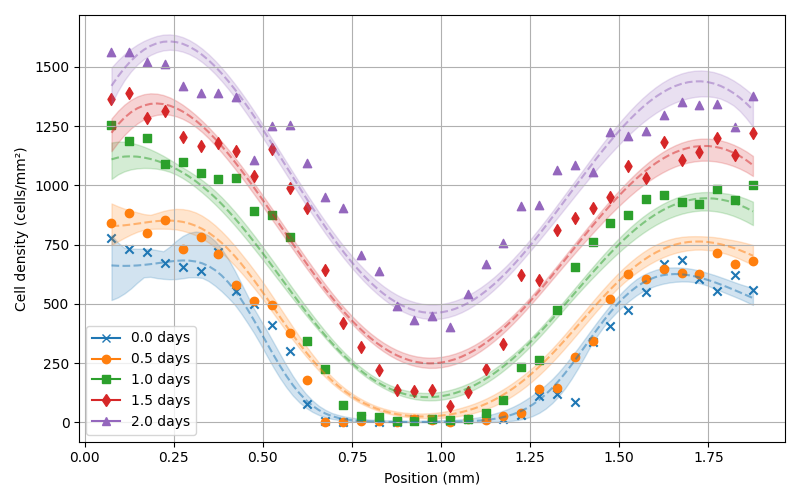}
        \caption*{(d) Initial density 16,000}
    \end{minipage}
    
    \vspace{1em}
    
    \begin{minipage}[b]{0.45\textwidth}
        \centering
        \includegraphics[width=\linewidth]{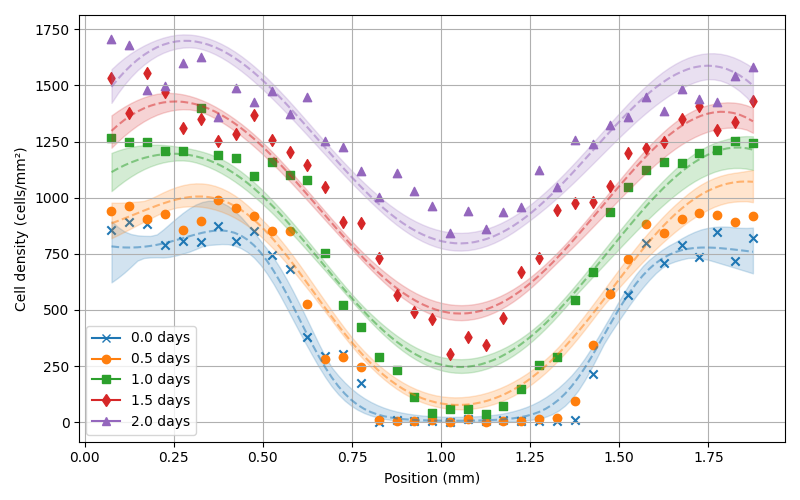}
        \caption*{(e) Initial density 18,000}
    \end{minipage}
    \hfill
    \begin{minipage}[b]{0.45\textwidth}
        \centering
        \includegraphics[width=\linewidth]{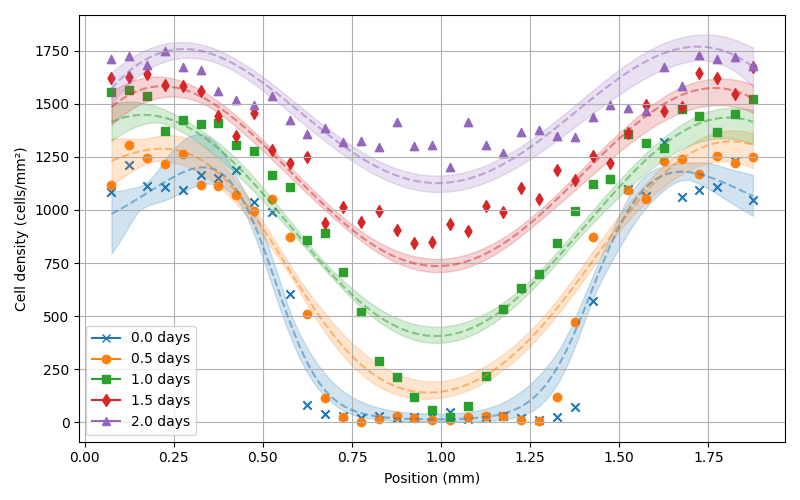}
        \caption*{(f) Initial density 20,000}
    \end{minipage}

    \caption{FKPP model}
    \label{fig:fkpp}
\end{figure}

\begin{figure}[htbp]
    \centering

    \begin{minipage}[b]{0.45\textwidth}
        \centering
        \includegraphics[width=\linewidth]{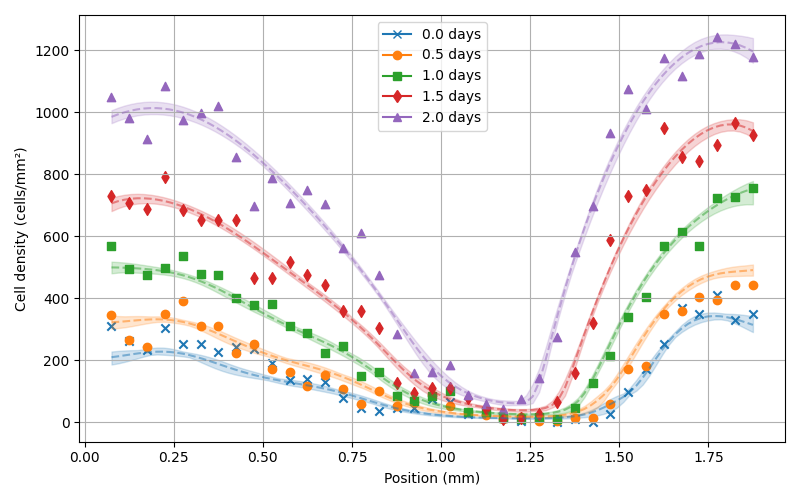}
        \caption*{(a) Initial density 10,000}
    \end{minipage}
    \hfill
    \begin{minipage}[b]{0.45\textwidth}
        \centering
        \includegraphics[width=\linewidth]{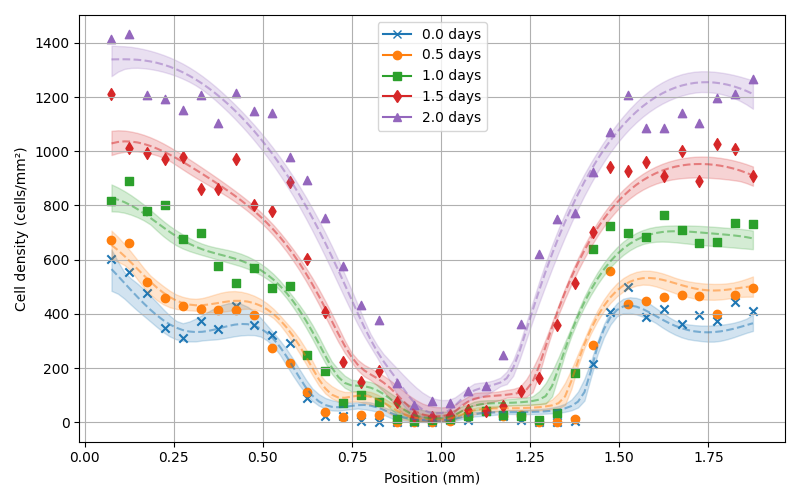}
        \caption*{(b) Initial density 12,000}
    \end{minipage}
    
    \vspace{1em}
    
    \begin{minipage}[b]{0.45\textwidth}
        \centering
        \includegraphics[width=\linewidth]{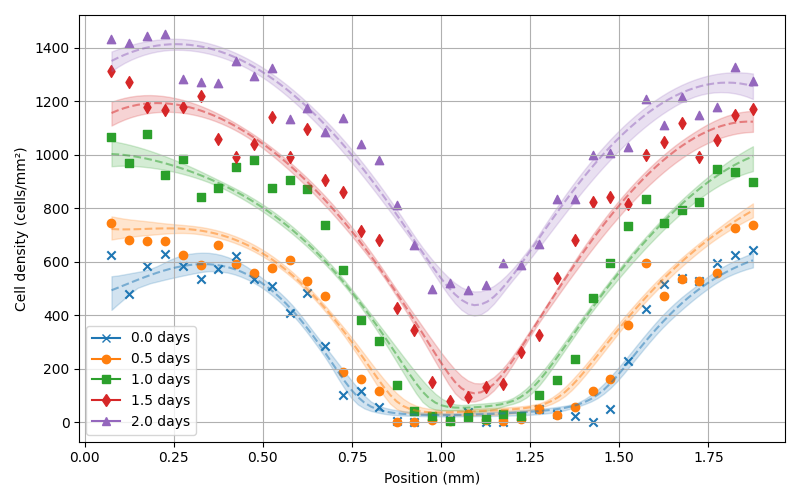}
        \caption*{(c) Initial density 14,000}
    \end{minipage}
    \hfill
    \begin{minipage}[b]{0.45\textwidth}
        \centering
        \includegraphics[width=\linewidth]{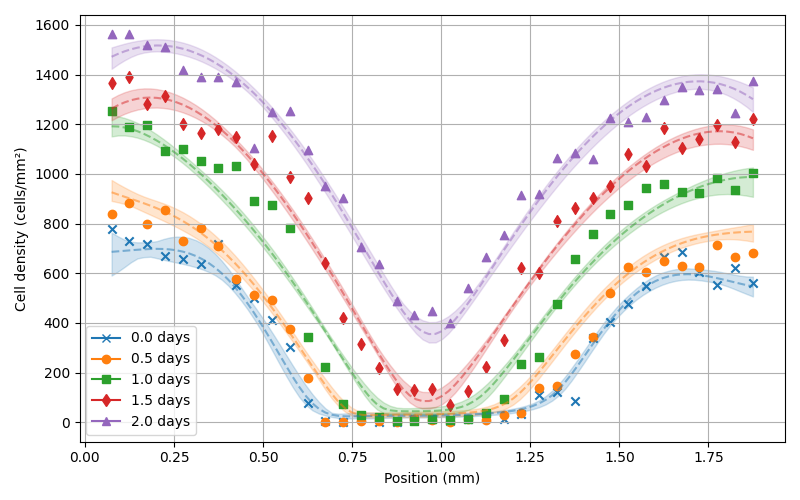}
        \caption*{(d) Initial density 16,000}
    \end{minipage}
    
    \vspace{1em}
    
    \begin{minipage}[b]{0.45\textwidth}
        \centering
        \includegraphics[width=\linewidth]{figures/fkpp/p_fkpp4.png}
        \caption*{(e) Initial density 18,000}
    \end{minipage}
    \hfill
    \begin{minipage}[b]{0.45\textwidth}
        \centering
        \includegraphics[width=\linewidth]{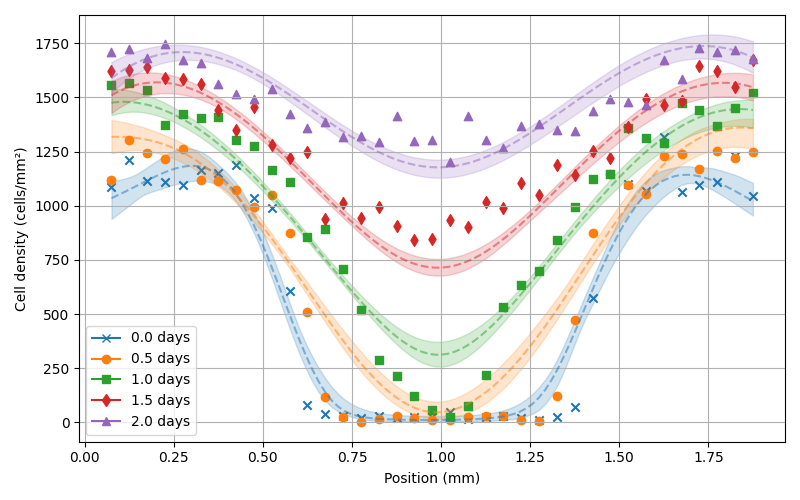}
        \caption*{(f) Initial density 20,000}
    \end{minipage}

    \caption{Porous-FKPP model}
    \label{fig:p-fkpp}
\end{figure}

\section{Experimental Settings} \label{sect:settings}

In all the simulation experiments, we begin by generating data from a specific physics model. Using the simulated data, we iteratively run the algorithm to estimate the model parameters. To enhance convergence of Algorithm \ref{EFIalgorithm}, some algorithmic  parameters (such as learning rate, SGD momentum, $\lambda=1/\epsilon$, and etc.) are adjusted during the initial iterations, referred to as the annealing period.
To tune different parameters, we use three different  annealing schemes, 
including  linear, exponential and polynomial. Their specific forms are in Table \ref{tab:notations}. For the nonlinear Poisson inverse problem,  
both the DNN parameters $\bvartheta$ and the unknown parameter $k$  are  estimated; while for all other problems, only the DNN parameters $\bvartheta$ are estimated. 
At the end of the simulation, the samples collected in the burn-in period are discarded, and the samples
collected in the remaining iterations are used for inference. The burn-in period is set to be at least as long as the annealing period across all experiments.

\textcolor{black}{ 
In our simulations, to ensure the weights of the 
$\bw$-network remain within a compact space as required in Assumption \ref{ass:1}-(i), we impose a Gaussian prior, N(0,100), on each connection weight. However, due to the large variance, this prior has minimal impact on the algorithm's performance, serving primarily to ensure its stability.}

\begin{table}[h!]
\centering
\caption{Notations}
\label{tab:notations}
  \vspace{0.1in}
 \begin{adjustbox}{width=1.0\textwidth}
\begin{tabular}{lp{4.0in}}
\toprule
\textbf{Notation} & \textbf{Meaning} \\
\midrule
epochs & total number of sampling/optimization iterations \\
burn-in period & the proportion of total iterations allocated to the burn-in process \\
annealing period & the proportion of total iterations allocated for parameter adjustment to enhance convergence  \\ 
linear\_x\_y (with progress $\rho \in[0,1]$) & $x+(y-x)\rho$ \\ 
exp\_x\_y (with progress $\rho \in[0,1]$)  & $x^{1-\rho}y^\rho$ \\
poly\_x\_c\_k (with progress $\rho\in[0,1]$) & $\frac{x}{1+(c \rho)^k}$ \\ 
\bottomrule
\end{tabular}
 \end{adjustbox}
\end{table}

\begin{table}[!ht]
  \caption{Parameter settings for 1D-Poisson}
  \label{table:params_poisson}
  \vspace{0.1in}
  \centering
  \footnotesize
   \begin{adjustbox}{width=1.0\textwidth}
  \begin{tabular}{lcccc}
    \toprule
    Parameter Name  & PINN   & Dropout  & Bayesian PINN  & EFI \\
    \midrule
    $t_{\text{start}}$ & -0.7 & -0.7 & -0.7 & -0.7 \\
    $t_{\text{end}}$ & 0.7 & 0.7 & 0.7 & 0.7 \\
    noise sd in $u$ & 0.05 & 0.05 & 0.05 & 0.05 \\
    noise sd in $f$ & 0.0 & 0.0 & 0.0 & 0.0 \\
    \# of solution sensors & 2 & 2 & 2 & 2 \\
    \# of solution replicates & 10 & 10 & 10 & 10 \\
    \# of differential sensors & 200 & 200 & 200 & 200 \\
    \# of differential replicates & 1 & 1 & 1 & 1 \\
    \midrule
    epochs & 500000 & 500000 & 100000 & 200000 \\
    burn-in period & 0.5 & 0.5 & 0.4 & 0.1 \\
    hidden layers & $[50,50]$ & $[50,50]$ & $[50,50]$ & $[50,50]$ \\
    activation function & tanh & tanh & tanh & tanh \\
    learning rate & 3e-4 & 3e-4 & 1e-4/1e-4/linear\_1e-4\_3e-6 & poly\_5e-6\_100.0\_0.55 \\
    $\eta_{f}$ & 1.0 & 1.0 & / & 1.0 \\
    dropout rate & / & 0.5\%/1\%/5\% & / & / \\
    $L$ & / & / & 6 & / \\
    $\sigma_f$ & / & / & 0.05/exp\_0.05\_0.005/exp\_0.05\_0.0005 & / \\
    $\sigma_u$ & / & / & 0.05 & / \\
    annealing period & / & / & 0.3 & 0.1 \\
    sgd momentum & / & / & / & linear\_0.9\_0.0 \\
    sgld learning rate & / & / & / & poly\_5e-6\_10.0\_0.55 \\
    $\lambda$ & / & / & / & linear\_50.0\_500.0 \\
    $\eta_{\theta}$ & / & / & / & 1.0 \\ 
    encoder hidden layers & / & / & / & $[16,16,16]$ \\
    encoder activation function & / & / & / & leaky relu \\
    \bottomrule
  \end{tabular}
\end{adjustbox}
\end{table}

\begin{table}[!ht]
  \caption{Parameter settings for 1D-Poisson (with $f$ error)}
  \label{table:params_poisson_v2}
  \vspace{0.1in}
  \centering
  \footnotesize
  \begin{tabular}{lcccc}
    \toprule
    Parameter Name  & PINN   & Dropout  & Bayesian PINN  & EFI \\
    \midrule
    $t_{\text{start}}$ & -0.7 & -0.7 & -0.7 & -0.7 \\
    $t_{\text{end}}$ & 0.7 & 0.7 & 0.7 & 0.7 \\
    noise sd in $u$ & 0.05 & 0.05 & 0.05 & 0.05 \\
    noise sd in $f$ & 0.05 & 0.05 & 0.05 & 0.05 \\
    \# of solution sensors & 4 & 4 & 4 & 4 \\
    \# of solution replicates & 10 & 10 & 10 & 10 \\
    \# of differential sensors & 40 & 40 & 40 & 40 \\
    \# of differential replicates & 10 & 10 & 10 & 10 \\
    \midrule
    epochs & 100000 & 100000 & 50000 & 200000 \\
    burn-in period & 0.5 & 0.5 & 0.4 & 0.1 \\
    hidden layers & $[50,50]$ & $[50,50]$ & $[50,50]$ & $[50,50]$ \\
    activation function & tanh & tanh & tanh & tanh \\
    learning rate & 3e-4 & 3e-4 & 1e-4 & poly\_2.5e-6\_50.0\_0.55 \\
    $\eta_{f}$ & 1.0 & 1.0 & / & 1.0 \\
    dropout rate & / & 0.5\%/1\%/5\% & / & / \\
    $L$ & / & / & 6 & / \\
    $\sigma_f$ & / & / & 0.05 & / \\
    $\sigma_u$ & / & / & 0.05 & / \\
    annealing period & / & / & / & 0.1 \\
    sgd momentum & / & / & / & linear\_0.9\_0.0 \\
    sgld learning rate & / & / & / & poly\_5e-6\_100.0\_0.55 \\
    $\lambda$ & / & / & / & linear\_50.0\_1000.0 \\
    $\eta_{\theta}$ & / & / & / & 1.0 \\ 
    encoder hidden layers & / & / & / & $[64,64,16]$ \\
    encoder activation function & / & / & / & leaky relu \\
    \bottomrule
  \end{tabular}
\end{table}

\begin{table}[!ht]
  \caption{Parameter settings for nonlinear 1D-Poisson (with $f$ error)}
  \label{table:params_poisson_nonlinear}
  \vspace{0.1in}
  \centering
  \footnotesize
  \begin{tabular}{lcccc}
    \toprule
    Parameter Name  & PINN   & Dropout  & Bayesian PINN  & EFI \\
    \midrule
    $t_{\text{start}}$ & -0.7 & -0.7 & -0.7 & -0.7 \\
    $t_{\text{end}}$ & 0.7 & 0.7 & 0.7 & 0.7 \\
    noise sd in $u$ & 0.05 & 0.05 & 0.05 & 0.05 \\
    noise sd in $f$ & 0.05 & 0.05 & 0.05 & 0.05 \\
    \# of solution sensors & 4 & 4 & 4 & 4 \\
    \# of solution replicates & 10 & 10 & 10 & 10 \\
    \# of differential sensors & 40 & 40 & 40 & 40 \\
    \# of differential replicates & 10 & 10 & 10 & 10 \\
    $k$ & 0.7 & 0.7 & 0.7 & 0.7 \\
    \midrule
    epochs & 100000 & 100000 & 100000 & 200000 \\
    burn-in period & 0.5 & 0.5 & 0.4 & 0.1 \\
    hidden layers & $[50,50]$ & $[50,50]$ & $[50,50]$ & $[50,50]$ \\
    activation function & tanh & tanh & tanh & tanh \\
    learning rate & 3e-4 & 3e-4 & 1e-4 & poly\_5e-6\_100.0\_0.55 \\
    $\eta_{f}$ & 1.0 & 1.0 & / & 1.0 \\
    dropout rate & / & 0.5\%/1\%/5\% & / & / \\
    $L$ & / & / & 6 & / \\
    $\sigma_f$ & / & / & exp\_0.2\_0.05 & / \\
    $\sigma_u$ & / & / & 0.05 & / \\
    annealing period & / & / & 0.3 & 0.1 \\
    sgd momentum & / & / & / & linear\_0.9\_0.0 \\
    sgld learning rate & / & / & / & poly\_5e-6\_100.0\_0.55 \\
    $\lambda$ & / & / & / & exp\_50.0\_1000.0 \\
    $\eta_{\theta}$ & / & / & / & 1.0 \\ 
    encoder hidden layers & / & / & / & $[64,64,16]$ \\
    encoder activation function & / & / & / & leaky relu \\
    \bottomrule
  \end{tabular}
\end{table}

\begin{table}[!ht]
  \caption{Parameter settings for nonlinear 1D-Poisson with parameter estimation}
  \label{table:params_poisson_inverse}
  \vspace{0.1in}
  \centering
  \footnotesize
  \begin{tabular}{lcccc}
    \toprule
    Parameter Name  & PINN   & Dropout  & Bayesian PINN  & EFI \\
    \midrule
    $t_{\text{start}}$ & -0.7 & -0.7 & -0.7 & -0.7 \\
    $t_{\text{end}}$ & 0.7 & 0.7 & 0.7 & 0.7 \\
    noise sd in $u$ & 0.05 & 0.05 & 0.05 & 0.05 \\
    noise sd in $f$ & 0.0 & 0.0 & 0.0 & 0.0 \\
    \# of solution sensors & 8 & 8 & 8 & 8 \\
    \# of solution replicates & 10 & 10 & 10 & 10 \\
    \# of differential sensors & 200 & 200 & 200 & 200 \\
    \# of differential replicates & 1 & 1 & 1 & 1 \\
    $k$ & 0.7 & 0.7 & 0.7 & 0.7 \\
    \midrule
    epochs & 50000 & 500000 & 50000 & 350000 \\
    burn-in period & 0.5 & 0.5 & 0.4 & 0.1 \\
    hidden layers & $[50,50]$ & $[50,50]$ & $[50,50]$ & $[50,50]$ \\
    activation function & tanh & tanh & tanh & tanh \\
    learning rate & 3e-4 & 3e-4 & 1e-4 & poly\_5e-6\_100.0\_0.55 \\
    $\eta_{f}$ & 1.0 & 1.0 & / & 1.0 \\
    dropout rate & / & 0.5\%/1\%/5\% & / & / \\
    $L$ & / & / & 10 & / \\
    $\sigma_f$ & / & / & 0.05 & / \\
    $\sigma_u$ & / & / & 0.05 & / \\
    annealing period & / & / & / & 0.1 \\
    sgd momentum & / & / & / & linear\_0.9\_0.0 \\
    sgld learning rate & / & / & / & poly\_5e-6\_100.0\_0.55 \\
    $\lambda$ & / & / & / & linear\_50.0\_1000.0 \\
    $\eta_{\theta}$ & / & / & / & 1.0 \\ 
    encoder hidden layers & / & / & / & $[128,128,12]$ \\
    encoder activation function & / & / & / & leaky relu \\
    \bottomrule
  \end{tabular}
\end{table}

\begin{table}[!ht]
  \caption{Parameter settings for Black-Scholes Model}
  \label{table:params_euro_call}
  \vspace{0.1in}
  \centering
  \footnotesize
   \begin{adjustbox}{width=1.0\textwidth}
  \begin{tabular}{lcccc}
    \toprule
    Parameter Name  & PINN   & Dropout  & Bayesian PINN  & EFI \\
    \midrule
    $S$ range & $[0.0,2.0]$ & $[0.0,2.0]$ & $[0.0,2.0]$ & $[0.0,2.0]$ \\
    $t$ range & $[0.0,1.0]$ & $[0.0,1.0]$ & $[0.0,1.0]$ & $[0.0,1.0]$ \\
    $\sigma$ & 0.5 & 0.5 & 0.5 & 0.5 \\
    $r$ & 0.05 & 0.05 & 0.05 & 0.05 \\
    $K$ & 1.0 & 1.0 & 1.0 & 1.0 \\
    noise sd & 0.05 & 0.05 & 0.05 & 0.05 \\
    \# of price sensors & 5 & 5 & 5 & 5 \\
    \# of price replicates & 10 & 10 & 10 & 10 \\
    \# of boundary samples & 50 & 50 & 50 & 50 \\
    \# of differential samples & 800 & 800 & 800 & 800 \\
    \midrule
    epochs & 200000 & 200000 & 100000 & 300000 \\
    burn-in period & 0.5 & 0.5 & 0.4 & 0.1 \\
    hidden layers & $[50,50]$ & $[50,50]$ & $[50,50]$ & $[50,50]$ \\
    activation function & softplus($\beta=5$) & softplus($\beta=5$) & softplus($\beta=5$) & softplus($\beta=10$) \\
    learning rate & 3e-4 & 3e-4 & 1e-4/linear\_1e-4\_1e-5 & poly\_5e-6\_100.0\_0.55 \\
    $\eta_{f}$ & 1.0 & 1.0 & / & 1.0 \\
    dropout rate & / & 0.5\%/1\%/5\% & / & / \\
    $L$ & / & / & 6 & / \\
    $\sigma_f$ & / & / & 0.05/exp\_0.05\_0.005 & / \\
    $\sigma_u$ & / & / & 0.05 & / \\
    pretrain epochs & / & / & 5000 & / \\
    annealing period & / & / & 0.3 & 0.1 \\
    sgd momentum & / & / & / & linear\_0.9\_0.0 \\
    sgld learning rate & / & / & / & poly\_5e-6\_100.0\_0.55 \\
    sgld alpha & / & / & / & 1.0 \\
    $\lambda$ & / & / & / & linear\_50.0\_1000.0 \\
    $\eta_{\theta}$ & / & / & / & 1.0 \\ 
    encoder hidden layers & / & / & / & $[64,64,16]$ \\
    encoder activation function & / & / & / & leaky relu \\
    \bottomrule
  \end{tabular}
\end{adjustbox}
\end{table}

\begin{table}[!ht]
  \caption{Parameter settings for real data}
  \label{table:params_fkpp}
  \vspace{0.1in}
  \centering
  \footnotesize
   \begin{adjustbox}{width=1.0\textwidth}
  \begin{tabular}{lcccc}
    \toprule
    Parameter Name  & Montroll growth   & FKPP & P-FKPP \\
    \midrule
    epochs & 200000 & 200000 & 200000  \\
    burn-in period & 0.1 & 0.1 & 0.1  \\
    hidden layers & $[50,50]$ & $[50,50]$ & $[50,50]$  \\
    activation function & softplus($\beta=10$) & tanh & tanh  \\
    learning rate & poly\_1e-6\_10.0\_0.55 & poly\_1e-6\_10.0\_0.55 & poly\_1e-6\_10.0\_0.55\\
    $\eta_{f}$ & 1.0 & 1.0 & 1.0 \\
    sgd momentum & 0.9 & 0.9 & 0.9 \\
    sgld learning rate & poly\_1e-4\_10.0\_0.95 & poly\_1e-6\_10.0\_0.95 & poly\_1e-6\_10.0\_0.95\\
    sgld alpha & 1.0 & 1.0 & 1.0\\
    $\lambda$ & log\_50.0\_500.0 & log\_50.0\_500.0 & log\_50.0\_500.0 \\
    $\eta_{\theta}$ & 1.0  & 1.0  & 1.0 \\ 
    encoder hidden layers & $[32,32,16]$ & $[64,64,16]$ & $[64,64,16]$ \\
    encoder activation function  & leaky relu & leaky relu & leaky relu \\
    \bottomrule
  \end{tabular}
\end{adjustbox}
\end{table}





\end{document}